\documentclass{article} 
\usepackage{iclr2026_conference,times}

\usepackage{amsmath,amsfonts,bm}

\def\eqref#1{equation~\ref{#1}}

\def\1{\bm{1}}

\def\rvn{{\mathbf{n}}}

\def\rvx{{\mathbf{x}}}
\def\rvy{{\mathbf{y}}}
\def\rvz{{\mathbf{z}}}

\def\vzero{{\bm{0}}}
\def\vone{{\bm{1}}}
\def\vmu{{\bm{\mu}}}
\def\vtheta{{\bm{\theta}}}
\def\vphi{{\bm{\phi}}}

\def\vx{{\bm{x}}}
\def\vy{{\bm{y}}}
\def\vz{{\bm{z}}}

\def\mI{{\bm{I}}}

\DeclareMathAlphabet{\mathsfit}{\encodingdefault}{\sfdefault}{m}{sl}
\SetMathAlphabet{\mathsfit}{bold}{\encodingdefault}{\sfdefault}{bx}{n}

\def\sX{{\mathbb{X}}}

\newcommand{\E}{\mathbb{E}}

\newcommand{\Var}{\mathrm{Var}}

\usepackage{algorithm}
\usepackage{algorithmic}

\usepackage{amsmath}
\usepackage{amssymb}
\usepackage{mathtools}
\usepackage{amsthm}
\theoremstyle{plain}
\usepackage{thm-restate}
\declaretheorem[name=Theorem,numberwithin=section]{theorem}
\newtheorem{proposition}[theorem]{Proposition}

\theoremstyle{definition}

\newtheorem{assumption}[theorem]{Assumption}
\theoremstyle{remark}
\newtheorem{remark}[theorem]{Remark}

\usepackage{microtype}
\usepackage{subcaption}
\usepackage{booktabs}
\usepackage{multirow} 
\usepackage{xcolor}   
\usepackage{listings}
\usepackage{tabularx}
\usepackage{float} 
\usepackage{graphicx}
\usepackage{wrapfig} 
\usepackage{adjustbox}
\usepackage{enumitem} 

\usepackage{hyperref}       
\PassOptionsToPackage{unicode}{hyperref}
\PassOptionsToPackage{naturalnames}{hyperref}
\hypersetup{
	colorlinks,
	linkcolor={red!80!white},
	citecolor={green!60!black},
	urlcolor = purple!90!black
}
\usepackage{url}

\usepackage[capitalize,noabbrev]{cleveref}
\crefname{section}{Sec.}{Secs.}
\Crefname{section}{Section}{Sections}
\crefname{table}{Tab.}{Tabs.}
\Crefname{table}{Table}{Tables}
\crefname{figure}{Fig.}{Figs.}
\Crefname{figure}{Figure}{Figures}
\crefname{equation}{Eq.}{Eqs.}
\Crefname{equation}{Equation}{Equations}
\crefname{assumption}{Assumption}{Assumptions}
\Crefname{assumption}{Assumption}{Assumptions}

\title{
A Minimum Variance Path Principle \\ for Accurate and Stable Score-Based \\ Density Ratio Estimation
}

\author{Wei Chen\textsuperscript{1}, Jiacheng Li\textsuperscript{2}, Shigui Li\textsuperscript{1}, Zhiqi Lin\textsuperscript{3}, Junmei Yang\textsuperscript{2}, John Paisley\textsuperscript{4}, Delu Zeng\textsuperscript{2,5}\thanks{Corresponding author.} \\
\textsuperscript{1}School of Mathematics, South China University of Technology\\
\textsuperscript{2}School of Electronic and Information Engineering, South China University of Technology\\
\textsuperscript{3}School of Computer Science and Engineering, South China University of Technology\\
\textsuperscript{4}Department of Electrical Engineering, Columbia University \\
\textsuperscript{5}Department of Electrical and Computer Engineering, University of Waterloo\\
\texttt{\{maweichen,eejiacheng\_li,lishigui,202311089192\}@mail.scut.edu.cn}, \\
\texttt{\{yjunmei,dlzeng\}@scut.edu.cn},  
\texttt{jpaisley@columbia.edu}
\\
}

\iclrfinalcopy 
\begin{document}

\maketitle

\begin{abstract} 
    Score-based methods are powerful  across machine learning, but they face a paradox: theoretically path-independent, yet practically path-dependent.
	We resolve this by proving that practical training objectives differ from the ideal, ground-truth objective by a crucial, overlooked term: the path variance of the score function.
	We propose the MVP (\textbf{M}imum \textbf{V}ariance \textbf{P}ath) Principle to minimize this path variance. 
	Our key contribution is deriving a closed-form expression for the variance, making optimization tractable. 
	By parameterizing the path with a flexible Kumaraswamy Mixture Model, our method learns data-adaptive, low-variance paths without heuristic manual selection. 
	This principled optimization of the complete objective yields more accurate and stable estimators, establishing new state-of-the-art results on challenging benchmarks and providing a general framework for optimizing score-based interpolation.
    Our code can be found in \href{https://github.com/Hoemr/OpenDRE.git}{code for MVP}.
\end{abstract}

\section{Introduction}
\label{sec:introduction}
\vspace{-3mm}

Density Ratio Estimation (DRE) is a fundamental machine learning task, underpinning applications such as $f$-divergence estimation \citep{chen2025dequantified}, Riesz regression \citep{kato2025scorematchingriesz,kato2025semisupervised,kato2026riesz}, aligning large language models \citep{higuchi2025direct}, causal inference \citep{wang2025projection}, domain adaptation \citep{wang2023learning}, and likelihood-free inference \citep{cobb2024direct}. Classical methods often fail under the ``density-chasm'' problem where two densities exhibit low overlap or large discrepancy \citep{rhodes2020telescoping}. A key advance is the rise of continuous, score-based methods \citep{choi2022density, yu2025density, chen2025dequantified}, which express the log-density ratio as a path integral of a time-dependent score function along a smooth interpolation between two densities.

Theoretically, these methods are path-invariant: integrating the score along any smooth path recovers the exact target \citep{choi2022density}. However, a practical paradox emerges in practice: with neural network approximation, performance becomes significantly path-dependent, as shown in our preliminary experiments (see \cref{fig:checkerboard_path_compare}).
This sensitivity mirrors challenges in broader probabilistic inference and generative modeling \citep{sahoo2024diffusion, strasman2024analysis}.

We trace this paradox to a subtle but crucial gap between the ideal objective, which minimizes the true mean-squared error to the ground-truth score, and the tractable score-matching loss used in practice. The two objectives differ by a path-dependent term that is typically ignored as a constant when the path is fixed \citep{choi2022density}. Our analysis reveals that this term is, in fact, the dominant factor driving performance differences across paths, akin to the impact of noise schedules in generative modeling \citep{kingma2021variational, guo2025variational, strasman2024analysis}.

\begin{figure}[ht]
    \centering
\begin{subfigure}{0.38\linewidth}
    \centering
    \includegraphics[width=\linewidth]{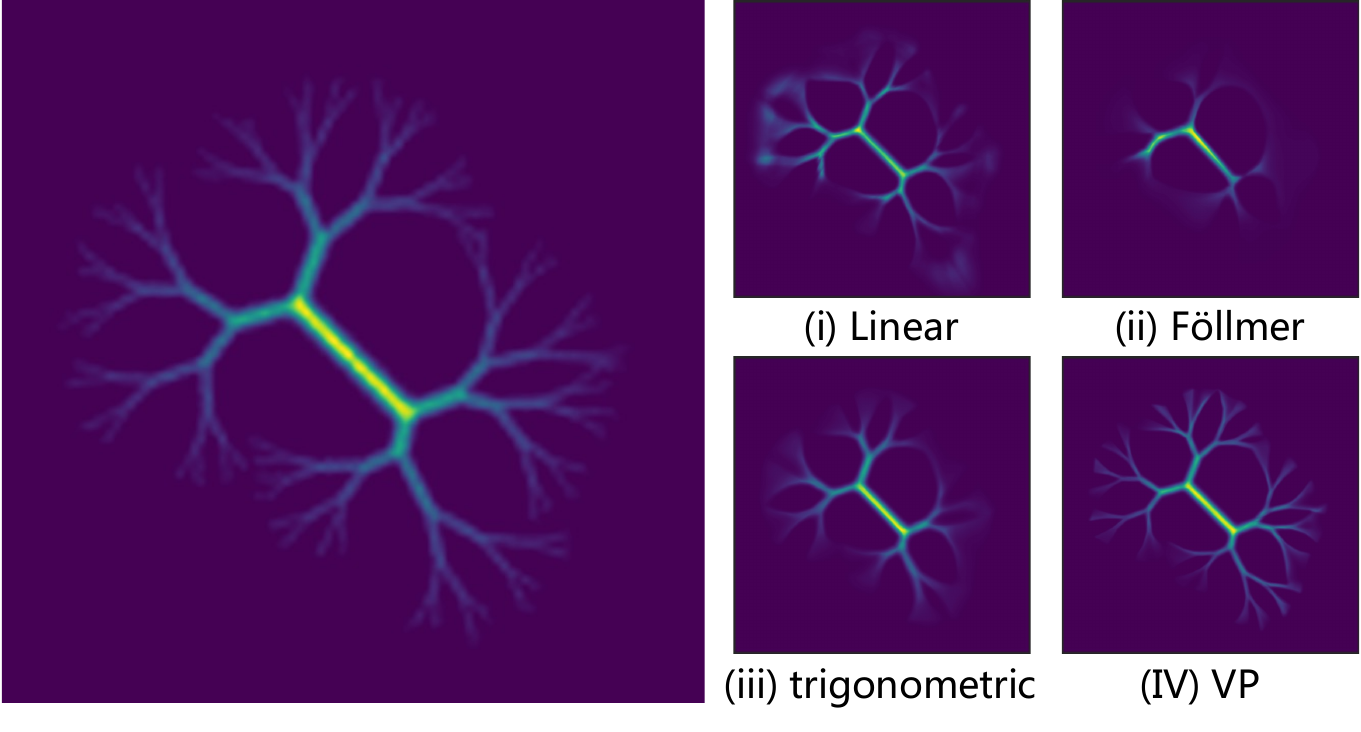}
    \caption{Estimates vary across path settings.}
    \label{fig:checkerboard_path_compare}
\end{subfigure}
\begin{subfigure}{0.3\linewidth}
    \centering
    \includegraphics[width=\linewidth]{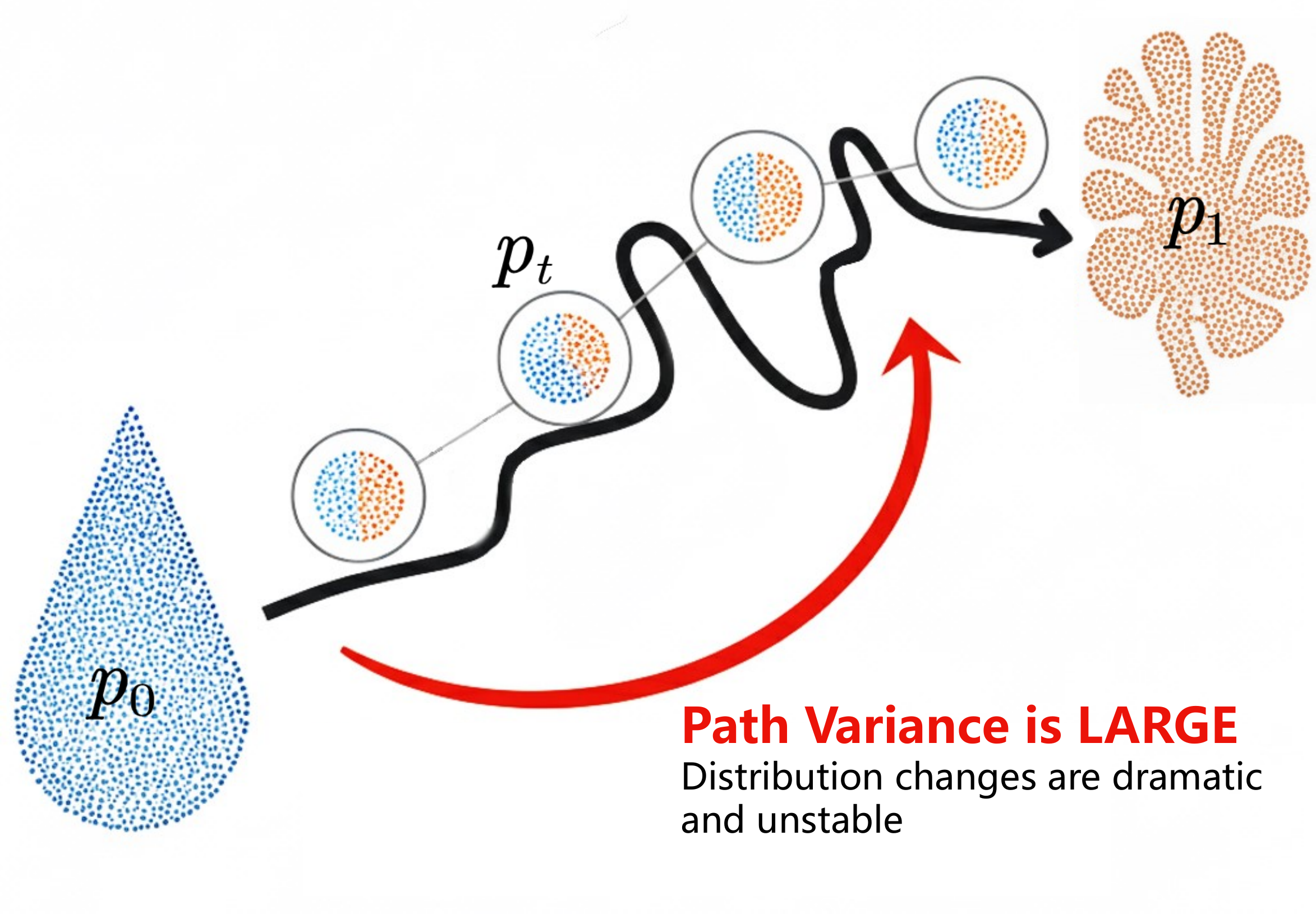}
    \caption{Path with large variance.}
    \label{fig:high_variance}
\end{subfigure}
\begin{subfigure}{0.3\linewidth}
    \centering
    \includegraphics[width=\linewidth]{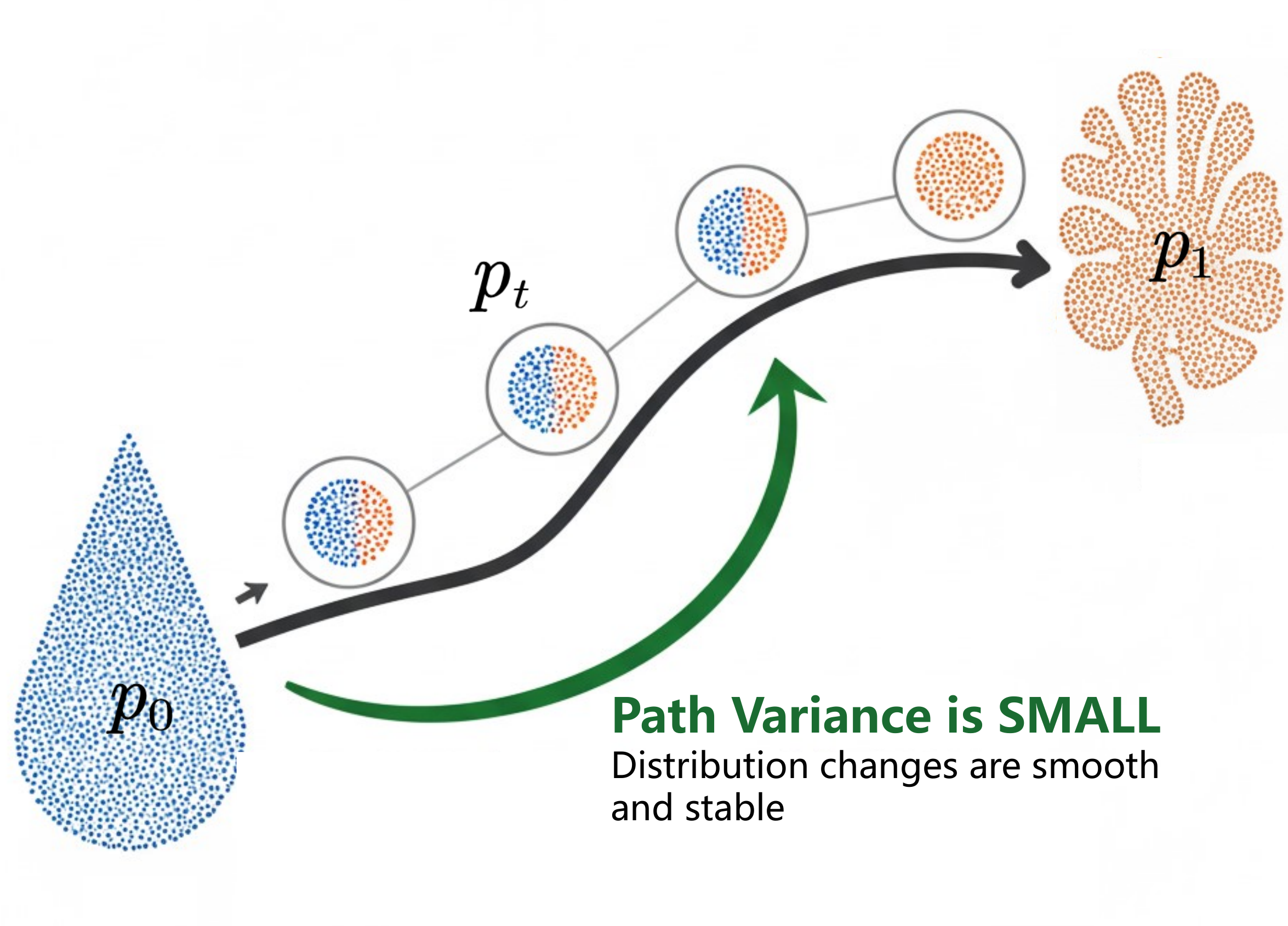}
    \caption{Path with small variance.}
    \label{fig:low_variance}
\end{subfigure}
\vspace{-2mm}
\caption{(a) Preliminary experiments under various path settings. The left image denotes the ground truth. (b-c) Probability paths between $p_0$ and $p_1$ with large and small variance, respectively.}
\label{fig:illustration-of-different-path-settings}
\vspace{-6mm}
\end{figure} 

Our main contribution is to close this gap. We prove that the missing term is precisely the \textbf{\textit{path variance}} of the ground-truth score, a quantity that must be minimized and cannot be treated as a constant (illustrated in \cref{fig:high_variance,fig:low_variance} and proved in \cref{thm:principled-error-bound}). Building on this insight, we propose MVP (\textbf{M}imum \textbf{V}ariance \textbf{P}ath) Principle, the first framework that directly optimize the complete, ideal objective. Our contributions are threefold:
\begin{enumerate}[leftmargin=*, nosep]
	\item We formally identify path variance as the missing term bridging practical and ideal objectives, and derive closed-form, tractable expressions for it under two widely used interpolants.
	
	\item We parameterize the path with a flexible Kumaraswamy Mixture Model (KMM) and learn an optimal, data-adaptive path by directly minimizing the analytical path variance.
	
	\item Our approach resolves the empirical paradox described above and removes the need for heuristic path selection, achieving state-of-the-art performance on challenging benchmarks.
\end{enumerate}

\vspace{-4mm}
\section{Related Works}
\label{sec:related_works}

\vspace{-3mm}

\paragraph{Score-Based Density Ratio Estimation.}
DRE is a fundamental statistical problem with a rich history \citep{sugiyama2012density}. Classical methods, such as those based on statistical divergences, like KLIEP \citep{sugiyama2008direct} or noise-contrastive principles \citep{gutmann2010noise,gutmann2012noise}, often falter when the supports of the two distributions have minimal overlap, which is a challenge known as the density-chasm problem \citep{liu2017trimmed}. 
A recent paradigm shift addresses this via divide-and-conquer strategies that bridge $p_0$ and $p_1$. Starting with discrete methods like TRE~\citep{rhodes2020telescoping}, which decomposes the density ratio into a product of simpler intermediate ratios. 
It was later extended to the continuous setting in DRE-$\infty$ \citep{choi2022density}, linking DRE to score-based generative modeling through a time-dependent score along a smooth path. 
Later work improved this framework, for example, via the dequantified diffusion bridge interpolant (DDBI) for better stability and weaker assumptions~\citep{chen2025dequantified}. 
Closely related is the work on conditional probability paths \citep{yu2025density}, providing theoretical guarantees for DRE by bounding the estimation error in terms of path smoothness, which is closely related to our \cref{lemma:upper-bound-of-the-estimation-error}.
Other notable advances include improving robustness to outliers \citep{nagumo2024density}, mitigating  overfitting \citep{kato2021non}, mapping distributions into simpler latent spaces \citep{choi2021featurized,wang2025projection}, building direct DRE via low-variance secant alignment identity \citep{chen2025diffusion} or Riesz regression \citep{kato2025riesz}, and implicitly defining a probability path through endpoint densities \citep{kimura2024density,yu2025density}. However, nearly all these score-based methods rely on fixed, hand-crafted interpolation paths. The crucial question of how to design an optimal, data-dependent path tailored to a specific DRE task remains largely unexplored.

\vspace{-3mm}
\paragraph{The Importance of Paths in Probabilistic Inference and Generative Modeling.}
A parallel evolution in the broader field of probabilistic inference \citep{paisley2009hidden,paisley2012stick,xu2024sparset} and generative modeling \citep{shen2025information,chen2025entropy,xu2025bayesian} highlights the path (a.k.a. noise) schedule as a critical design choice for capturing complex data structures~\citep{li2023scire,li2025evodiff,lin2025reciprocalla}. 
In conformal prediction, nested prediction sets are often built upon paths that guarantee calibrated uncertainty under distribution shift~\citep{luo2026game,luo2026enhancing,bao2025review,bao2025enhancing}. 
In time-series forecasting, temporal paths and noise schedules directly govern how uncertainty propagates over multi-step horizons and how long-range dependencies are captured \citep{li2024neural,li2025diffinformer,li2025integrating}.
Furthermore, Gaussian processes inherently model functions as stochastic paths \citep{xu2025fully,xu2024sparse,xu2025variational}, with modern variants increasingly learning adaptive kernels and path densities to handle non-stationarity and high-dimensional manifolds.

Despite these advances, early score-based generative models like DDPM \citep{ho2020denoising} employed fixed heuristics such as the variance-preserving (VP) schedule \citep{song2020score} or the cosine schedule \citep{nichol2021improved}. It quickly became evident that these choices were non-trivial, with theoretical analyses showing how schedules directly impact model stability and performance \citep{song2021maximum,sahoo2024diffusion,strasman2024analysis}.
This realization spurred a shift towards learnable, data-dependent paths, a trend prominent across the field, from variational diffusion models that learn schedules by maximizing a variational bound \citep{kingma2021variational,guo2025variational}, to continuous normalizing flows that optimize the trajectory of a governing ODE \citep{zhang2024diffusion}. 
A highly relevant advancement is neural diffusion models \citep{bartosh2024neural}, which generalize the forward process by enabling learnable, non-linear, efficient \citep{li2024tighter}, and time-dependent transformations of the data. While powerful for generative tasks, this line of work primarily optimizes the variational bound \citep{hoffman2013stochastic,paisley2012variational} for density estimation.
Furthermore, the problem of finding an optimal path to reduce the estimation error of density ratios has been analyzed in the context of annealing, demonstrating that even basic paths offer provable benefits over no annealing \citep{chehab2023provable}.
A complementary perspective from optimal transport (OT) has been used to find geometrically optimal paths, as seen in rectified flow \citep{liu2022rectified} and other OT-inspired models \citep{du2022flow,shaul2023kinetic,albergo2023stochastic}. This body of work establishes a clear principle: for optimal performance in score-based models, the path should not be fixed but adapted to the data. While generative modeling has adopted this principle, DRE still has not.

\vspace{-3mm}
\paragraph{Bridging the Gap: Our Contribution.}
We propose the first DRE framework that goes beyond fixed or heuristic paths by explicitly learning an optimal path through a theoretically grounded objective. Our analysis shows that this objective (the path variance) is the missing term in the ideal score-matching loss that prior DRE methods have overlooked. By parameterizing the path with a flexible KMM and optimizing it with its path variance, we convert the intractable functional search into a tractable, low-dimensional optimization. Our method, MVP, thus adapts the path schedule to the data distributions, achieving more accurate density ratio estimation.

\vspace{-4mm}
\section{Background}
\vspace{-3mm}

Let $\rvx_0 \sim p_0$ and $\rvx_1 \sim p_1$ be random variables in $\mathbb{R}^d$ with $p_1\ll p_0$. The goal of density ratio estimation (DRE) is to estimate $r(\vx) = \frac{p_1(\vx)}{p_0(\vx)}$ for a given sample $\vx$ using i.i.d. samples from $p_0$ and $p_1$. Conventional DRE methods often deteriorate when $p_0$ and $p_1$ are non-overlapping or significantly different \citep{gutmann2012noise,liu2017trimmed}. 

\vspace{-2mm}
\paragraph{Interpolants.} 
To address this problem, \citet{rhodes2020telescoping} proposed TRE, factorizing the direct ratio $r$ into $M$ ratios via intermediate variables $\rvx_{m/M}\sim p_{m/M}$  defined by
\vspace{-1mm}
\begin{equation} \label{eq:interpolant-TRE}
    \rvx_{m/M} = \sqrt{1-\beta(m/M)^{2}}\rvx_0 + \beta(m/M)\rvx_1,  \quad m=0,1,\ldots,M,
\end{equation}
where $\beta:[0,1]\to[0,1]$ is a monotone increasing schedule with $\beta(0)=0,\beta(1)=1$. 

As $M\to \infty$, this discrete scheme converges to the deterministic interpolant (DI)   \citep{choi2022density}
\begin{equation} \label{eq:deterministic-interpolant}
	\rvx_t = \alpha(t) \rvx_0 + \beta(t) \rvx_1, \quad \rvx_0 \sim p_0, \rvx_1 \sim p_1, \rvx_t \sim p_t, t\in[0,1],
\end{equation}
where $\{(\alpha(t), \beta(t))\}_{t\in[0,1]}$ is a path schedule satisfying the boundary conditions $\alpha(0) = \beta(1) = 1, \alpha(1) = \beta(0) = 0$. Common choices satisfy $\alpha(t) + \beta(t) = 1$  or $\alpha(t)^2 + \beta(t)^2 = 1$.
Conditioning on $\rvx_1=\vx_1$ and assuming $\rvx_0$ follows a standard Gaussian, the transition kernel admits a closed form $p_t(\vx_t\mid \vx_1)=\mathcal{N}(\vx_t;\beta(t) \vx_1, \alpha(t)^2 \mI_d)$ \citep{vincent2011connection,yu2025density}.

While the the Gaussian assumption on $\rvx_0\sim p_0(\vx)$ is convenient, it may not hold for more general applications such as mutual information estimation. To relax this requirement and improve the stability of DRE-$\infty$, \citet{chen2025dequantified} extended DI to the dequantified diffusion bridge interpolant (DDBI) by adding a noise term to the interpolation, creating a bridge process of the form
\vspace{-1mm}
\begin{equation} \label{eq:diffusion-bridge-interpolant}
	\rvx_t = \alpha(t) \rvx_0 + \beta(t) \rvx_1 + \sqrt{t(1-t)\gamma^2 + (\alpha(t)^2 + \beta(t)^2)\varepsilon}\rvz,
\end{equation}
where $\rvz\sim \mathcal{N}(\vzero, \mI_d)$ is a Gaussian noise, $\gamma \ge 0$ is the noise factor, and $\varepsilon \ge 0$ is a small constant.
In this case, the closed-form transition kernel conditioned on $\vx_0$ and $\vx_1$ becomes $p_t(\vx_t \mid \vx_0, \vx_1) = \mathcal{N}\left(\vx_t; \alpha(t) \vx_0 + \beta(t) \vx_1, (t(1-t)\gamma^2 + (\alpha(t)^2 + \beta(t)^2)\epsilon)\mI_d\right)$ \citep{yu2025density,chen2025dequantified}.

\vspace{-2mm}
\paragraph{Density Ratio Estimation.} 
For the discrete path in \cref{eq:interpolant-TRE}, the log-density ratio for a given sample $\vx$ is expressed as a telescoping summation of intermediate log-density ratios, 
\vspace{-1mm}
\begin{equation}\label{eq:discrete-density-ratio}
	\log r(\vx) = \log p_1(\vx) - \log p_0(\vx) = \sum_{m=0}^{M-1} \log \frac{p_{(m+1)/M}(\vx)}{p_{m/M}(\vx)}, 
\end{equation}
which is a Riemann sum approximation of the continuous representation \citep{choi2022density},
\vspace{-1mm}
\begin{equation}  \label{eq:continuous-log-density-ratio}
	\log r(\vx) = \log p_1(\vx) - \log p_0(\vx) = \int_0^1 \partial_t \log p_t(\vx)\mathrm{d}t\triangleq \int_0^1 s^{(t)}(\vx, t)\mathrm{d}t.
\end{equation}
Here, $s^{(t)}(\vx, t) \triangleq \partial_t \log p_t(\vx)$ is the \emph{time} score, and $s^{(\vx)}(\vx, t) \triangleq \partial_{\vx} \log p_t(\vx)$ is the \emph{data} score. Note that $(t)$ and $(\vx)$ are labels indicating ``time" and ``data", not derivatives.

To approximate $s^{(t)}$, we train a time score model $s_\vtheta^{(t)}$ by minimizing the ideal but intractable  time score matching objective, $\mathcal{L}_{\text{TSM}}(\vtheta) = \mathbb{E}_{p(t)p_t(\vx)} \left| s^{(t)}(\vx,t) - s_{\vtheta}^{(t)}(\vx,t) \right|^2$.
Since $s^{(t)}$ is unknown, this loss is replaced by a tractable alternative derived via integration by parts, known as the sliced time score matching (STSM) \footnote{We adopt the term ``sliced" because the objective, derived through integration by parts, involves projecting onto random directions, a technique known as Sliced Score Matching \citep{song2020sliced} in the generative modeling literature. \citet{choi2022density} simply call this objective Time Score Matching.} \citep{choi2022density} objective,
\begin{equation} \label{eq:tractable-time-score-matching}
	\mathcal{L}_{\text{STSM}}(\vtheta)\! = \! 2\E_{p_0(\vx_0)p_1(\vx_1)}\left[s_{\vtheta}^{(t)}(\vx_0,0) \! - \! s_{\vtheta}^{(t)}(\vx_1,1) \right] \! + \! \E_{p(t)p_t(\vx)}\left[2\partial_t s_{\vtheta}^{(t)}(\vx,t) \! + \! s_{\vtheta}^{(t)}(\vx,t)^2\right],
\end{equation}
where the first term enforces boundary conditions, $\partial_t s_{\vtheta}^{(t)}$ is the time derivative, and $p(t)$ is a timestep distribution over $(0,1)$ (see Appendix \cref{app:variance-based-time-sampler} for details).
This objective is fully computable, depending only on $s_\vtheta^{(t)}$ and its derivative, and differs from $\mathcal{L}_{\text{TSM}}$ by a path-dependent constant.
The log-density ratio is estimated by $\log \hat r(\vx) = \int_0^1 s_{\vtheta}^{(t)}(\vx, t) \mathrm{d}t$.

\vspace{-3mm}
\section{Minimum Path Variance Principle}
\vspace{-2mm}

Our proposed method, MVP, is designed to resolve an important discrepancy between the theory and practice of score-based DRE. In this section, we detail the core components of our method: the derivation of the analytical path variance and our flexible KMM-based path parameterization.

\vspace{-3mm}
\subsection{The Path Variance: An Overlooked Term in the Objective}
\label{sec:variance_analysis}
\vspace{-2mm}

Our analysis begins by relating the expected density ratio estimation error to the score-matching objective. We consider the estimation error, 
\begin{equation}
	\Delta(\vx) = | \log r(\vx) - \log \hat{r}(\vx) |^2,  \quad \vx \sim p_1,
\end{equation}
where the true and estimated log-density ratios are given by $\log r(\vx)=\int_0^1 s^{(t)} (\vx, t)\mathrm{d}t$ and $\log\hat{r}(\vx)=\int_0^1 s_{\vtheta}^{(t)} (\vx, t)\mathrm{d}t$. The lemma below provides an explicit upper bound for this error.
\vspace{-2mm}
\begin{restatable}{lemma}{UpperBoundOfTheEstimationError}  \label{lemma:upper-bound-of-the-estimation-error}
	Let $\{p_t\}_{t\in[0,1]}$ be a probability path connecting $p_0$ and $p_1$. Under \cref{assumption:Lipschitz-continuous-logpt}, the expected estimation error admits the following upper bound:
	\begin{equation}
		\E_{p_{1}(\vx)} \left[\Delta(\vx) \right] \leq \left( \frac{e^{nL} - 1}{nL} \right)^{\frac{1}{n}} \left(\int_0^1 \left(\mathbb{E}_{p_t(\vx)} \left[\epsilon(\vx, t)^2  \right]\right)^{m} \mathrm{d}t \right)^{\frac{1}{m}},
	\end{equation}
	where $\epsilon(\vx, t)=s^{(t)}(\vx,t) - s_{\vtheta}^{(t)}(\vx,t)$ is the error of the score model, the exponents satisfy $\tfrac{1}{n} + \tfrac{1}{m} = 1$, and $L$ is the Lipschitz constant in \cref{assumption:Lipschitz-continuous-logpt}.
\end{restatable}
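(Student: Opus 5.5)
The plan is to combine three elementary inequalities: Jensen in time, a change of measure from $p_1$ to $p_t$, and Hölder in time. Throughout, I read \cref{assumption:Lipschitz-continuous-logpt} as saying that $t\mapsto \log p_t(\vx)$ is $L$-Lipschitz uniformly in $\vx$. Concretely, I use it in the form $|\log p_1(\vx)-\log p_t(\vx)|\le L(1-t)$, which gives the density bound $p_1(\vx)\le e^{L(1-t)}p_t(\vx)$ for all $\vx$ and $t\in[0,1]$. If the assumption is stated differently (for example as a bound on $|s^{(t)}|$), I would first derive this pointwise ratio bound from it by integrating in $t$.

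\emph{Step 1 (Jensen in time).} By definition,
\[
\log r(\vx)-\log\hat r(\vx)=\int_0^1\epsilon(\vx,t)\,\mathrm{d}t .
\]
Since $[0,1]$ has unit length, Cauchy--Schwarz (equivalently Jensen) gives
\[
\Delta(\vx)\le\int_0^1\epsilon(\vx,t)^2\,\mathrm{d}t .
\]
Taking $\E_{p_1}$ and using Tonelli to swap the expectation and the time integral yields
\[
\E_{p_1}[\Delta]\le\int_0^1\E_{p_1(\vx)}[\epsilon(\vx,t)^2]\,\mathrm{d}t .
\]

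\emph{Step 2 (change of measure).} For each fixed $t$, write
\[
\E_{p_1}[\epsilon^2]=\E_{p_t}\!\left[\tfrac{p_1(\vx)}{p_t(\vx)}\,\epsilon(\vx,t)^2\right].
\]
The density bound above then gives
\[
\E_{p_1}[\epsilon^2]\le e^{L(1-t)}\,\E_{p_t}[\epsilon(\vx,t)^2].
\]
This is the step where the assumption enters, and it is the main point to verify. One needs $p_t>0$ wherever $p_1>0$, which the log-Lipschitz condition implicitly guarantees. One also needs the bound to be uniform in $\vx$, so that it can be pulled out of the expectation.

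\emph{Step 3 (Hölder in time).} Apply Hölder with conjugate exponents $\tfrac1n+\tfrac1m=1$ to the product of $e^{L(1-t)}$ and $\E_{p_t}[\epsilon^2]$:
\[
\int_0^1 e^{L(1-t)}\E_{p_t}[\epsilon^2]\,\mathrm{d}t\le\Big(\int_0^1 e^{nL(1-t)}\,\mathrm{d}t\Big)^{1/n}\Big(\int_0^1\big(\E_{p_t}[\epsilon^2]\big)^m\mathrm{d}t\Big)^{1/m}.
\]
The first factor evaluates exactly to $\big(\tfrac{e^{nL}-1}{nL}\big)^{1/n}$. Chaining Steps 1--3 then gives the claimed bound.

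No step involves a real computation. The only potential obstacle is matching the exact form of \cref{assumption:Lipschitz-continuous-logpt} to the uniform ratio bound $p_1/p_t\le e^{L(1-t)}$. Everything else is standard: the limiting cases $n=1$ and $n=\infty$ follow by the usual conventions for Hölder's inequality.
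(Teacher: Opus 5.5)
Your proposal is correct and follows the paper's proof essentially step for step. It uses Cauchy--Schwarz in time with Tonelli, then the change of measure $p_1/p_t=\exp\bigl(\int_t^1\partial_s\log p_s\,\mathrm{d}s\bigr)\le e^{L(1-t)}$ from the Lipschitz assumption, then H\"older with exponents $(n,m)$, where $\int_0^1 e^{nL(1-t)}\,\mathrm{d}t=(e^{nL}-1)/(nL)$.
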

\vspace{-2mm}
\paragraph{Relationship to Existing Error Bounds.}
Our derivation in \cref{lemma:upper-bound-of-the-estimation-error} is conceptually aligned with prior theoretical analyses of path-based density estimation. For example, \citet{yu2025density} also obtain an error bound by controlling the path integral via smoothness assumptions (e.g., Lipschitz constants), highlighting the central role of path regularity. However, important differences remain. Their bound is expressed in terms of statistical divergence between the true and estimated density ratios, whereas our \cref{lemma:upper-bound-of-the-estimation-error} provides a direct upper bound on the expected squared error of the log-ratio estimate. Moreover, the goals diverge: \citet{yu2025density} leverage their bound to motivate the $\mathcal{L}_{\text{CTSM}}$ loss, while our analysis uses the general bound to decompose the total theoretical error and justify explicitly optimizing the previously unaddressed path variance term $\mathcal{V}$.

The proof is presented in \cref{proof:upper-bound-of-the-estimation-error}. The case $(n=\infty,m=1)$ is particularly insightful, as it simplifies the bound to be proportional to the integrated squared error, $\int_0^1 \mathbb{E}_{p_t(\vx)}[\epsilon(\vx,t)^2] \mathrm{d}t=\mathbb{E}_{p(t)p_t(\vx)} \left| \epsilon(\vx, t) \right|^2$, which is exactly the score-matching loss, $\mathcal{L}_{\text{TSM}}(\vtheta)$. This reveals that minimizing the TSM loss, $\mathcal{L}_{\text{TSM}}(\vtheta)$, is essential for achieving minimum estimation error.

However, as established in \citet{choi2022density}, one cannot directly optimize $\mathcal{L}_{\text{TSM}}$. Instead, a tractable objective like $\mathcal{L}_{\text{STSM}}$ is used. The connection between them is an exact algebraic identity:
\vspace{-2mm}
\begin{equation}\label{eq:ideal_loss_identity_main}
    \mathcal{L}_{\text{TSM}}(\vtheta) = \mathcal{L}_{\text{STSM}}(\vtheta) + \int_0^1 \E_{p_t(\vx)} \left| \partial_t \log p_t(\vx) \right|^2 \mathrm{d}t.
\end{equation}
This identity reveals a crucial gap: Minimizing the tractable loss $\mathcal{L}_{\text{STSM}}$ does not account for the second term, $\int_0^1 \E_{p_t(\vx)} | \partial_t \log p_t(\vx) |^2 \mathrm{d}t$. The main contribution of our work is to identify this overlooked term and propose a method to minimize it. We first present the following theorem.

\begin{restatable}[Minimum Path Variance Principle]{theorem}{MinimumPathVariancePrinciple}
	\label{thm:principled-error-bound}
	In the loss decomposition (\cref{eq:ideal_loss_identity_main}), the second-moment term is exactly the path variance $\mathcal{V} \triangleq \int_0^1 \Var_{p_t(\vx)} ( \partial_t \log p_t(\vx) ) \mathrm{d}t$ of a probability path $\{p_t\}_{t\in[0,1]}$. Consequently, the expected estimation error of a model $s_{\vtheta}^{(t)}$ is bounded by
	\begin{equation}\label{eq:min-path-variance}
		\mathbb{E}_{p_{1}(\vx)}[\Delta(\vx)] \leq e^L \left[ \mathcal{L}_{\textnormal{STSM}}(\vtheta) + \int_0^1 \Var_{p_t(\vx)} ( \partial_t \log p_t(\vx) ) \mathrm{d}t \right].
	\end{equation}
		This establishes the \emph{\textbf{M}inimum \textbf{V}ariance \textbf{P}ath (\textbf{MVP}) principle}: Minimizing the overall error bound requires jointly minimizing the tractable model loss $\mathcal{L}_{\textnormal{STSM}}$ and the path variance $\mathcal{V}$.
\end{restatable}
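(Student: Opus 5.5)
The proof has three parts. First we show that the time score has zero mean at every $t$. Then we identify the second-moment term in \cref{eq:ideal_loss_identity_main} with the path variance. Finally we specialize \cref{lemma:upper-bound-of-the-estimation-error} to the endpoint case $(n=\infty, m=1)$.

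\emph{Step 1: the time score has zero mean.} For each fixed $t\in(0,1)$ we have
\[
\E_{p_t(\vx)}[\partial_t \log p_t(\vx)] = \int p_t(\vx)\,\frac{\partial_t p_t(\vx)}{p_t(\vx)}\,\mathrm{d}\vx = \int \partial_t p_t(\vx)\,\mathrm{d}\vx = \frac{\mathrm{d}}{\mathrm{d}t}\int p_t(\vx)\,\mathrm{d}\vx = \frac{\mathrm{d}}{\mathrm{d}t}1 = 0 .
\]
The only technical point is exchanging differentiation and integration. I would justify it by dominated convergence, using the regularity of $p_t$ guaranteed by \cref{assumption:Lipschitz-continuous-logpt} together with the Gaussian smoothing present in both the DI and DDBI constructions, which gives a locally uniform integrable bound on $|\partial_t p_t|$. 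This is the step needing the most care, since it is where assumptions on the path enter. For smooth paths of the kind considered here it is standard.

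\emph{Step 2: the second moment equals the variance.} By Step 1,
\[
\E_{p_t}|\partial_t\log p_t|^2 = \Var_{p_t}(\partial_t\log p_t) + \big(\E_{p_t}[\partial_t\log p_t]\big)^2 = \Var_{p_t}(\partial_t\log p_t).
\]
Integrating over $t\in[0,1]$, the second term in \cref{eq:ideal_loss_identity_main} is exactly $\mathcal{V}$. This proves the first claim of the theorem and gives $\mathcal{L}_{\text{TSM}}=\mathcal{L}_{\text{STSM}}+\mathcal{V}$.

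\emph{Step 3: the error bound.} Take $m\to1$ and $n\to\infty$ in \cref{lemma:upper-bound-of-the-estimation-error}. For the constant, write
\[
\left(\frac{e^{nL}-1}{nL}\right)^{1/n} = e^{L}\,\left(\frac{1-e^{-nL}}{nL}\right)^{1/n}.
\]
As $n\to\infty$ the base $(1-e^{-nL})/(nL)$ behaves like $1/(nL)$, and $(nL)^{-1/n}\to 1$, so the constant tends to $e^L$. Moreover the second factor of the product is bounded by $1$ for large $n$, so we may bound the constant by $e^L$ directly (in the limiting sense). Alternatively, one can rerun the lemma's Hölder step with $m=1$: bound $|\int_0^1\epsilon\,\mathrm{d}t|^2$ by $\int_0^1\epsilon^2\,\mathrm{d}t$ (Cauchy–Schwarz on $[0,1]$), and control the change of measure from $p_1$ to $p_t$ by $\sup_t p_1/p_t\le e^{L}$ via the Lipschitz-in-$t$ assumption on $\log p_t$.

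The second factor of the lemma becomes $\int_0^1\E_{p_t}[\epsilon^2]\,\mathrm{d}t=\mathcal{L}_{\text{TSM}}(\vtheta)$, taking $p(t)$ uniform as in the paper's remark following the lemma. Substituting Step 2 yields \cref{eq:min-path-variance}.

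The MVP principle then follows by reading the bound: of its two terms, $\mathcal{L}_{\text{STSM}}$ depends on the model and on the path, while $\mathcal{V}$ depends only on the path. Minimizing the bound therefore requires minimizing both jointly rather than treating $\mathcal{V}$ as a constant.
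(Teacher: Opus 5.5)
Your proposal is correct and follows essentially the same route as the paper: the paper likewise uses $\E_{p_t}[\partial_t\log p_t]=\partial_t\int p_t\,\mathrm{d}\vx=0$ to identify the second-moment term in \cref{eq:ideal_loss_identity_main} with $\Var_{p_t}(\partial_t\log p_t)$, and then substitutes into the $(n=\infty,m=1)$ case of \cref{lemma:upper-bound-of-the-estimation-error} with uniform time weighting. Of your two treatments of the endpoint constant, keep the direct one (bounding $p_1/p_t\le e^{L(1-t)}\le e^{L}$, i.e.\ H\"older with the sup norm). The $n\to\infty$ limit would additionally need $\|\E_{p_t}[\epsilon^2]\|_{L^m(\mathrm{d}t)}\to\|\E_{p_t}[\epsilon^2]\|_{L^1(\mathrm{d}t)}$ as $m\downarrow 1$, which the lemma does not supply.
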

See Appendix \cref{appendix:principled-error-bound} for proof. This theorem reveals that path variance is not an auxiliary regularizer but the \emph{overlooked component} of the ideal objective. Conventional approaches fix the path \textit{a priori}, treating $\mathcal{V}$ as constant and ignoring its effect. Our key insight is that, by learning the path itself, we can explicitly minimize this term, not as a guarantee of a tighter bound, but as a heuristic that empirically improves estimation accuracy and stability.  

\vspace{-2mm}
\paragraph{Remark on the Lipschitz Constant $L$ in \cref{thm:principled-error-bound}.} Our principle of minimizing the path variance $\mathcal{V}$ also serves to indirectly control the Lipschitz constant $L$ that appears as a multiplicative factor in the error bound. We discuss the empirical nature of this relationship in \cref{rem:control_of_L}.

\vspace{-2mm}
\paragraph{Remark on Stabilizing  $\mathcal{L}_{\text{STSM}}$ in \cref{thm:principled-error-bound}.}
The score matching loss $\mathcal{L}_{\text{STSM}}$ is optimized with respect to $\vtheta$ at every training step, but its empirical estimate depends on the current path ${p_t}_{t\in[0,1]}$ through samples $\vx_t \sim p_t(\vx)$. To reduce the sensitivity of this estimate to path-induced score instability, we adopt an alternating scheme: after each update of the path parameters (see \cref{sec:kmm_parameterization}) by minimizing the path variance $\mathcal{V}$, we refresh the variance-based time sampler $p(t) \propto 1/(\Var_{p_t}(\partial_t \log p_t) + \varepsilon)$ (see \cref{app:variance-based-time-sampler}). This shifts sampling toward time steps with low score variance, reducing gradient noise and improving the reliability of the stochastic estimator of $\mathcal{L}_{\text{STSM}}$. Although this procedure does not minimize $\mathcal{L}_{\text{STSM}}$ with respect to the path, it stabilizes its optimization by controlling its variance dependence on ${p_t}$. Thus, minimizing $\mathcal{V}$ indirectly facilitates a tighter overall error bound by ensuring stable and effective optimization of the model loss.

\vspace{-2mm}
\paragraph{Closed-form Path Variance.}
\label{sec:analytical_variance}
Directly minimizing $\mathcal{V}$ is nontrivial since it depends on the unknown time score $s^{(t)}(\vx, t)\triangleq \partial_t \log p_t(\vx)$. We therefore derive analytical, computable forms of $\mathcal{V}$ that depend only on path coefficients and data distribution moments. Closed-form results are obtained for both the DI (\cref{eq:deterministic-interpolant}) and the DDBI (\cref{eq:diffusion-bridge-interpolant}) interpolants, summarized in the following proposition.
\begin{restatable}{proposition}{AnalyticalPathVarianceFunctional}
	\label{thm:analytical_variance}
	The path variance $\mathcal{V}$ for the DI and DDBI interpolants, which depends on the path schedules $(\alpha, \beta)=\{\alpha(t),\beta(t)\}_{t\in[0,1]}$, can be expressed in the following analytical forms:
	\begin{enumerate}[nosep]
		\item \textbf{For the DI in \cref{eq:deterministic-interpolant}} with a Gaussian prior $p_0(\vx) = \mathcal{N}(\vx; \vzero, \mI_d)$:
		\begin{equation}
			\mathcal{V}_{\text{DI}}[\alpha, \beta] = \int_0^1 \left( \frac{2d \dot{\alpha}(t)^2}{\alpha(t)^2} + \frac{\dot{\beta}(t)^2}{\alpha(t)^2} \mathbb{E}_{p_1(\vx_1)} \left[ \|\vx_1\|^2 \right] \right) \mathrm{d}t.
			\label{eq:variance_di}
		\end{equation}
		
		\item \textbf{For the DDBI in \cref{eq:diffusion-bridge-interpolant}} with noise schedule $\sigma_t^2 = t(1-t)\gamma^2 + (\alpha(t)^2 + \beta(t)^2)\varepsilon$:
		\begin{equation}
			\mathcal{V}_{\text{DDBI}}[\alpha, \beta] = \int_0^1 \left( \frac{d}{2} \frac{(\dot{\sigma}_t^2)^2}{(\sigma_t^2)^2} + \frac{\mathbb{E}_{p_0(\vx_0) p_1(\vx_1)}\left\| \dot{\alpha}(t)\vx_0 + \dot{\beta}(t)\vx_1\right\|^2}{\sigma_t^2} \right) \mathrm{d}t,
			\label{eq:variance_ddbi}
		\end{equation}
	\end{enumerate}
\end{restatable}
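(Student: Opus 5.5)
The plan is to compute $\mathcal{V}$ through the \emph{conditional} time score of the Gaussian transition kernel rather than the intractable marginal score $\partial_t\log p_t$. Throughout I write $\dot\sigma_t^2 \triangleq \frac{\mathrm{d}}{\mathrm{d}t}\sigma_t^2$. The two facts I will use are the denoising identity $\partial_t \log p_t(\vx) = \E\left[\partial_t \log p_t(\vx\mid \vc)\mid \rvx_t=\vx\right]$, with $\vc=\vx_1$ for DI and $\vc=(\vx_0,\vx_1)$ for DDBI, and the law of total variance. These give
\[
\Var_{p_t(\vx)}\big(\partial_t\log p_t(\vx)\big) \;\le\; \E_{p(\vc)}\,\Var_{p_t(\vx\mid\vc)}\big(\partial_t\log p_t(\vx\mid\vc)\big)+\Var_{p(\vc)}\Big(\E_{p_t(\vx\mid\vc)}\big[\partial_t\log p_t(\vx\mid\vc)\big]\Big).
\]
The inner mean of a conditional score is zero, so the last term vanishes. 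The tractable surrogate for $\mathcal{V}$ is therefore the averaged conditional variance, which equals the averaged conditional second moment. The closed forms in \cref{thm:analytical_variance} should be read as this conditional path variance: an exact computation for the conditional quantity, and an upper bound on the marginal one. I would state this identification explicitly at the outset, since the marginal variance itself depends on the unknown $p_1$ beyond its second moment.

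\textbf{Step 1 (DI).} The kernel is $p_t(\vx\mid\vx_1)=\mathcal{N}(\vx;\beta\vx_1,\alpha^2\mI_d)$. Differentiating its log-density in $t$ gives
\[
\partial_t\log p_t(\vx\mid\vx_1)=-d\frac{\dot\alpha}{\alpha}+\frac{\dot\alpha}{\alpha^3}\|\vx-\beta\vx_1\|^2+\frac{\dot\beta}{\alpha^2}(\vx-\beta\vx_1)^\top\vx_1 .
\]
Reparametrize $\vx=\beta\vx_1+\alpha\vz$ with $\vz\sim\mathcal{N}(\vzero,\mI_d)$. This becomes
\[
\frac{\dot\alpha}{\alpha}\big(\|\vz\|^2-d\big)+\frac{\dot\beta}{\alpha}\,\vz^\top\vx_1 .
\]
The mean is zero. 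The two summands are uncorrelated, because $\E[(\|\vz\|^2-d)\,\vz]=\vzero$ by symmetry. Using $\Var\|\vz\|^2=2d$ and $\E(\vz^\top\vx_1)^2=\|\vx_1\|^2$, the conditional variance is
\[
2d\frac{\dot\alpha^2}{\alpha^2}+\frac{\dot\beta^2}{\alpha^2}\|\vx_1\|^2 .
\]
Averaging over $p_1$ and integrating over $t$ gives \cref{eq:variance_di}.

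\textbf{Step 2 (DDBI).} The same computation applies to $\mathcal{N}(\vx;\vmu_t,\sigma_t^2\mI_d)$ with $\vmu_t=\alpha\vx_0+\beta\vx_1$. Writing $\vx=\vmu_t+\sigma_t\vz$, the conditional time score is
\[
\frac{\dot\sigma_t^2}{2\sigma_t^2}\big(\|\vz\|^2-d\big)+\frac{1}{\sigma_t}\,\vz^\top\dot{\vmu}_t .
\]
Its variance is
\[
\frac{d}{2}\frac{(\dot\sigma_t^2)^2}{(\sigma_t^2)^2}+\frac{\|\dot\alpha\vx_0+\dot\beta\vx_1\|^2}{\sigma_t^2}.
\]
Averaging over $p_0p_1$ gives \cref{eq:variance_ddbi}. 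The only care needed is that $\sigma_t^2$ depends on $t$ through both $t(1-t)\gamma^2$ and $(\alpha^2+\beta^2)\varepsilon$; this enters only via $\dot\sigma_t^2$ and causes no difficulty.

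\textbf{Main obstacle.} The Gaussian moment computations are routine. The delicate point is the justification in the first paragraph, i.e.\ the passage from the marginal variance to the conditional one. I would handle it by proving the denoising identity under the regularity needed to differentiate under the integral sign (dominated convergence, using Gaussian tails and $\alpha,\sigma_t>0$ on $(0,1)$). I would then note that the integrand blows up at the endpoints, where $\alpha\to0$ or $\sigma_t^2\to\varepsilon(\alpha^2+\beta^2)$. The integrals are therefore finite only for schedules with adequate boundary behaviour, which will be an admissibility condition on the KMM-parameterized path.
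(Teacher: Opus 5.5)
Your argument is correct, and your Steps~1--2 are the same Gaussian moment computation as the paper's. The routes differ only in the first step, and there your version is the right one. The paper writes
\[
\Var_{p_t(\vx)}\big(\partial_t\log p_t(\vx)\big)=\E_{p_1(\vx_1)}\Big[\Var_{p_t(\vx\mid\vx_1)}\big(\partial_t\log p_t(\vx\mid\vx_1)\big)\Big]+\Var_{p_1(\vx_1)}\Big(\E_{p_t(\vx\mid\vx_1)}\big[\partial_t\log p_t(\vx\mid\vx_1)\big]\Big)
\]
as an equality and calls it the law of total variance. But the right-hand side decomposes the variance of the conditional score $S_c=\partial_t\log p_t(\rvx_t\mid\rvy)$, which is a different random variable from the marginal score on the left.

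Conditioning $S_c$ on $\rvx_t$ instead gives $\E[S_c^2]=\Var\big(\partial_t\log p_t(\rvx_t)\big)+\E\big[\Var(S_c\mid\rvx_t)\big]$. Write $\overline{\mathcal{V}}$ for the right-hand sides of \cref{eq:variance_di,eq:variance_ddbi}. Then $\overline{\mathcal{V}}=\mathcal{V}+V_{\text{cond}}$, where $V_{\text{cond}}$ is the term of \cref{prop:quivalence-of-time-score-matching-objectives} with its inner variance taken under $p(\vy\mid\vx_t)$.

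Your caveat is therefore necessary, not merely cautious. Take $p_1=p_0=\mathcal{N}(\vzero,\mI_d)$ and any spherical schedule. Then $p_t=\mathcal{N}(\vzero,\mI_d)$ for every $t$, so $\mathcal{V}=0$. Yet the integrand of \cref{eq:variance_di} is $d\,(2\dot\alpha^2+\dot\beta^2)/\alpha^2$, which is not identically zero.

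The paper's equality, had it held, would give an exact formula for $\mathcal{V}$. Your inequality gives a true statement that still serves the paper's purpose. Replacing $\mathcal{V}$ by $\overline{\mathcal{V}}\ge\mathcal{V}$ keeps the bound of \cref{thm:principled-error-bound} valid. Moreover, with uniform $p(t)$ and the normalization of \cref{eq:ideal_loss_identity_main}, one has exactly $\mathcal{L}_{\text{STSM}}+\overline{\mathcal{V}}=\mathcal{L}_{\text{CTSM}}$, which is the time-score loss actually trained.

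One correction to your closing remark: for DI, the admissibility class you have in mind is empty. Since $\alpha(0)=1$ and $\alpha(s)\to0$ as $s\to1$, Cauchy--Schwarz gives $\int_0^s(\dot\alpha/\alpha)^2\,\mathrm{d}t\ge(\log\alpha(s))^2/s\to\infty$. So $\overline{\mathcal{V}}_{\text{DI}}=+\infty$ for every valid schedule, KMM or not; this matches the ``Divergent'' column of \cref{tab:path_summary}. It is finite only after truncating time to $[\epsilon,1-\epsilon]$, as the clamp in \cref{alg:kmm_code} does.

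This blow-up is an artifact of the conditional surrogate. In the example above with the linear schedule, the true $\mathcal{V}=\tfrac d2\int_0^1\dot v^2/v^2\,\mathrm{d}t$ with $v=(1-t)^2+t^2\ge\tfrac12$ is finite.

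For DDBI the condition is genuine. With $\varepsilon=0$, the first term behaves like $d/(2t^2)$ near $t=0$ and is non-integrable for every schedule. With $\varepsilon>0$, one has $\sigma_t^2\ge\varepsilon(\alpha^2+\beta^2)\ge\varepsilon/2$. Finiteness then reduces to square-integrability of $\dot\alpha$ and $\dot\beta$, together with finite second moments of $p_0$ and $p_1$.
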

The proofs for these results are provided in Appendix \cref{appendix:analytical_variance}. These analytical forms are the key technical contribution that enables the practical implementation of our \textbf{Minimum Path Variance} principle, as they provide a direct, computable target for optimizing the path schedules.

\subsection{Minimizing the Path Variance via KMM Parameterization}
\label{sec:kmm_parameterization}

The analytical expressions in \cref{thm:analytical_variance} transform the problem of minimizing the path variance into a functional optimization problem over the space of valid path schedules $(\alpha, \beta)$. To solve this tractably, we introduce a flexible, low-dimensional parameterization for the path that satisfies the necessary boundary conditions and monotonicity constraints by construction. This transforms an infinite-dimensional functional optimization into a low-dimensional parametric one, offering superior stability and tractability.

\vspace{-2mm}

\paragraph{A Function Class with Built-in Constraints.} A valid schedule $\{\alpha(t)\}_{t\in[0,1]}$ must satisfy the boundary conditions $\alpha(0)=1, \alpha(1)=0$ and be monotonically decreasing. Instead of enforcing these properties with external constraints, we select a function class that satisfies them inherently. We achieve this by defining $\alpha$ based on the cumulative distribution function (CDF) of a probability distribution defined on $[0, 1]$. A CDF, $F(t)$, is by definition a monotonically increasing function from $0$ to $1$. Therefore, the transformation $1 - F(t)$ yields a function that is monotonically decreasing from $1$ to $0$, matching our requirements for $\{\alpha(t)\}_{t\in[0,1]}$.

\vspace{-2mm}

\paragraph{The Kumaraswamy Mixture Model (KMM).} While a single distribution on $[0,1]$ (e.g., Beta or Kumaraswamy) could define a valid path, its expressive power is limited, being unimodal. To construct a significantly more flexible function class, we employ a \textbf{Kumaraswamy Mixture Model (KMM)} \citep{kumaraswamy1980generalized,sindhu2015bayesian}. The Kumaraswamy distribution is chosen over the more common Beta distribution for its simple and closed-form CDF, which allows for efficient and stable computation of derivatives. By taking a weighted sum of several Kumaraswamy components, we can create complex, multimodal probability density shapes, which in turn allows the path schedule to have varying rates of change to optimally minimize variance.

Formally, we define a $K$-component KMM, parameterized by $\vphi = \{w_k, a_k, b_k\}_{k=1}^K$, via its probability density function (PDF) and CDF,
\begin{align}\label{eq:kmm_cdf_pdf}
	p_{\vphi}(t) = \sum_{k=1}^{K} w_k \cdot \mathrm{KS}(t ; a_k, b_k), \quad F_{\vphi}(t) = \sum_{k=1}^{K} w_k \left[1 - (1 - t^{a_k})^{b_k}\right], 
\end{align}
where $\mathrm{KS}(t ; a, b) = a b t^{a-1} (1 - t^a)^{b-1}$ is the Kumaraswamy PDF, $w_k$ are mixture weights satisfying $\sum w_k = 1$, and $a_k, b_k > 0$ are shape parameters.

\vspace{-2mm}

\paragraph{Final Construction and Guaranteed Properties.} We now define our learnable path schedule $\alpha_{\vphi}(t)$ using the KMM's CDF. The derivative $\dot{\alpha}_{\vphi}(t)$ is simply the negative of the KMM's PDF,
\begin{equation}
	\alpha_{\vphi}(t) = 1 - F_{\vphi}(t), \quad \dot{\alpha}_{\vphi}(t) = -p_{\vphi}(t). \label{eq:alpha_and_derivatives_from_cdf}
\end{equation}
This construction guarantees the following desired properties without external enforcement: 
\begin{itemize} 
	\item Boundary Conditions: $\alpha_{\vphi}(0) = 1 - F_{\vphi}(0) = 1$ and $\alpha_{\vphi}(1) = 1 - F_{\vphi}(1) = 0$. 
	\item Monotonicity: $\alpha_{\vphi}(t)$ is guaranteed to be monotonically decreasing since $p_{\vphi}(t) \ge 0$. 
	\item Smoothness: $\alpha_{\vphi}(t)$ is infinitely differentiable on $(0, 1)$, ensuring a smooth probability flow. 
\end{itemize} 

The coupling schedule $\beta_{\vphi}(t)$ and its derivative follow from the constraint selected below.
\begin{align}
	\text{Affine: } &\alpha_{\vphi}(t)+\beta_{\vphi}(t)=1 
	&\Rightarrow\quad&
	\beta_{\vphi}(t)=1-\alpha_{\vphi}(t), \quad 
	\dot\beta_{\vphi}(t)=-\dot\alpha_{\vphi}(t). \label{eq:beta-affine} \\
	\text{Spherical: } &\alpha_{\vphi}(t)^2+\beta_{\vphi}(t)^2=1 
	&\Rightarrow\quad&
	\beta_{\vphi}(t)=\sqrt{1-\alpha_{\vphi}(t)^2},  \quad 
	\dot\beta_{\vphi}(t)=-\tfrac{\alpha_{\vphi}(t)\dot\alpha_{\vphi}(t)}{\beta_{\vphi}(t)}. \label{eq:beta-spherical}
\end{align}

Substituting into the path variance expressions in \cref{thm:analytical_variance}, we recast the functional optimization as a finite-dimensional problem over the KMM parameters $\vphi$, namely minimizing $\mathcal{V}[\alpha_{\vphi},\beta_{\vphi}]$.

\paragraph{Parameterization for Unconstrained Optimization.} 
To perform gradient-based optimization for $\vphi$, we must handle the constraints on the KMM parameters ($w_k > 0, \sum w_k = 1, a_k > 0, b_k > 0$). We use a standard reparameterization technique, optimizing a set of unconstrained latent variables $\hat{\vphi}=\{\hat{w}_k, \hat{a}_k, \hat{b}_k\}$ which are mapped to valid KMM parameters via softmax and softplus transformations,
\begin{equation}  \label{eq:reparameterization-of-mm-params}
	w_k = \frac{e^{\hat{w}_k}}{\sum_{j=1}^K e^{\hat{w}_k}}, \quad a_k = \text{softplus}(\hat{a}_k) = \log(1 + e^{\hat{a}_k}), \quad  b_k = \text{softplus}(\hat{b}_k).  
\end{equation} 

\begin{wrapfigure}[7]{r}{0.46\linewidth} 
	\vspace{-2.3ex} 
	\centering
	\begin{minipage}{0.95\linewidth}
		\begin{algorithm}[H]
			\caption{DRE with MVP: Inference.}
			\label{alg:estimation}
			\begin{algorithmic}[1]
				\REQUIRE Sample $\vx$, optimal parameters $\vtheta^\star$ and $\vphi^\star$, integration steps $I$.
				\STATE $t_i=i/I, i=\{0,1,\ldots,I\}$.
				\STATE $s^{(t)}_i \leftarrow s^{(t)}_{\vtheta^\star}(\vx, t_i)$, $i=\{0,1,\ldots,I\}$.
				\STATE $\log \hat{r}(\vx) \leftarrow \frac{1}{I} \sum_{i=0}^I s_i^{(t)}$.
				\ENSURE  $\hat{r}(\vx) = \exp(\log \hat{r}(\vx))$.
			\end{algorithmic}
		\end{algorithm}
	\end{minipage}
	\vspace{-1.5ex} 
\end{wrapfigure}
The path parameters $\vphi$ and score model parameters $\vtheta$ are jointly optimized via the total objective $\mathcal{L}_{\text{total}} = \lambda_1 \mathcal{L}_{\text{STSM}}(\vtheta) + \lambda_2 \mathcal{V}[\alpha_{\vphi}, \beta_{\vphi}]$ (described in Appendix \cref{sec:experimental_objectives}). Details of path optimization are given in \cref{alg:path_optimization}, and a PyTorch implementation is provided in \cref{alg:kmm_code} (see appendix). After training, MVP uses the same inference as conventional DRE methods shown in \cref{alg:estimation}.
\begin{algorithm}[ht]
	\caption{Unconstrained Optimization of the MVP Path.}
	\label{alg:path_optimization}
	\begin{algorithmic}[1]
		\REQUIRE Num components $K$, initial latent params $\hat{\vphi} = \{\hat{w}_k, \hat{a}_k, \hat{b}_k\}_{k=0}^{K}$, learning rate $\eta$.
		\STATE Sample a batch of $N$ time steps $\{t_i\}_{i=1}^N \sim p(t)$ (described in Appendix \cref{app:variance-based-time-sampler}).
		\STATE Transform $\hat{\vphi}$ to constrained KMM params $\vphi=\{w_k, a_k, b_k\}_{k=0}^{K}$ using \cref{eq:reparameterization-of-mm-params}.
		\STATE Compute path schedules $\{(\alpha_{\vphi}(t_i), \beta_{\vphi}(t_i))\}_{i=1}^{N}$ and their derivatives using \cref{eq:alpha_and_derivatives_from_cdf,eq:beta-affine,eq:beta-spherical}.
		\STATE Approximate the path variance $\mathcal{V}[\alpha_{\vphi},\beta_{\vphi}]$ via Monte Carlo using either \cref{eq:variance_di} or \cref{eq:variance_ddbi}.
		\STATE Compute gradients $\nabla_{\hat{\vphi}} \mathcal{V}[\alpha_{\vphi},\beta_{\vphi}]$ via automatic differentiation.
		\STATE Update latent params: $\hat{\vphi} \gets \texttt{OptimizerStep}(\hat{\vphi}, \nabla_{\hat{\vphi}} \mathcal{V}[\alpha_{\vphi},\beta_{\vphi}], \eta)$.
		\ENSURE Optimized latent parameters $\hat{\vphi}^\star$ defining the path function $\alpha_{\vphi^\star}$ and $\beta_{\vphi^\star}$.
	\end{algorithmic}
\end{algorithm}

\vspace{-6mm}

\section{Experiments}
\label{sec:experiments}
\vspace{-2mm}
We conduct a comprehensive set of experiments to validate the effectiveness of our Minimum Path Variance principle and demonstrate the superior performance of our proposed method, MVP.

\vspace{-3mm}

\subsection{Baseline Setting and Experimental Setup}

\vspace{-2mm}

Our evaluation is designed to cover a wide range of tasks and distribution types. We follow the experimental protocol established by \citet{choi2022density,chen2025dequantified}, employing two primary interpolant structures based on the task requirements. For standard density estimation tasks where $p_0$ can be assumed to be a simple distribution, we use the DI framework defined in \cref{eq:deterministic-interpolant}. For more general tasks such as $f$-divergence and mutual information estimation, where this assumption is restrictive, we use the more robust DDBI framework from \cref{eq:diffusion-bridge-interpolant}.

To evaluate our approach, we benchmark against a comprehensive suite of common, fixed path schedules representing the current standard in the DRE and generative modeling literatures. These baselines include the Linear \citep{albergo2023stochastic}, VP \citep{ho2020denoising, song2020score}, Cosine \citep{nichol2021improved}, Föllmer \citep{follmer2005entropy, chen2024probabilistic}, and Trigonometric \citep{albergo2023stochastic} paths, with detailed formulations provided in  \cref{app:path_schedules_and_variances}. In contrast to these pre-defined paths, our MVP path schedule  utilizes a learnable path parameterized by a KMM, as discussed in \cref{sec:kmm_parameterization} above. The path is optimized by directly minimizing the analytical path variance expressions from \cref{thm:analytical_variance}, allowing it to adapt to the specific data distributions of a given task.
\vspace{-2mm}
\subsection{Ablation Studies.}
\vspace{-2mm}
\begin{wraptable}[14]{r}{0.46\linewidth} 
		\vspace{-2.5ex} 
	\centering
	\begin{minipage}{0.95\linewidth}
		\centering
		\setlength{\tabcolsep}{1mm}
		\small
		\caption{Ablation study on the number of KMM components ($K$) on the most challenging mutual information (MI) estimation ($d=160$) and density estimation (BSDS300) tasks. We use mean squared error (MSE) negative log-likelihood (NLL). Lower is better. We recommend $K=5$.}
        \vspace{-2.5mm}
		\begin{tabular}{cccc}
			\toprule
			& \multicolumn{2}{c}{$d=160$ (MI $=40$)}  & \textbf{BSDS300} \\
			$K$ & \textbf{Est. MI} & \textbf{MSE}$\downarrow$ & \textbf{NLL}$\downarrow$ \\
			\midrule
			1 & $40.60\pm0.66$ & $1.32$ & $-48.51\pm0.65$ \\
			2 & $40.55\pm0.60$ & $1.09$ & $-145.44\pm 0.31$ \\
			5 & $40.59\pm0.55$ & $1.02$ & $-143.97\pm 0.22$ \\
			8 & $41.16\pm0.63$ & $2.32$ & $-143.60\pm 0.45$ \\
			\bottomrule
		\end{tabular}
		\label{tab:ablation-k}
	\end{minipage}
\end{wraptable}
\paragraph{KMM Components.}
We first conduct an ablation study on the sensitivity of MVP to the number of components $K$ in the KMM path parameterization. 
Experiments are done under the most challenging settings for both tasks: mutual information (MI) estimation at $d=160$ and density estimation on BSDS300. As shown in \cref{tab:ablation-k}, a single component ($K=1$) is too restrictive and performs poorly, highlighting the need for path flexibility. 
Performance improves markedly at $K=2$ and reaches its peak around $K=5$. Increasing the number of components further ($K=8$) brings no benefit and may slightly hurt performance due to overfitting the path variance objective. Thus, MVP empirically remains robust, with $K \in [2, 5]$. In practice, we set $K=5$, which achieves the best trade-off.

\vspace{-1mm}
\paragraph{Path Constraints.}
We treat the choice between the \textbf{\textit{affine}} constraint ($\alpha(t)+\beta(t)=1$) and the \textbf{\textit{spherical}} constraint ($\alpha(t)^2+\beta(t)^2=1$) as a hyperparameter. The ablation results reveal a clear, task-dependent pattern that provides practical guidance:
1) For density estimation on tabular data (see \cref{tab:test_results}), the affine constraint performs best on lower-dimensional cases (POWER, GAS), whereas the spherical constraint is superior on higher-dimensional and more complex cases (HEPMASS, MINIBOONE, BSDS300). This indicates that the optimal path geometry is closely linked to the intrinsic complexity of the data manifold.
2) For MI estimation, the result is different. On Gaussian tasks with high discrepancy but simple geometry (see \cref{tab:mi_results}), the affine constraint consistently delivers better accuracy and stability. By contrast, on more challenging MI tasks with pathological geometries, we found that the spherical constraint proves more robust, particularly for datasets with discontinuities (Additive Noise) or complex dependencies (Gamma–Exponential). (See \cref{tab:mse_vertical_combined} in appendix for these additional experiments.)
These findings validate our treatment of the path constraint as a data-dependent hyperparameter rather than a fixed choice. In practice, the affine constraint is well-suited for distributions with high discrepancy but simple geometry, while the spherical constraint is preferable for complex, high-dimensional, or non-Gaussian settings.


\vspace{-3mm}

\subsection{$f$-divergence Estimation}
\vspace{-2mm}

\paragraph{Geometrically Pathological Distributions.}
We first test robustness on five mutual information (MI) estimation tasks that can be viewed as challenging $f$-divergence problems. (For a formal MI definition, see Appendix \cref{appendix:mutual-information-estimation}.) Following \citet{czyz2023beyond}, these distributions exhibit concentrated densities, sharp discontinuities, heavy tails, and extreme correlations, all of which break standard estimator assumptions. Results in \cref{tab:mse_vertical_combined_part} show that fixed-path baselines degrade severely on these geometries, while MVP consistently achieves low error (see also \cref{tab:mse_vertical_combined} in the appendix). On the Additive Noise task with discontinuities and the Gamma–Exponential task with complex non-linear dependence, our method outperforms all baselines by a wide margin, confirming that path-variance optimization enables MVP to adapt to pathological structures effectively.
\vspace{-2mm}
\begin{table}[ht]
	\centering
	\caption{
	MSE results on the Additive Noise (sharp discontinuities) and Gamma–Exponential (non-linear dependency) datasets. Across all correlation levels (top row of each sub-table), MVP achieves notably lower MSE than fixed-path baselines. Full results on 5 datasets given in \cref{tab:mse_vertical_combined} in appendix.
	}
	\label{tab:mse_vertical_combined_part}
	\resizebox{\textwidth}{!}{
		\begin{tabular}{lccccccccccc}
			\toprule
			\textbf{Path} & \textbf{Constraint} & \textbf{0.1} & \textbf{0.2} & \textbf{0.3} & \textbf{0.4} & \textbf{0.5} & \textbf{0.6} & \textbf{0.7} & \textbf{0.8} & \textbf{0.9} \\
			\midrule
			Linear & affine & $0.3053$ & $0.0541$ & $0.0233$ & $0.0465$ & $0.0367$ & $0.0714$ & $0.0339$ & $0.0307$ & $0.0268$ \\
			Cosine & spherical & $0.4803$ & $0.1086$ & $0.0788$ & $0.0969$ & $0.0719$ & $0.0867$ & $0.1015$ & $0.1204$ & $0.1304$ \\
			Föllmer & spherical & $0.4731$ & $0.1048$ & $0.0510$ & $0.0916$ & $0.0864$ & $0.1024$ & $0.1226$ & $0.0944$ & $0.1115$ \\
			Trigonometric & spherical & $0.3615$ & $0.0565$ & $0.0508$ & $0.0654$ & $0.0596$ & $0.0517$ & $0.0502$ & $0.0448$ & $0.0603$ \\
			VP & spherical & $0.3455$ & $0.0290$ & $0.0303$ & $0.0346$ & $0.0210$ & $0.0313$ & $0.0426$ & $0.0390$ & $0.0123$ \\
			\textbf{MVP (ours)} & affine & $0.2360$ & $0.0199$ & $0.0081$ & $0.0167$ & $0.0161$ & $0.0216$ & $0.0304$ & $0.0155$ & $\mathbf{0.0010}$ \\
			\textbf{MVP (ours)} & spherical & $\mathbf{0.1016}$ & $\mathbf{0.0134}$ & $\mathbf{0.0078}$ & $\mathbf{0.0009}$ & $\mathbf{0.0138}$ & $\mathbf{0.0117}$ & $\mathbf{0.0196}$ & $\mathbf{0.0026}$ & $0.0029$ \\
			\hline
			\hline
			\textbf{Path} & \textbf{Constraint} & \textbf{1.0} & \textbf{1.1} & \textbf{1.2} & \textbf{1.3} & \textbf{1.4} & \textbf{1.5} & \textbf{1.6} & \textbf{1.7} & \textbf{1.8} \\
			\midrule
			Linear & affine & $2.8166$ & $0.7064$ & $0.1686$ & $0.1368$ & $0.0513$ & $0.0420$ & $0.0051$ & $0.0147$ & $0.0066$ \\
			Cosine & spherical & $7.2466$ & $1.0785$ & $0.4875$ & $0.2582$ & $0.2802$ & $0.3063$ & $0.2936$ & $0.1864$ & $0.3438$ \\
			Föllmer & spherical & $0.4006$ & $0.4275$ & $0.6304$ & $0.3741$ & $0.2822$ & $0.4662$ & $0.2473$ & $0.3838$ & $0.3105$ \\
			Trigonometric & spherical & $10.4420$ & $1.3841$ & $0.2705$ & $0.3854$ & $0.0184$ & $0.0681$ & $0.1109$ & $0.0103$ & $0.0668$ \\
			VP & spherical & $2.7080$ & $1.3872$ & $0.1116$ & $0.1224$ & $0.1320$ & $0.1817$ & $0.4407$ & $0.0919$ & $0.2507$ \\
			\textbf{MVP (ours)} & affine & $0.4785$ & $1.5466$ & $0.1137$ & $0.1274$ & $0.0915$ & $\mathbf{0.0063}$ & $\mathbf{0.0032}$ & $0.0294$ & $0.0069$ \\
			\textbf{MVP (ours)} & spherical & $\mathbf{0.2721}$  & $\mathbf{0.2304}$ & $\mathbf{0.0917}$ & $\mathbf{0.0434}$ & $\mathbf{0.0075}$ & $0.0187$ & $0.0091$ & $\mathbf{0.0030}$ & $\mathbf{0.0004}$ \\
			\bottomrule
		\end{tabular}
	}
\end{table}

\vspace{-6mm}

\paragraph{High-dimensional \& High-discrepancy Distributions.} 
We further evaluate MI estimation in a high-dimensional, high-discrepancy setting, designed to induce the ``density-chasm" problem in DRE \citep{rhodes2020telescoping}. Performance is measured by MI accuracy and MSE against the true value. As shown in \cref{tab:mi_results}, most fixed-path baselines collapse as dimension $d$ and MI increase. For example, Föllmer and Cosine paths perform reasonably at low $d$ but diverge at $d=160$, MI$=40$. In contrast, MVP with an affine constraint maintains a low MSE of $1.02$, outperforming the next best baseline by a large margin. These results highlight that adaptive path learning under our Minmimum Path Variance principle reliably bridges the density chasm in settings where heuristic fixed paths fail.
\vspace{-2mm}
\begin{table*}[ht]
	\centering
	\setlength{\tabcolsep}{1mm}
	\small
	\caption{Mutual information estimation under high-discrepancy settings (MI $\in \{10, 20, 30, 40\}$ nats). We report the estimated mutual information (mean $\pm$ std) and MSE across different path settings and constraint types. Bolded MSE values indicate the best performance for each dimension. Our MVP path demonstrates superior performance in most high-discrepancy settings.}
	\resizebox{\textwidth}{!}{%
		\begin{tabular}{l l | r r | r r | r r | r r}
			\toprule
			& & \multicolumn{2}{c}{$d=40$ (MI $=10$)} & \multicolumn{2}{c}{$d=80$ (MI $=20$)} & \multicolumn{2}{c}{$d=120$ (MI $=30$)} & \multicolumn{2}{c}{$d=160$ (MI $=40$)} \\
			\cmidrule(lr){3-4} \cmidrule(lr){5-6} \cmidrule(lr){7-8} \cmidrule(lr){9-10}
			\textbf{Path} & \textbf{Constraint} & \multicolumn{1}{c}{\textbf{Est. MI}} & \multicolumn{1}{c}{\textbf{MSE}} & \multicolumn{1}{c}{\textbf{Est. MI}} & \multicolumn{1}{c}{\textbf{MSE}} & \multicolumn{1}{c}{\textbf{Est. MI}} & \multicolumn{1}{c}{\textbf{MSE}} & \multicolumn{1}{c}{\textbf{Est. MI}} & \multicolumn{1}{c}{\textbf{MSE}} \\
			\midrule
			Linear & Affine & $9.79 \scriptstyle\pm 0.17$ & $0.13$ & $20.44 \scriptstyle\pm 0.34$ & $0.44$ & $30.18 \scriptstyle\pm 0.50$ & $0.63$ & $37.07 \scriptstyle\pm 0.51$ & $9.17$ \\
			Cosine & Spherical & $9.62 \scriptstyle\pm 0.28$ & $0.34$ & $19.21 \scriptstyle\pm 0.47$ & $1.19$ & $28.60 \scriptstyle\pm 0.71$ & $2.97$ & $33.95 \scriptstyle\pm 0.71$ & $37.91$ \\
			Föllmer & Spherical & $9.41 \scriptstyle\pm 0.27$ & $0.54$ & $17.94 \scriptstyle\pm 0.46$ & $4.75$ & $25.31 \scriptstyle\pm 0.70$ & $23.06$ & $31.51 \scriptstyle\pm 0.60$ & $72.98$ \\
			Trigonometric & Spherical & $10.12 \scriptstyle\pm 0.24$ & $0.13$ & $19.88 \scriptstyle\pm 0.47$ & $0.47$ & $33.37 \scriptstyle\pm 0.74$ & $12.41$ & $41.74 \scriptstyle\pm 0.70$ & $4.18$ \\
			VP & Spherical & $10.02 \scriptstyle\pm 0.34$ & $0.28$ & $20.08 \scriptstyle\pm 0.47$ & $0.66$ & $26.14 \scriptstyle\pm 0.57$ & $15.53$ & $33.47 \scriptstyle\pm 0.61$ & $43.60$ \\
			\midrule
			MVP (ours) & Spherical & $9.81 \scriptstyle\pm 0.24$ & $0.16$ & $20.43 \scriptstyle\pm 0.44$ & $0.62$ & $28.88 \scriptstyle\pm 0.69$ & $2.35$ & $37.22 \scriptstyle\pm 0.72$ & $8.79$ \\
			MVP (ours) & Affine & $10.08 \scriptstyle\pm 0.20$ & $\mathbf{0.10}$ & $19.80 \scriptstyle\pm 0.36$ & $\mathbf{0.34}$ & $30.22 \scriptstyle\pm 0.48$ & $\mathbf{0.58}$ & $40.59 \scriptstyle\pm 0.55$ & $\mathbf{1.02}$ \\
			\bottomrule
		\end{tabular}
	}
	\label{tab:mi_results}
\end{table*}

\subsection{Density Estimation}
To assess our model’s ability to capture the true data distribution $p_{\mathrm{data}}$, we perform density estimation by expressing complex, possibly intractable data distribution $p_{\text{data}}$ in terms of a simple base distribution $p_0$ (e.g., $\mathcal{N}(\vzero,\mI_{d})$) using the density ratio $r(\vx) = p_{\text{data}}(\vx)/p_0(\vx)$. This yields
$\log p_{\mathrm{data}}(\vx) = \log r(\vx) + \log p_{0}(\vx)$,
so that density estimation reduces to approximating $r(\mathbf{x})$, and the log‐likelihood is estimated via $\log p_{\mathrm{data}}(\vx)
\approx \log \hat{r}(\vx) + \log p_{0}(\vx)$.

\begin{wrapfigure}[10]{r}{0.5\linewidth}
	\begin{minipage}{\linewidth}
		\centering
            \vspace{-0.6cm}
		\includegraphics[width=\linewidth]{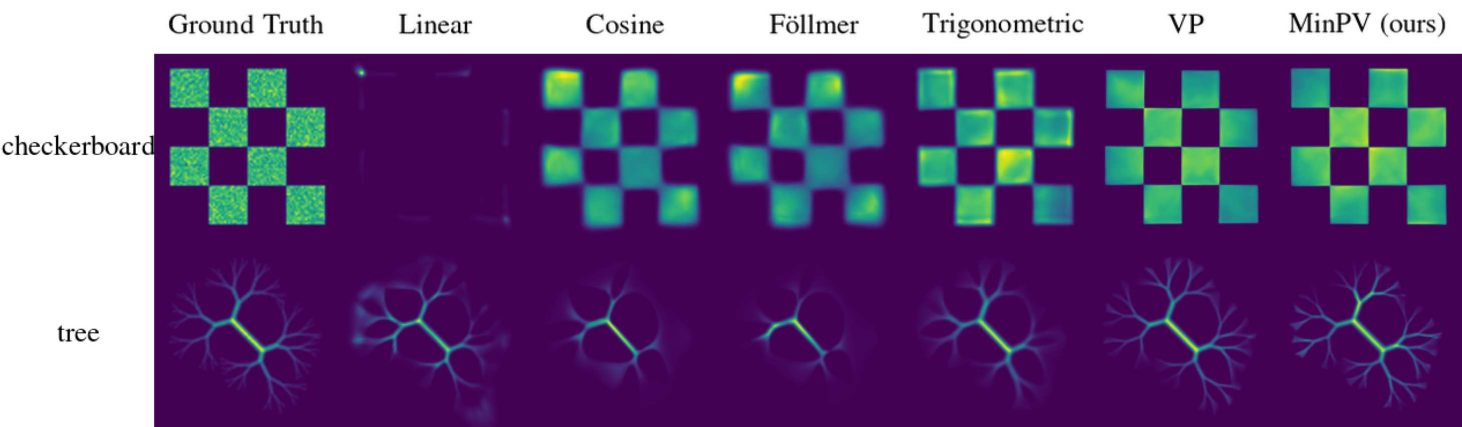}
		\caption{Density estimation on checkerboard and tree datasets. MVP successfully learns a data-adaptive path tailored to the specific manifold of each dataset. Full results in \cref{fig:toy2d_comparison} of appendix.}
		\vspace{-0.1cm}
		\label{fig:likelihoodtoy2d-ablation-path}
	\end{minipage}
\end{wrapfigure}

\paragraph{Structured and Multi-modal Datasets.}
We further evaluate MVP on six challenging datasets: circles, rings, pinwheel, 2spirals, checkerboard, and tree \citep{grathwohl2018ffjord,karras2024guiding}, which feature disconnected components, intricate spirals, and discontinuous densities. As shown in \cref{fig:likelihoodtoy2d-ablation-path}, MVP yields sharper and more accurate density estimates on checkerboard and tree, and matches the best baseline (VP path) on the remaining datasets. 

Full results are provided in \cref{fig:toy2d_comparison} in the appendix. 
These results demonstrate the significant adaptability of MVP and help confirm that minimizing path variance enables MVP to adapt effectively to diverse data geometries without manual path selection.

\paragraph{Real-world Tabular Datasets.}
Finally, we evaluate MVP on five widely-used tabular benchmarks \citep{grathwohl2018ffjord}, reporting test negative log-likelihood (NLL), where lower is better. As shown in \cref{tab:test_results}, MVP consistently outperforms all fixed-path baselines and achieves new state-of-the-art results across all datasets. Two key observations of note are:
1) On POWER and GAS, the affine constraint leads to the best performance, whereas the spherical constraint is more effective on the higher-dimensional datasets (HEPMASS, MINIBOONE, and BSDS300). This highlights the adaptability of MVP to data geometry.
2) The gains over strong baselines (e.g., VP and trigonometric paths) are substantial, especially on BSDS300 where MVP improves NLL by more than 10 points, underscoring the advantage of explicitly minimizing path variance. These results validate our central claim, that adaptively selecting the path via MVP yields a data-dependent path that aligns better with the underlying structure, leading to improved density estimation.

\begin{table}[htbp]
	\centering
	\setlength{\tabcolsep}{1mm}
	\caption{
	Test Negative Log-Likelihood (NLL) on five tabular datasets (lower is better). Our MVP path consistently outperforms all fixed-path baselines, achieving SOTA across the benchmarks.}
	\label{tab:test_results}
	\resizebox{\textwidth}{!}{
		\begin{tabular}{llccccc}
			\toprule
			\textbf{Path} & \textbf{Constraint} & \textbf{POWER} & \textbf{GAS} & \textbf{HEPMASS} & \textbf{MINIBOONE} & \textbf{BSDS300} \\
			\midrule
			Linear & Affine & $-0.19 \scriptstyle\scriptstyle\pm 0.04$ & $-5.00 \scriptstyle\scriptstyle\pm 0.06$ & $19.64 \scriptstyle\scriptstyle\pm 0.14$ & $18.85 \scriptstyle\scriptstyle\pm 0.22$ & $-98.37 \scriptstyle\scriptstyle\pm 0.48$ \\
			Cosine & Spherical & $0.55 \scriptstyle\scriptstyle\pm 0.03$ & $-1.72 \scriptstyle\scriptstyle\pm 0.07$ & $21.62 \scriptstyle\scriptstyle\pm 0.05$ & $24.36 \scriptstyle\scriptstyle\pm 0.17$ & $-75.39 \scriptstyle\scriptstyle\pm 1.17$ \\
			Föllmer & Spherical & $0.52 \scriptstyle\scriptstyle\pm 0.06$ & $-0.45 \scriptstyle\scriptstyle\pm 0.07$ & $21.57 \scriptstyle\scriptstyle\pm 0.09$ & $25.66 \scriptstyle\scriptstyle\pm 0.34$ & $-82.87 \scriptstyle\scriptstyle\pm 0.20$ \\
			Trigonometric & Spherical & $-0.54 \scriptstyle\scriptstyle\pm 0.15$ & $-4.57 \scriptstyle\scriptstyle\pm 0.16$ & $19.50 \scriptstyle\scriptstyle\pm 0.05$ & $20.27 \scriptstyle\scriptstyle\pm 0.10$ & $-89.41 \scriptstyle\scriptstyle\pm 0.04$ \\
			VP & Spherical & $-0.04 \scriptstyle\scriptstyle\pm 0.08$ & $-6.02 \scriptstyle\scriptstyle\pm 0.13$ & $18.06 \scriptstyle\scriptstyle\pm 0.16$ & $18.25 \scriptstyle\scriptstyle\pm 0.08$ & $-131.90 \scriptstyle\scriptstyle\pm 0.39$ \\
			\midrule
			MVP (ours) & Spherical & $-0.15 \scriptstyle\scriptstyle\pm 0.11$ & $-6.92 \scriptstyle\scriptstyle\pm 0.21$ & $\mathbf{18.01} \scriptstyle\scriptstyle\pm 0.16$ & $\mathbf{17.81} \scriptstyle\scriptstyle\pm 0.09$ & $\mathbf{-143.97} \scriptstyle\scriptstyle\pm 0.22$ \\
			MVP (ours) & Affine & $\mathbf{-0.81} \scriptstyle\scriptstyle\pm 0.08$ & $\mathbf{-6.95} \scriptstyle\scriptstyle\pm 0.10$ & $18.79 \scriptstyle\scriptstyle\pm 0.16$ & $17.99 \scriptstyle\scriptstyle\pm 0.10$ & $-129.73 \scriptstyle\scriptstyle\pm 0.20$ \\
			\bottomrule
		\end{tabular}
	}
\end{table}

\subsection{Illustration of Optimized Path Schedules}

We visualize the optimized schedules $\alpha_{\vphi}(t)$ and $\beta_{\vphi}(t)$ across various tasks to interpret the efficacy of the MVP principle, with detailed comparisons provided in \cref{fig:main_path_comparison_high_discre} and \cref{app:optimized-path-schedules}. 
While \cref{thm:principled-error-bound} establishes a theoretical link between path variance and estimation error , we emphasize that our implementation treats the minimization of $\mathcal{V}$ as a principled heuristic rather than a strict minimization of the ground-truth error. This analytical proxy objective serves to encourage smoother path transitions and indirectly control the Lipschitz constant $L$.
As illustrated in the optimized schedules for pathological and high-dimensional distributions \cref{fig:path_comparison_geometrically_pathological,fig:path_comparison_high_discre}, the learned paths (red) significantly deviate from fixed baselines. Specifically, MVP tends to adopt a more gradual rate of change near the boundaries $t \in \{0, 1\}$, which serves as a heuristic to suppress instantaneous velocity spikes in the time score that often lead to numerical instability.
\begin{figure}[htbp]
	\centering
    
	\begin{subfigure}[b]{0.48\linewidth}
		\centering
		\includegraphics[width=\linewidth]{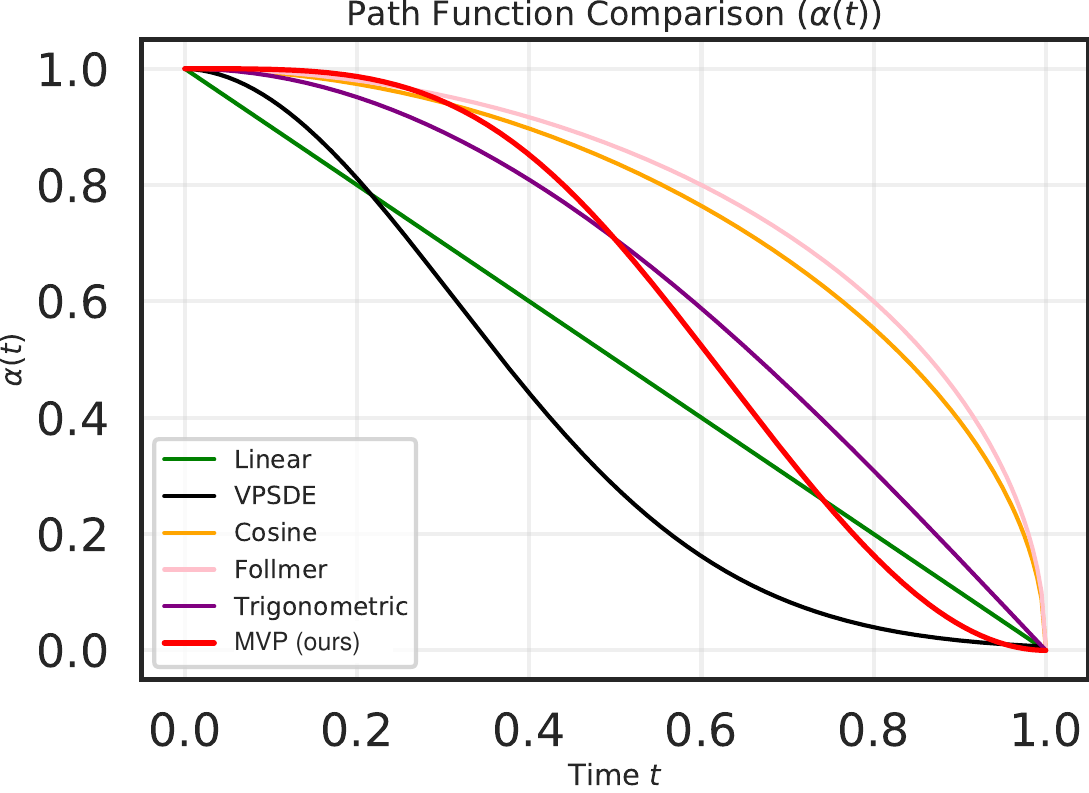}
		\caption{$\alpha$; spherical.}
		\label{fig:main_path_comparison_alpha_high2}
	\end{subfigure}
    \hfill
        \begin{subfigure}[b]{0.48\linewidth}
		\centering
		\includegraphics[width=\linewidth]{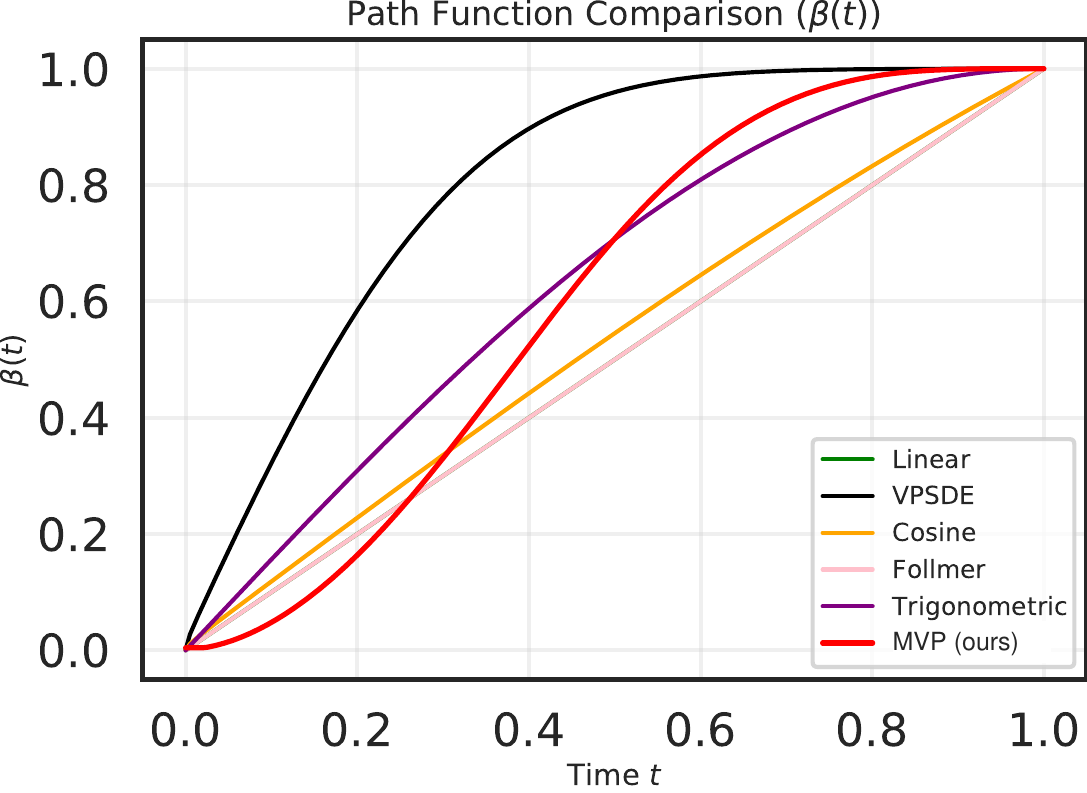}
		\caption{$\beta$; spherical.}
		\label{fig:main_path_comparison_beta_high2}
	\end{subfigure}
    \vspace{-2mm}
	\caption{
		Optimized KMM-parameterized path schedule for high-dimensional distributions. 
	}
	\label{fig:main_path_comparison_high_discre}
\end{figure}

This data-adaptive schedule is particularly effective for navigating the density-chasm encountered in high-dimensional settings. As shown in \cref{fig:main_path_comparison_high_discre,fig:path_comparison_high_discre}, while fixed schedules like Föllmer or cosine often traverse regions of vanishingly low density and subsequently diverge, the MVP path adapts its velocity to maintain more reliable score signals. The structural diversity of the learned schedules across structured manifolds \cref{fig:path_comparison_2dtoy} and real-world tabular datasets \cref{fig:path_comparison_tabular} further underscores the necessity of the flexible KMM parameterization. By allowing the schedule to adjust its rate of change to align with the specific manifold structure, MVP ensures that the probability flow is tailored to the data distribution. This alignment directly facilitates the substantial improvements in NLL and SOTA results reported in \cref{tab:test_results}.

\section{Conclusions and Future Works}
\label{sec:conclusions}
\vspace{-2mm}

This paper addresses the key paradox in score-based density ratio estimation (DRE) that estimator performance strongly depends on the chosen path schedule while being theoretically path-invariant.
Our analysis reveals that this discrepancy arises from an overlooked path variance term in training objectives. Based on this observation, we established the \textbf{M}inimum \textbf{P}ath \textbf{V}ariance (MVP) principle, arguing that this term should be explicitly minimized rather than ignored. We then proposed MVP, a framework that derives a closed-form expression for path variance and directly optimizes it. By parameterizing the path with a Kumaraswamy Mixture (KMM) Model, MVP learns a data-adaptive low-variance path. This eliminates the need for heuristic path selection and ensures closer alignment with the ideal score-matching objective. Experiments demonstrate that MVP achieves SOTA performance in estimation error and stability across challenging DRE benchmarks.

\paragraph{Future Works.}
The MVP principle is general and could benefit other score-based generative models by learning optimal noise schedules to improve sample quality and stability. Beyond accuracy, future research could focus on computational efficiency, for example by leveraging optimal transport–based path formulations or adopting lighter parameterizations to further accelerate training. We believe that explicitly optimizing the geometry of interpolation paths will continue to afford significant advances in density ratio estimation and related score-based methods.

%
\subsubsection*{Acknowledgments}
This work was supported in part by grants from National Natural Science Foundation of China (52539005), the fundamental research program of Guangdong, China (2023A1515011281), the China Scholarship Council (202306150167), Guangdong Basic and Applied Basic Research Foundation (24202107190000687),Foshan Science and Technology Research Project(2220001018608).

\bibliography{iclr2026_conference}
\bibliographystyle{iclr2026_conference}

\clearpage
\subsubsection*{Reproducibility Statement}
To ensure the reproducibility of our results, we have made every effort to provide a comprehensive account of our work. All experimental setups, including dataset pre-processing  and hyperparameter settings, are detailed in the main paper (see \cref{sec:experiments}) and the appendix (see \cref{app:experimental-settings-and-results}). We provide clear pseudocode for our core algorithms, including the optimization of the path parameters (\cref{alg:path_optimization}) and the final density ratio estimation procedure (\cref{alg:estimation}), with the core of our PyTorch implementation detailed in \cref{alg:kmm_code}. Furthermore, a complete implementation of our method, MVP, along with scripts to replicate all experiments, will be made publicly available upon publication.

\subsubsection*{LLM Disclaimer}
During the preparation of this manuscript, a large language model (LLM) was used as a tool to aid the research and writing process. The uses of the LLM were primarily for two purposes: (1) to assist in polishing the writing, including improving grammar, clarity, and phrasing of sentences; and (2) for retrieval and discovery, such as finding related work and summarizing existing literature to help situate our contributions within the broader academic context. The core ideas, theoretical derivations, experimental design, and interpretation of results were conducted by the authors.

\subsubsection*{Ethics Statement}
This paper presents foundational work aimed at advancing the field of machine learning, specifically in the area of density ratio estimation. The research does not involve human subjects, private data, or applications with direct ethical risks. While the direct societal impacts of this theoretical work are limited, we acknowledge that future extensions of our methods, particularly via our open-source codebase, could be applied in various domains. We encourage that any such application in sensitive contexts incorporate a domain-specific ethical review to assess potential impacts.

\clearpage
\appendix
\section*{Appendix}
\addcontentsline{toc}{section}{Appendix}
\renewcommand{\contentsname}{Appendix Contents}
\setcounter{tocdepth}{3} 

\begingroup
\hypersetup{linkcolor=black} 
\setlength{\parskip}{0.5em} 
\tableofcontents
\endgroup

\clearpage

\section{Assumptions and Proofs}

\subsection{Assumptions}
\begin{assumption} \label{assumption:Lipschitz-continuous-logpt}
	For every $\vx$, the map $\tau \mapsto \log p_s\tau(\vx)$ is $L$-Lipschitz on $[0,1]$ with a constant $L$ independent of $\vx$, and $p_\tau(\vx)>0$ for the $\vx$ under consideration. 
	Consequently, $\tau \mapsto \log p_\tau(\vx)$ is absolutely continuous, its (weak) derivative exists for almost every $\tau \in [0,1]$, and satisfies
	\begin{equation}
		\left|\partial_\tau \log p_\tau(\vx)\right| \le L, \quad \text{a.e. } \tau \in [0,1].
	\end{equation}
\end{assumption}

\begin{assumption} \label{assumption:bounded-time-integral-of-time-score}
	Under \cref{assumption:Lipschitz-continuous-logpt}, for any $\vx $ and any $0 \le s \le t \le 1$, the time integral of the time score is bounded by
	\begin{equation}
		\log \frac{p_t(\vx)}{p_s(\vx)} =\int_{s}^{t} \partial_{\tau} \log p_{\tau}(\vx)\mathrm{d}\tau 
		\le \int_{s}^{t} \left| \partial_{\tau} \log p_{\tau}(\vx) \right| \mathrm{d}\tau
		\le L (t-s).
	\end{equation}
\end{assumption}


\subsection{Proof of \cref{lemma:upper-bound-of-the-estimation-error}}
\label{proof:upper-bound-of-the-estimation-error}

\UpperBoundOfTheEstimationError*

\begin{proof}
	For simplicity, we denote $\epsilon(\vx, t)=s^{(t)}(\vx,t) - s_\vtheta^{(t)}(\vx,t)$. We can express the density ratio estimation error $\Delta(\vx)$ by applying the Cauchy-Schwarz inequality:
	\begin{equation}
		\begin{aligned}
			\Delta(\vx) 
			&= \left| \log r(\vx) - \log \hat{r}(\vx) \right|^2 \\
			&= \left| \int_0^1 s^{(t)}(\vx, t)  \mathrm{d}t - \int_0^1 s^{(t)}_{\vtheta}(\vx, t)  \mathrm{d}t \right|^2 \\
			&= \left| \int_0^1 \left(  s^{(t)}(\vx, t) - s^{(t)}_{\vtheta}(\vx, t)\right) \mathrm{d}t \right|^2 = \left| \int_0^1 \epsilon(\vx, t) \mathrm{d}t \right|^2 \\
			&\le \left(\int_0^1 1^2 \mathrm{d}t\right) \cdot \left(\int_0^1 \epsilon(\vx,t)^2\mathrm{d}t\right) \quad (\text{Cauchy-Schwarz inequality}) \\
			&= \int_0^1 \epsilon(\vx,t)^2\mathrm{d}t.
		\end{aligned}
	\end{equation}
	
	Taking expectations over $p_1(\vx)$ and applying Tonelli's theorem (the integrand is nonnegative) gives:
	\begin{equation} \label{eq:expectation-density-ratio-error}
		\begin{aligned}
			\mathbb{E}_{p_{1}(\vx)} \left[\Delta(\vx) \right] 
			\leq \mathbb{E}_{p_1(\vx)} \left[ \int_0^1 \epsilon(\vx,t)^2\mathrm{d}t \right] = \int_0^1 \mathbb{E}_{p_1(\vx)} \left[ \epsilon(\vx, t)^2 \right] \mathrm{d}t.  \quad \left( \epsilon(\vx, t)^2\ge 0 \right)
		\end{aligned}
	\end{equation}
	
	Next, we rewrite the inner expectation with respect to $p_1$ in terms of $p_t$. Using the change of measure:
	\begin{equation}  \label{eq:expectation-weighted-error}
		\begin{aligned}
			\mathbb{E}_{p_1(\vx)}\left[\epsilon(\vx, t)^2\right] 
			&= \int \epsilon(\vx, t)^2  p_1(\vx)  \mathrm{d}\vx = \mathbb{E}_{p_t(\vx)} \left[\epsilon(\vx, t)^2 \cdot \frac{p_1(\vx)}{p_t(\vx)} \right] \\
			&= \mathbb{E}_{p_t(\vx)} \left[\epsilon(\vx, t)^2 \cdot \exp\left(\log p_{1}(\vx) - \log p_t(\vx) \right) \right]  \\
			&= \mathbb{E}_{p_t(\vx)} \left[\epsilon(\vx, t)^2 \cdot \exp\left( \int^{1}_t \partial_s \log p_s(\vx)  \mathrm{d}s \right) \right]  \\
			&\le \mathbb{E}_{p_t(\vx)} \left[\epsilon(\vx, t)^2 \cdot \exp\left( \int^{1}_t \left| \partial_s \log p_s(\vx) \right|  \mathrm{d}s \right) \right] \quad (\star) \\
			&\le \mathbb{E}_{p_t(\vx)} \left[\epsilon(\vx, t)^2 \cdot e^{L (1-t)} \right],  \quad (\star \star)
		\end{aligned}
	\end{equation}
	where $(\star)$ follows since $\epsilon(\vx,t)^2\ge 0$ a.e., and $(\star\star)$ follows under \cref{assumption:Lipschitz-continuous-logpt}.
	
	Substituting \cref{eq:expectation-weighted-error} into \cref{eq:expectation-density-ratio-error} yields the bound
	\begin{equation}
		\begin{aligned}
			\mathbb{E}_{p_{1}(\vx)} \left[\Delta(\vx) \right] &\le \int_0^1 \mathbb{E}_{p_t(\vx)} \left[\epsilon(\vx, t)^2 \cdot e^{L (1-t)} \right] \mathrm{d}t  \\
			&= \int_0^1 e^{L (1-t)} \mathbb{E}_{p_t(\vx)} \left[\epsilon(\vx, t)^2  \right] \mathrm{d}t  \\
			&\le \left( \int_0^1 e^{nL (1-t)} \mathrm{d}t \right)^{\frac{1}{n}} \left(\int_0^1 \left(\mathbb{E}_{p_t(\vx)} \left[\epsilon(\vx, t)^2  \right]\right)^{m} \mathrm{d}t \right)^{\frac{1}{m}}  \\
			&= \left( \frac{e^{nL} - 1}{nL} \right)^{\frac{1}{n}} \left(\int_0^1 \left(\mathbb{E}_{p_t(\vx)} \left[\epsilon(\vx, t)^2  \right]\right)^{m} \mathrm{d}t \right)^{\frac{1}{m}},
		\end{aligned}
	\end{equation}
	where the second inequality follows from Hölder's inequality with $\tfrac{1}{n}+\tfrac{1}{m}=1$.  Different choices of $(n,m)$ yield different bounds:
	
	\begin{itemize}
		\item $(n=\infty, m=1)$:
		\begin{equation}
			\mathbb{E}_{p_{1}(\vx)}[\Delta(\vx)]
			\le e^L \int_0^1 \mathbb{E}_{p_t(\vx)}[\epsilon(\vx,t)^2] \mathrm{d}t,
		\end{equation}
		which preserves the training loss in its original integral form, at the cost of a looser exponential factor $e^L$.
		
		\item $(n=2,m=2)$ (Cauchy--Schwarz):
		\begin{equation}
			\mathbb{E}_{p_{1}(\vx)}[\Delta(\vx)]
			\le \left(\frac{e^{2L}-1}{2L}\right)^{\frac{1}{2}}
			\cdot \left(\int_0^1 \left(\mathbb{E}_{p_t(\vx)}[\epsilon(\vx,t)^2]\right)^2 \mathrm{d}t\right)^{\frac{1}{2}},
		\end{equation}
		which yields a tighter $L$-dependent factor but replaces the training loss with an $L^2$-type form.
		
		\item General $(n,m)$: one may also choose intermediate $(n,m)$ to trade off between the looseness of the exponential factor $\left(\int_0^1 e^{nL(1-t)} \mathrm{d}t\right)^{1/n}$ and the form of the training loss norm.
	\end{itemize}
	
	This finishes the proof.
	
	
\end{proof}

\subsection{Proof of \cref{thm:principled-error-bound}}
\label{appendix:principled-error-bound}

\MinimumPathVariancePrinciple*

\begin{proof}
	We begin from the upper bound established in \cref{lemma:upper-bound-of-the-estimation-error} (for the case $n=\infty, m=1$):
	\begin{equation}\label{eq:upper-bound-from-lemma}
		\mathbb{E}_{p_{1}(\vx)}[\Delta(\vx)]
		\le e^L \int_0^1 \mathbb{E}_{p_t(\vx)}[\epsilon(\vx,t)^2] \mathrm{d}t,
	\end{equation}
	where $\epsilon(\vx, t) = s^{(t)}(\vx,t) - s_{\vtheta}^{(t)}(\vx,t)$ is the error of the score  model.
	The integral term is precisely the ideal, but intractable, time score matching loss  $\mathcal{L}_{\text{TSM}}(\vtheta)$ (see \cref{eq:ideal-time-score-matching-appendix}).
	The key insight, as established in \citet{choi2022density}, is that the ideal loss can be exactly decomposed into two terms:
	\begin{equation}\label{eq:ideal-loss-identity}
		\int_0^1 \mathbb{E}_{p_t(\vx)}[\epsilon(\vx,t)^2] \mathrm{d}t=\mathcal{L}_{\text{TSM}}(\vtheta) = \mathcal{L}_{\text{STSM}}(\vtheta) + \E_{p(t)} \E_{p_t(\vx)} \left| s^{(t)}(\vx, t) \right|^2,
	\end{equation}
	where $s^{(t)}(\vx, t)\triangleq \partial_t \log p_t(\vx)$. For the second term, we can establish the variance identity:
	\begin{equation}  \label{eq:total-variance-equation}
		\begin{aligned}
			\mathbb{E}_{p_t(\vx)} \left| s^{(t)}(\vx, t) \right|^2
			&= \Var_{p_t(\vx)} \left( s^{(t)}(\vx, t) \right) + \left| \mathbb{E}_{p_t(\vx)} \left[ s^{(t)}(\vx, t) \right] \right|^2 \\
			&= \Var_{p_t(\vx)} \left( s^{(t)}(\vx, t) \right) + \left| \int_{\sX} p_t(\vx) \partial_t \log p_t(\vx) \mathrm{d}\vx \right|^2 \\
			&= \Var_{p_t(\vx)} \left( s^{(t)}(\vx, t) \right) + \left| \int_{\sX}  \partial_t  p_t(\vx) \mathrm{d}\vx \right|^2  \\
			&= \Var_{p_t(\vx)} \left( s^{(t)}(\vx, t) \right) + \left| \partial_t \int_{\sX}  p_t(\vx) \mathrm{d}\vx \right|^2  \\
			&=\Var_{p_t(\vx)} \left( s^{(t)}(\vx, t) \right),
		\end{aligned}
	\end{equation}
	since $\partial_t \int_{\sX}  p_t(\vx) \mathrm{d}\vx = \partial_t (1) = 0$. This term is the path variance, which is explicitly path-dependent as it is determined by the time score $s^{(t)}(\vx, t)$.
	
	Substituting \cref{eq:ideal-loss-identity,eq:total-variance-equation} into \cref{eq:upper-bound-from-lemma} completes the proof:
	\begin{equation}
		\begin{aligned}
			\mathbb{E}_{p_{1}(\vx)}[\Delta(\vx)] &\leq e^L \int_0^1 \mathbb{E}_{p_t(\vx)}[\epsilon(\vx,t)^2] \mathrm{d}t \\
			&= e^L \left[ \mathcal{L}_{\text{STSM}}(\vtheta) + \E_{p(t)} \E_{p_t(\vx)} \left| s^{(t)}(\vx, t) \right|^2 \right] \\
			&=e^L \left[ \mathcal{L}_{\text{STSM}}(\vtheta) + \E_{p(t)} \left[ \Var_{p_t(\vx)} \left( s^{(t)}(\vx, t) \right) \right] \right]\\
			&=e^L \left[ \mathcal{L}_{\text{STSM}}(\vtheta) + \int_0^1 \Var_{p_t(\vx)} \left( s^{(t)}(\vx, t) \right) \mathrm{d}t \right].
		\end{aligned}
	\end{equation}
\end{proof}

\subsection{Proof of \cref{thm:analytical_variance}}
\label{appendix:analytical_variance}

\AnalyticalPathVarianceFunctional*

\begin{proof}
	Let $s^{(t)}(\vx, t) \triangleq \partial_t \log p_t(\vx)$ be the true time score. We provide the derivation for each case separately. The core strategy for both is to apply the law of total variance to decompose $\Var_{p_t(\vx)} \left( \partial_t \log p_t(\vx) \right)$ and then analyze each component. 
	
	\paragraph{Case 1 (DI).}
	For the DI case, the interpolant is $\rvx_t = \alpha(t)\rvx_0 + \beta(t)\rvx_1$, where $\rvx_0 \sim p_0(\vx) = \mathcal{N}(\vx; \vzero, \mI_d)$ and $\rvx_1 \sim p_1(\vx)$. The transition kernel for a given $\vx_1$ is $p_t(\vx \mid \vx_1) = \mathcal{N}\left( \vx; \beta(t) \vx_1, \alpha(t)^2 \mI_d \right)$.
	
	By the law of total variance, we can decompose the variance term:
	\begin{equation}
		\begin{aligned}
			&\Var_{p_t(\vx)} \left( \partial_t \log p_t(\vx) \right) \\ =&  \underbrace{\mathbb{E}_{p_1(\vx_1)} \left[ \Var_{p_t(\vx \mid \vx_1)} \left( \partial_t \log p_t(\vx \mid \vx_1) \right) \right]}_{\text{Term (I)}} 
			+ \underbrace{\Var_{p_1(\vx_1)} \left( \mathbb{E}_{p_t(\vx \mid \vx_1)} \left[ \partial_t \log p_t(\vx \mid \vx_1) \right] \right)}_{\text{Term (II)}}.
		\end{aligned}
	\end{equation}
	
	We first compute the inner expectation required for Term (II). The conditional time score is:
	\begin{equation}
		\begin{aligned}
			\partial_t \log p_t(\vx \mid \vx_1)
			&= \partial_t \left[ -\frac{d}{2} \log(2\pi) - d \log \alpha(t) - \frac{\|\vx - \beta(t) \vx_1\|^2}{2\alpha(t)^2} \right] \\
			&= -d \frac{\dot{\alpha}(t)}{\alpha(t)} + \frac{\dot{\alpha}(t)}{\alpha(t)^3} \|\vx - \beta(t) \vx_1\|^2 + \frac{\dot{\beta}(t)}{\alpha(t)^2} (\vx - \beta(t) \vx_1)^\top \vx_1 \\
			&= -d \frac{\dot{\alpha}(t)}{\alpha(t)} + \frac{\dot{\alpha}(t)}{\alpha(t)^3} \|\vz\|^2 + \frac{\dot{\beta}(t)}{\alpha(t)^2} \vz^\top \vx_1,
		\end{aligned}
		\label{eq:conditional_time_score}
	\end{equation}
	where we have defined the centered sample $\vz = \vx - \beta(t)\vx_1 \sim \mathcal{N}(\vz; \vzero, \alpha(t)^2 \mI_d)$.
	
	Using the moments $\mathbb{E}[\vz] = \vzero$ and $\mathbb{E}[\|\vz\|^2] = d \alpha(t)^2$, the inner expectation simplifies to:
	\begin{equation}
		\mathbb{E}_{p_t(\vx \mid \vx_1)} \left[ \partial_t \log p_t(\vx \mid \vx_1) \right] = - d \frac{\dot{\alpha}(t)}{\alpha(t)} + \frac{\dot{\alpha}(t)}{\alpha(t)^3} \left(d \alpha(t)^2\right)  + \frac{\dot{\beta}(t)}{\alpha(t)^2} \mathbb{E}[\vz]^\top \vx_1 = 0.
	\end{equation}
	
	Since the conditional expectation is zero for any $\vx_1$, its variance, Term (II), is also zero. \emph{The total variance thus simplifies entirely to Term (I)}. Furthermore, because the conditional mean of the time score is zero, its conditional variance equals its conditional second moment:
	\begin{equation}
		\begin{aligned}
			\Var_{p_t(\vx \mid \vx_1)}& \left( \partial_t \log p_t(\vx \mid \vx_1) \right) 
			= \mathbb{E}_{p_t(\vx \mid \vx_1)} \left[ \left( \partial_t \log p_t(\vx \mid \vx_1) \right)^2 \right]  \\
			=& \mathbb{E}_{p_t(\vx \mid \vx_1)} \left[ \left( \frac{\dot{\alpha}(t)}{\alpha(t)^3} \|\vz\|^2 - d \frac{\dot{\alpha}(t)}{\alpha(t)} + \frac{\dot{\beta}(t)}{\alpha(t)^2} \vz^\top \vx_1 \right)^2 \right]  \\
			=& \left( \frac{\dot{\alpha}(t)}{\alpha(t)^3} \right)^2 \mathbb{E}[\|\vz\|^4] + \left(d\frac{\dot{\alpha}(t)}{\alpha(t)}\right)^2 - 2d\frac{\dot{\alpha}(t)^2}{\alpha(t)^4}\mathbb{E}[\|\vz\|^2] + \left(\frac{\dot{\beta}(t)}{\alpha(t)^2}\right)^2 \mathbb{E}[(\vz^\top \vx_1)^2]  \\
			=& \frac{\dot{\alpha}(t)^2}{\alpha(t)^6} d(d+2)\alpha(t)^4 + d^2\frac{\dot{\alpha}(t)^2}{\alpha(t)^2} - 2d\frac{\dot{\alpha}(t)^2}{\alpha(t)^4}(d\alpha(t)^2) + \frac{\dot{\beta}(t)^2}{\alpha(t)^4}(\alpha(t)^2\|\vx_1\|^2)  \\
			=& \frac{\dot{\alpha}(t)^2}{\alpha(t)^2} \left(d(d+2) + d^2 - 2d^2\right) + \frac{\dot{\beta}(t)^2}{\alpha(t)^2} \|\vx_1\|^2  \\
			=& \frac{2d \dot{\alpha}(t)^2}{\alpha(t)^2} + \frac{\dot{\beta}(t)^2}{\alpha(t)^2} \|\vx_1\|^2.
		\end{aligned}
	\end{equation}
	Here, we use the standard Gaussian moments: $\mathbb{E}[\|\vz\|^4] = d(d+2)\alpha(t)^4$, $\mathbb{E}[(\vz^\top \vx_1)^2] = \alpha(t)^2 \|\vx_1\|^2$, and that all cross-terms involving odd moments of $\vz$ are zero.
	
	Finally, substituting this result into (I) and taking the expectation over $p_1(\vx_1)$ yields the total variance:
	\begin{equation}
		\begin{aligned}
			\Var_{p_t(\vx)} \left( \partial_t \log p_t(\vx) \right) &= \mathbb{E}_{p_1(\vx_1)} \left[ \Var_{p_t(\vx \mid \vx_1)} \left( \partial_t \log p_t(\vx \mid \vx_1) \right) \right] \\
			&= \mathbb{E}_{p_1(\vx_1)} \left[ \frac{2d \dot{\alpha}_t^2}{\alpha_t^2} + \frac{\dot{\beta}_t^2}{\alpha_t^2} \|\vx_1\|^2 \right] \\
			&= \frac{2d \dot{\alpha}_t^2}{\alpha_t^2} + \frac{\dot{\beta}_t^2}{\alpha_t^2} \mathbb{E}_{p_1(\vx_1)} \left[ \|\vx_1\|^2 \right].
		\end{aligned}
		\label{eq:variance_case1_final}
	\end{equation}
	
	\paragraph{Case 2 (DDBI).}
	For the DDBI case, the interpolant is $\rvx_t = \alpha(t) \rvx_0 + \beta(t) \rvx_1 + \sigma_t \rvz$, with $\rvx_0 \sim p_0$, $\rvx_1 \sim p_1$, and noise schedule $\sigma_t^2 = t(1-t)\gamma^2 + (\alpha(t)^2 + \beta(t)^2)\varepsilon$. The transition kernel conditioned on both endpoints is $p_t(\vx \mid \vx_0, \vx_1) = \mathcal{N}\left( \vx; \vmu_t, \sigma_t^2 \mI_d \right)$, where $\vmu_t = \alpha(t)\vx_0 + \beta(t)\vx_1$.
	
	The law of total variance is now applied by conditioning on the pair $(\vx_0, \vx_1)$:
	\begin{equation}
		\begin{aligned}
			\Var_{p_t(\vx)} \left( \partial_t \log p_t(\vx) \right) &=  \underbrace{\mathbb{E}_{p_0(\vx_0) p_1(\vx_1)} \left[ \Var_{p_t(\vx \mid \vx_0, \vx_1)} \left( \partial_t \log p_t(\vx \mid \vx_0, \vx_1) \right) \right]}_{\text{Term (I)}} \\
			&\quad + \underbrace{\Var_{p_0(\vx_0) p_1(\vx_1)} \left( \mathbb{E}_{p_t(\vx \mid \vx_0, \vx_1)} \left[ \partial_t \log p_t(\vx \mid \vx_0, \vx_1) \right] \right)}_{\text{Term (II)}}.
		\end{aligned}
	\end{equation}
	
	Let $\vmu_t = \alpha(t)\vx_0 + \beta(t)\vx_1$ and $\dot{\vmu}_t = \dot{\alpha}(t)\vx_0 + \dot{\beta}(t)\vx_1$. The conditional time score is:
	\begin{equation}
		\begin{aligned}
			\partial_t \log p_t(\vx \mid \vx_0, \vx_1)
			&= \partial_t \left[ -\frac{d}{2} \log(2\pi\sigma_t^2) - \frac{\|\vx - \vmu_t\|^2}{2\sigma_t^2} \right] \\
			&= -\frac{d}{2} \frac{\dot{\sigma}_t^2}{\sigma_t^2} + \frac{\|\vx - \vmu_t\|^2}{2(\sigma_t^2)^2}(\dot{\sigma}_t^2) + \frac{(\vx - \vmu_t)^\top \dot{\vmu}_t}{\sigma_t^2}.
		\end{aligned}
	\end{equation}
	
	Letting $\vz = \vx - \vmu_t \sim \mathcal{N}(\vzero, \sigma_t^2 \mI_d)$ with $\mathbb{E}[\|\vz\|^2] = d\sigma_t^2$, we take the conditional expectation:
	\begin{equation}
		\begin{aligned}
			\mathbb{E}_{p_t(\vx \mid \vx_0, \vx_1)} \left[ \partial_t \log p_t(\vx \mid \vx_0, \vx_1) \right]
			&= -\frac{d}{2}\frac{\dot{\sigma}_t^2}{\sigma_t^2} + \frac{\mathbb{E}[\|\vz\|^2]}{2(\sigma_t^2)^2}(\dot{\sigma}_t^2) + \frac{\mathbb{E}[\vz]^\top \dot{\vmu}_t}{\sigma_t^2} \\
			&= -\frac{d}{2}\frac{\dot{\sigma}_t^2}{\sigma_t^2} + \frac{d\sigma_t^2}{2(\sigma_t^2)^2}(\dot{\sigma}_t^2) = 0.
		\end{aligned}
	\end{equation}
	As in the DI case, the conditional expectation is zero, and thus Term (II) is zero and the total variance reduces to the Term I. Similarly, the conditional variance equals the conditional second moment:
	
	\begin{align}
		\Var_{p_t(\vx \mid \vx_0, \vx_1)} & \left( \partial_t \log p_t(\vx \mid \vx_0, \vx_1) \right) = \mathbb{E}_{p_t(\vx \mid \vx_0, \vx_1)} \left[ \left( \partial_t \log p_t(\vx \mid \vx_0, \vx_1) \right)^2 \right] \nonumber \\
		&= \left(-\frac{d}{2}\frac{\dot{\sigma}_t^2}{\sigma_t^2}\right)^2 + \mathbb{E}\left[\left(\frac{\|\vz\|^2}{2(\sigma_t^2)^2}(\dot{\sigma}_t^2)\right)^2\right] + \mathbb{E}\left[\left(\frac{\vz^\top \dot{\vmu}_t}{\sigma_t^2}\right)^2\right] - \frac{d^2}{2}\frac{(\dot{\sigma}_t^2)^2}{(\sigma_t^2)^2} \nonumber \quad(\star) \\
		&= \frac{d^2}{4}\frac{(\dot{\sigma}_t^2)^2}{(\sigma_t^2)^2} - \frac{d^2}{2}\frac{(\dot{\sigma}_t^2)^2}{(\sigma_t^2)^2} + \frac{(\dot{\sigma}_t^2)^2}{4(\sigma_t^2)^4}\mathbb{E}[\|\vz\|^4] + \frac{1}{(\sigma_t^2)^2}\mathbb{E}[(\vz^\top \dot{\vmu}_t)^2] \nonumber \\
		&=-\frac{d^2}{4}\frac{(\dot{\sigma}_t^2)^2}{(\sigma_t^2)^2} + \frac{(\dot{\sigma}_t^2)^2 d(d+2)(\sigma_t^2)^2}{4(\sigma_t^2)^4} + \frac{\|\dot{\vmu}_t\|^2 \sigma_t^2}{(\sigma_t^2)^2} \nonumber \quad(\star\star) \\
		&= \left( -\frac{d^2}{4} + \frac{d(d+2)}{4} \right) \frac{(\dot{\sigma}_t^2)^2}{(\sigma_t^2)^2} + \frac{\|\dot{\vmu}_t\|^2}{\sigma_t^2} \nonumber \\
		&= \frac{(-d^2 + d^2 + 2d)}{4} \frac{(\dot{\sigma}_t^2)^2}{(\sigma_t^2)^2} + \frac{\|\dot{\vmu}_t\|^2}{\sigma_t^2} \nonumber \\
		&= \frac{d}{2} \frac{(\dot{\sigma}_t^2)^2}{(\sigma_t^2)^2} + \frac{\|\dot{\alpha}(t)\vx_0 + \dot{\beta}(t)\vx_1\|^2}{\sigma_t^2},
	\end{align}
	where $(\star)$ holds because the cross-terms involving odd moments of the centered Gaussian $\vz$ are zero and the rest term satisfies $\mathbb{E}[2AB] = 2A \cdot \mathbb{E}[B] = \frac{d^2}{2}\frac{(\dot{\sigma}_t^2)^2}{(\sigma_t^2)^2}$ with $A=\frac{d}{2} \frac{\dot{\sigma}_t^2}{\sigma_t^2}, B=\frac{\|\vz\|^2(\dot{\sigma}_t^2)}{2(\sigma_t^2)^2}$. $(\star\star)$ holds with the moments $\mathbb{E}[\|\vz\|^4] = d(d+2)(\sigma_t^2)^2$ and the quadratic form $\mathbb{E}[(\vz^\top \dot{\vmu}_t)^2] = \sigma_t^2 \|\dot{\vmu}_t\|^2$.

	Taking the final expectation over $p_0(\vx_0)$ and $p_1(\vx_1)$ gives the total variance:
	\begin{equation}
		\Var_{p_t(\vx)} \left( \partial_t \log p_t(\vx) \right) = \frac{d}{2} \frac{(\dot{\sigma}_t^2)^2}{(\sigma_t^2)^2} + \frac{\mathbb{E}_{p_0, p_1}\left\| \dot{\alpha}(t)\rvx_0 + \dot{\beta}(t)\rvx_1\right\|^2}{\sigma_t^2}.
	\end{equation}
	This completes the proof for the DDBI case. 
\end{proof}

\subsection{Control of the Lipschitz Constant via Minimum Variance Path Principle}

\begin{remark}
	\label{rem:control_of_L}
	The connection between path variance $\mathcal{V}[\alpha, \beta]$ and the Lipschitz constant $L$ in \cref{assumption:Lipschitz-continuous-logpt} is primarily \emph{empirical} and \emph{heuristic} in nature, which is why we present it as a remark rather than a formal proposition. In practice, minimizing $\mathcal{V}$ tends to produce smoother paths, and empirically correlates with reduced values of $L$.
\end{remark}

\begin{proof}[Argument]
    Our heuristic is motivated by the observation that both $\mathcal{V}$ and the time score magnitude depend on the path derivatives $\dot{\alpha}(t)$ and $\dot{\beta}(t)$. Specifically, the closed-form expressions for $\mathcal{V}$ in \cref{thm:analytical_variance} are explicit functionals of squared path velocities (e.g., $\dot{\alpha}(t)^2$, $\dot{\beta}(t)^2$, or $\|\dot{\alpha}(t)\vx_0 + \dot{\beta}(t)\vx_1\|^2$). Thus, minimizing $\mathcal{V}$ discourages paths with large instantaneous velocities.

    Because our KMM parameterization yields smooth, bounded paths (see \cref{eq:alpha_and_derivatives_from_cdf}), extreme velocity spikes are structurally unlikely. In such a regular function class, minimizing the integrated squared velocity $\mathcal{V}$ \emph{approximately suppresses} the maximum path velocity, which in turn appears to help keep $L$ from becoming excessively large in practice (see \cref{assumption:Lipschitz-continuous-logpt} for detailed derivations).

    This behavior is consistently observed in our experiments: the optimized paths (see \cref{fig:path_comparison_geometrically_pathological,fig:path_comparison_high_discre,fig:path_comparison_2dtoy,fig:path_comparison_tabular} in \cref{app:optimized-path-schedules}) are visibly smooth. We emphasize that this is a \emph{phenomenological observation}, not a theoretical guarantee, and further analysis of the precise relationship between path geometry and $L$ remains an open question.
\end{proof}

\subsection{Proof of the Equivalence of Time Score Matching Objectives}

\begin{proposition}\label{prop:quivalence-of-time-score-matching-objectives}
	Let $p_t$ be the marginal distribution induced by the path schedule $\{\alpha(t), \beta(t)\}_{t\in[0,1]}$. For any fixed schedule, all three objectives share the same optimum:
	$s_{\vtheta^\star}^{(t)}(\vx,t)=\partial_t \log p_t(\vx)$.
	\begin{itemize}
		\item Time Score Matching (TSM):
		\begin{equation}
			\mathcal{L}_{\textnormal{TSM}}(\vtheta) = \mathbb{E}_{p(t)p_t(\vx_t)} \left| \partial_t \log p_t(\vx_t) - s_{\vtheta}^{(t)}(\vx_t, t) \right|^2.
		\end{equation}
		
		\item Conditional Time Score Matching (CTSM):
		\begin{equation}
			\mathcal{L}_{\textnormal{CTSM}}(\vtheta) = \mathbb{E}_{p(t)p_t(\vx_t)p(\vy)} \left| \partial_t \log p_t(\vx_t \mid \vy) - s_{\vtheta}^{(t)}(\vx_t, t) \right|^2,
		\end{equation}
		where $\vy$ is an instance of the conditioning variable $\rvy$ ($\rvy = \rvx_1$ for the DI, $\rvy = (\rvx_0, \rvx_1)$ for the DDBI).
		
		\item Sliced Time Score Matching (STSM):
		\begin{equation}
			\mathcal{L}_{\textnormal{STSM}}(\vtheta)\! = \! \E_{p(t)p_t(\vx)}\left[\partial_t s_{\vtheta}^{(t)}(\vx,t) \! + \! \frac{1}{2}s_{\vtheta}^{(t)}(\vx,t)^2\right] \! + \! \mathcal{L}_{\textnormal{boundary}},
		\end{equation}
		where $\mathcal{L}_{\textnormal{boundary}}=\E_{p_0(\vx_0)p_1(\vx_1)}\left[s_{\vtheta}^{(t)}(\vx_0,0)\! - \! s_{\vtheta}^{(t)}(\vx_1,1) \right]$.
	\end{itemize}
\end{proposition}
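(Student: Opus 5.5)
The plan is to show that each of $\mathcal{L}_{\textnormal{CTSM}}$ and $\mathcal{L}_{\textnormal{STSM}}$ differs from $\mathcal{L}_{\textnormal{TSM}}$ by a quantity that does not depend on $\vtheta$ (for a fixed schedule). All three objectives then share a minimizer. $\mathcal{L}_{\textnormal{TSM}}$ is a nonnegative squared error that vanishes exactly at $s_{\vtheta}^{(t)}(\vx,t)=\partial_t\log p_t(\vx)$ ($p(t)p_t$-a.e., assuming sufficient model capacity), so that function is the common optimum. Throughout I read the expectation in $\mathcal{L}_{\textnormal{CTSM}}$ as over the joint $p(t)p(\vy)p_t(\vx_t\mid\vy)$, which is what the conditional construction intends.

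\textbf{CTSM.} Expand both squares. The terms $\E[s_{\vtheta}^{(t)}(\vx_t,t)^2]$ agree because the $\vx_t$-marginal of $p(\vy)p_t(\vx_t\mid\vy)$ is $p_t$. For the cross term, the key identity is
\begin{equation*}
\E_{p(\vy\mid \vx_t)}\left[\partial_t\log p_t(\vx_t\mid\vy)\right]=\frac{\int p(\vy)\,\partial_t p_t(\vx_t\mid\vy)\,\mathrm{d}\vy}{p_t(\vx_t)}=\partial_t\log p_t(\vx_t),
\end{equation*}
which uses that $p(\vy)$ is $t$-independent and that $\partial_t$ can be exchanged with $\int\mathrm{d}\vy$. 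By the tower property the cross terms coincide. Hence $\mathcal{L}_{\textnormal{CTSM}}-\mathcal{L}_{\textnormal{TSM}}=\E\,|\partial_t\log p_t(\vx_t\mid\vy)|^2-\E\,|\partial_t\log p_t(\vx_t)|^2$, which is independent of $\vtheta$. The Gaussian kernels of the DI and DDBI make these moments finite, as computed in the proof of \cref{thm:analytical_variance}.

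\textbf{STSM.} Take $p(t)$ uniform on $(0,1)$, or absorb the weight into the integral. Expanding $\mathcal{L}_{\textnormal{TSM}}$ gives $\int_0^1\E_{p_t}[(\partial_t\log p_t)^2]\mathrm{d}t-2\int_0^1\!\int \partial_t p_t(\vx)\, s_{\vtheta}^{(t)}(\vx,t)\,\mathrm{d}\vx\,\mathrm{d}t+\int_0^1\E_{p_t}[(s_{\vtheta}^{(t)})^2]\mathrm{d}t$. Integrating the middle term by parts in $t$ for each fixed $\vx$ gives
\begin{equation*}
\int_0^1 \partial_t p_t\, s_{\vtheta}^{(t)}\,\mathrm{d}t=p_1 s_{\vtheta}^{(t)}(\cdot,1)-p_0 s_{\vtheta}^{(t)}(\cdot,0)-\int_0^1 p_t\,\partial_t s_{\vtheta}^{(t)}\,\mathrm{d}t.
\end{equation*}
Substituting yields $\mathcal{L}_{\textnormal{TSM}}=2\,\mathcal{L}_{\textnormal{STSM}}+\int_0^1\E_{p_t}|\partial_t\log p_t|^2\mathrm{d}t$. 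The factor $2$ comes from the halved normalization used in this proposition, versus \cref{eq:tractable-time-score-matching}, and does not move the argmin. The additive term is exactly the path variance of \cref{thm:principled-error-bound}.

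\textbf{Main obstacle.} The delicate step is justifying the interchanges: Fubini/Tonelli to swap $\int\mathrm{d}\vx$ and $\int\mathrm{d}t$, and differentiation under the integral sign in the CTSM identity. I would impose integrability and regularity conditions: $s_{\vtheta}^{(t)}$ is $C^1$ in $t$ with $\partial_t s_{\vtheta}^{(t)}$ and $(s_{\vtheta}^{(t)})^2$ integrable under $p(t)p_t$, and $p_t$ is absolutely continuous in $t$ (cf.\ \cref{assumption:Lipschitz-continuous-logpt}). For the DI case, the endpoint behavior as $\alpha(t)\to 0$, where $p_t$ degenerates toward $p_1$, needs particular care. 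I would handle it by integrating over $[\delta,1-\delta]$ and letting $\delta\to0$ using continuity of the boundary expectations.
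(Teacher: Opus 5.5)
\textbf{Assessment.} Your route is the same as the paper's: show that $\mathcal{L}_{\textnormal{CTSM}}$ and $\mathcal{L}_{\textnormal{STSM}}$ each differ from $\mathcal{L}_{\textnormal{TSM}}$ by a $\vtheta$-independent term. For CTSM you use ``the averaged conditional time score is the marginal time score''. For STSM you integrate by parts in $t$, which the paper only cites. In two places your version is more accurate than the paper's. First, reading the CTSM expectation as the joint and averaging over the posterior $p(\vy\mid\vx_t)$ is necessary. The paper writes $\mu(\vx_t,t)=\E_{p(\vy)}[\partial_t\log p_t(\vx_t\mid\vy)]$ with the prior, which is not $\partial_t\log p_t(\vx_t)$ in general. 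Second, with this proposition's halved normalization the correct identity is your $\mathcal{L}_{\textnormal{TSM}}=2\mathcal{L}_{\textnormal{STSM}}+V_{\textnormal{marg}}$; the paper states it without the factor $2$. Do keep $p(t)$ uniform, though. ``Absorbing the weight'' is not free: for non-uniform $p(t)$ the integration by parts also differentiates $p(t)$. The STSM as written then has interior minimizer $\partial_t\log p_t+\dot p(t)/p(t)$ rather than the time score.

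\textbf{The step that fails.} Your finiteness claim does not hold in the DI case. The proof of \cref{thm:analytical_variance} computes
$\E_{p_1(\vx_1)p_t(\vx_t\mid\vx_1)}\bigl|\partial_t\log p_t(\vx_t\mid\vx_1)\bigr|^2=\frac{2d\,\dot\alpha(t)^2}{\alpha(t)^2}+\frac{\dot\beta(t)^2}{\alpha(t)^2}\E_{p_1}\|\vx_1\|^2$.
This is finite for each $t<1$ but never integrable on $(0,1)$ when $\alpha(1)=0$. Since $\alpha(0)=1$, Cauchy--Schwarz gives $|\log\alpha(t)|\le\bigl(\int_0^t\dot\alpha(u)^2/\alpha(u)^2\,\mathrm{d}u\bigr)^{1/2}$, and the left side diverges as $t\to1$. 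This is consistent with the ``Divergent'' DI column of \cref{tab:path_summary}, and it applies equally to the KMM schedules.

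Set $c=\partial_t\log p_t(\vx_t\mid\vy)$ and $s=s_{\vtheta}^{(t)}(\vx_t,t)$. The bound $c^2\le2(c-s)^2+2s^2$ shows that under uniform $p(t)$, $\mathcal{L}_{\textnormal{CTSM}}(\vtheta)=+\infty$ for every square-integrable $s_{\vtheta}^{(t)}$. So your ``constant'' difference is $+\infty$, and CTSM no longer singles out the time score.

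The blow-up comes from the conditional kernel $\mathcal{N}(\beta(t)\vx_1,\alpha(t)^2\mI_d)$ collapsing, not from the marginal $p_t\to p_1$. That is why truncating to $[\delta,1-\delta]$ and letting $\delta\to0$ can recover the STSM relation (when $\int_0^1\E_{p_t}|\partial_t\log p_t|^2\,\mathrm{d}t<\infty$) but never the CTSM one.

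Combined with the remark on $p(t)$ above, no $p(t)$ on all of $(0,1)$ makes the three DI objectives, as written, finite with the time score as common minimizer. The DI case therefore has to be stated on $[\delta,1-\delta]$, with uniform $p(t)$ there and STSM boundary terms at $t=\delta$ and $t=1-\delta$; there your argument goes through unchanged. The DDBI with $\varepsilon>0$ avoids the issue, since $\sigma_t^2\ge\varepsilon/2$. Finiteness then needs only $\dot\alpha,\dot\beta\in L^2(0,1)$ and finite second moments of $p_0,p_1$. The paper's proof does not address this point at all.
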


\begin{proof}
	We prove that all three objectives are minimized when $s_{\vtheta}^{(t)}(\vx_t, t) = \partial_t \log p_t(\vx_t)$ by showing their differences are constants independent of $\vtheta$.
	
	\paragraph{Part 1: Relating CTSM and TSM.}
	We begin with the CTSM objective. For fixed $t$, the inner expectation in CTSM can be expanded using the identity $\E[\vz^2] = \Var(\vz) + \E[\vz]^2$:
	\begin{equation}
		\begin{aligned}
			\mathcal{L}_{\text{CTSM}}(\vtheta) &= \E_{p(t)} \E_{p_t(\vx_t)} \left[ \E_{p(\vy)} \left| \partial_t \log p_t(\vx_t | \vy) - s_{\vtheta}^{(t)}(\vx_t, t) \right|^2 \right].
		\end{aligned}
	\end{equation}
	
	For a given $t$ and sample $\vx_t$, we analyze the inner expectation over $\vy$:
	\begin{equation}
		\begin{aligned}
			&\mathbb{E}_{p(\vy)} \left[ \left| \partial_t \log p_t(\vx_t \mid \vy) - s_{\vtheta}^{(t)}(\vx_t, t) \right|^2 \right] \\
			&= \mathbb{E}_{p(\vy)} \left[ \left| \partial_t \log p_t(\vx_t \mid \vy) - \mu(\vx_t, t) + \mu(\vx_t, t) - s_{\vtheta}^{(t)}(\vx_t, t) \right|^2 \right] \quad (\star)\\
			&= \mathbb{E}_{p(\vy)} \left[ \left| \partial_t \log p_t(\vx_t \mid \vy) - \mu(\vx_t, t) \right\|^2 \right] + \left| \mu(\vx_t, t) - s_{\vtheta}^{(t)}(\vx_t, t) \right\|^2 \\
			&= \Var_{p(\vy)} \left( \partial_t \log p_t(\vx_t \mid \vy) \right) + \left| \mu(\vx_t, t) - s_{\vtheta}^{(t)}(\vx_t, t) \right|^2,
		\end{aligned}
	\end{equation}
	where $(\star)$ holds under $\mu(\vx_t, t) = \mathbb{E}_{p(\vy)}[\partial_t \log p_t(\vx_t \mid \vy)]$.
	The key insight is that the conditional expectation $\mu(\vx_t, t)$ yields the marginal score:
	\begin{equation}
		\mu(\vx_t, t) = \mathbb{E}_{p(\vy)} \left[ \partial_t \log p_t(\vx_t \mid \vy) \right] = \partial_t \log p_t(\vx_t).
	\end{equation}
	
	Substituting this back yields the final relationship:
	\begin{equation}
		\mathcal{L}_{\text{CTSM}}(\vtheta) = \underbrace{\mathbb{E}_{p(t)p_t(\vx_t)}  \left| \partial_t \log p_t(\vx_t) - s_{\vtheta}^{(t)}(\vx_t, t) \right|^2 }_{\mathcal{L}_{\text{TSM}}(\vtheta)} + \underbrace{\mathbb{E}_{p(t)p_t(\vx_t)} \left[ \Var_{p(\vy)} \left( \partial_t \log p_t(\vx_t \mid \vy) \right) \right]}_{V_{\text{cond}}}.
	\end{equation}
	
	Since $V_{\text{cond}}$ is independent of $\vtheta$, both objectives share the same minimizer.

	\paragraph{Part 2: Relating STSM and TSM.}
	The STSM objective is derived via integration by parts from the TSM objective. The standard derivation shows that \citep{choi2022density}:
	\begin{equation}
		\mathcal{L}_{\text{TSM}}(\vtheta) = \mathcal{L}_{\text{STSM}}(\vtheta) + \underbrace{\E_{p(t)} \E_{p_t(\vx_t)} \left| \partial_t \log p_t(\vx_t) \right|^2 }_{V_{\text{marg}}}.
	\end{equation}
	
	Since $V_{\text{marg}}$ is independent of $\vtheta$ and is a constant for a given path schedule, the optimum is shared.
	
	Finally, since $\mathcal{L}_{\text{CTSM}}(\vtheta) = \mathcal{L}_{\text{TSM}}(\vtheta) + \text{const}$ and $\mathcal{L}_{\text{STSM}}(\vtheta) = \mathcal{L}_{\text{TSM}}(\vtheta) + \text{const}$, all three objectives are minimized by the same model $s_{\vtheta^*}^{(t)}(\vx_t, t) = \partial_t \log p_t(\vx_t)$.
\end{proof}

\section{Experimental Settings and More Results}
\label{app:experimental-settings-and-results}
\subsection{Formulations and Variances of Common Path Schedules}
\label{app:path_schedules_and_variances}

This section provides the precise mathematical formulations for the common, fixed probability path schedules $(\alpha(t), \beta(t))$ and their time derivatives $(\dot{\alpha}(t), \dot{\beta}(t))$ used in our comparative experiments, alongside their corresponding analytical variance expressions derived from \cref{thm:analytical_variance}. For brevity in the variance expressions, we define the data-dependent constants $C_0 = \mathbb{E}_{p_0(\vx_0)}[\|\vx_0\|^2]$, $C_1 = \mathbb{E}_{p_1(\vx_1)}[\|\vx_1\|^2]$, and $C_{01} = \mathbb{E}_{p_0, p_1}[\vx_0^\top\vx_1]$. The summary of path characteristics and associated variance properties is presented in \cref{tab:path_summary}.
\begin{table}[ht]
	\centering
	\setlength{\tabcolsep}{1mm}
	\caption{Summary of path characteristics and associated variance properties.}
	\label{tab:path_summary}
	\begin{tabular}{lcccc}
		\toprule
		\textbf{Path Schedule} & \textbf{Constraint} & $\mathcal{V}_{\text{DI}}[\alpha, \beta]$ & $\mathcal{V}_{\text{DDBI}}[\alpha, \beta]$ & \textbf{Key Observation} \\
		\midrule
		Linear & $\alpha(t)+\beta(t)=1$ & Divergent & Finite & DI is ill-posed; DDBI is stable. \\
		VP & $\alpha(t)^2+\beta(t)^2=1$ & Divergent & Finite & DI diverges as $t \to 1$. \\
		Cosine & $\alpha(t)^2+\beta(t)^2=1$ & Divergent & Finite & Smoother path, but DI still diverges. \\
		Föllmer & $\alpha(t)^2+\beta(t)^2=1$ & Divergent & Finite & Severe singularity in $\dot{\alpha}(t)$ at $t=1$. \\
		Trigonometric & $\alpha(t)^2+\beta(t)^2=1$ & Divergent & Finite & Singularity from $\tan^2$ term at $t=1$. \\
		\bottomrule
	\end{tabular}
\end{table}

\paragraph{Linear Path.}
The linear interpolant, a common baseline in stochastic interpolant literature \citep{albergo2023stochastic}, defines a direct affine path satisfying $\alpha(t) + \beta(t) = 1$.
\begin{equation}
	\begin{alignedat}{2}
		\alpha(t) &= 1 - t  &\qquad \beta(t) &= t, \\
		\dot{\alpha}(t) &= -1 &\qquad \dot{\beta}(t) &= 1.
	\end{alignedat}
\end{equation}
The corresponding DI variance integral, $\mathcal{V}_{\text{DI}}[\alpha, \beta] = \int_0^1 (2d + C_1)(1-t)^{-2} \mathrm{d}t$, diverges due to the singularity at $t=1$. For the DDBI case, the variance is given by:
\begin{equation}
	\mathcal{V}_{\text{DDBI}}[\alpha, \beta] = \int_0^1 \left( \frac{d}{2} \frac{(\dot{\sigma}_t^2)^2}{(\sigma_t^2)^2} + \frac{C_0 + C_1 - 2C_{01}}{\sigma_t^2} \right) \mathrm{d}t,
\end{equation}
where the noise schedule is $\sigma_t^2 = t(1-t)\gamma^2 + ((1-t)^2+t^2)\varepsilon$.

\paragraph{VP Path.}
This path is inspired by the variance-preserving (VP) SDE schedule \citep{ho2020denoising, song2020score}, satisfying the spherical constraint $\alpha(t)^2 + \beta(t)^2 = 1$. We use the standard diffusion constants $\beta_0 = 0.1$ and $\beta_1 = 20$.
\begin{equation}
	\begin{alignedat}{2}
		\alpha(t) &= \exp\left( -0.25 t^2 (\beta_1 - \beta_0) - 0.5 t \beta_0 \right) &\qquad \beta(t) &= \sqrt{1 - \alpha(t)^2}, \\
		\dot{\alpha}(t) &= -0.5 \left( (\beta_1 - \beta_0)t + \beta_0 \right) \alpha(t) &\qquad \dot{\beta}(t) &= \frac{\alpha(t)^2}{\beta(t)} \left(0.5 ((\beta_1 - \beta_0)t + \beta_0) \right).
	\end{alignedat}
\end{equation}
Due to the term $1/\alpha(t)^2$ which diverges as $t \to 1$, the DI variance is infinite. The DDBI variance is finite and given by:
\begin{equation}
	\mathcal{V}_{\text{DDBI}}[\alpha, \beta] = \int_0^1 \left( \frac{d}{2} \frac{\gamma^4(1-2t)^2}{(\sigma_t^2)^2} + \frac{\dot{\alpha}(t)^2 C_0 + \dot{\beta}(t)^2 C_1 + 2\dot{\alpha}(t)\dot{\beta}(t)C_{01}}{\sigma_t^2} \right) \mathrm{d}t,
\end{equation}
where the spherical constraint simplifies the noise schedule to $\sigma_t^2 = t(1-t)\gamma^2 + \varepsilon$.

\paragraph{Cosine Path.}
The cosine schedule \citep{nichol2021improved} also satisfies the spherical constraint. It is defined via $\bar{\alpha}(t) = \cos\left(\frac{t+s}{1+s} \frac{\pi}{2}\right) / \cos\left(\frac{s}{1+s} \frac{\pi}{2}\right)$, with $s = 0.008$.
\begin{equation}
	\begin{alignedat}{2}
		\alpha(t) &= \sqrt{\bar{\alpha}(t)} &\qquad \beta(t) &= \sqrt{1 - \bar{\alpha}(t)}, \\
		\dot{\alpha}(t) &= \frac{\dot{\bar{\alpha}}(t)}{2\alpha(t)} &\qquad \dot{\beta}(t) &= -\frac{\dot{\bar{\alpha}}(t)}{2\beta(t)}.
	\end{alignedat}
\end{equation}
The DI variance for this path is divergent. The DDBI variance takes the same functional form as the VP case, substituting the appropriate derivatives.

\paragraph{Föllmer Path.}
This path is derived from Föllmer's construction of a Brownian bridge \citep{follmer2005entropy, chen2024probabilistic} and satisfies the spherical constraint.
\begin{equation}
	\begin{alignedat}{2}
		\alpha(t) &= \sqrt{1 - t^2} &\qquad \beta(t) &= t, \\
		\dot{\alpha}(t) &= -\frac{t}{\alpha(t)} &\qquad \dot{\beta}(t) &= 1.
	\end{alignedat}
\end{equation}
The DI variance, $\mathcal{V}_{\text{DI}}[\alpha, \beta] = \int_0^1 \left( \frac{2d t^2}{(1-t^2)^2} + \frac{C_1}{1-t^2} \right) \mathrm{d}t$, diverges severely at $t=1$. The DDBI variance is given by:
\begin{equation}
	\mathcal{V}_{\text{DDBI}}[\alpha, \beta] = \int_0^1 \left( \frac{d}{2} \frac{\gamma^4(1-2t)^2}{(\sigma_t^2)^2} + \frac{\frac{t^2}{1-t^2}C_0 + C_1 - \frac{2t}{\sqrt{1-t^2}}C_{01}}{\sigma_t^2} \right) \mathrm{d}t.
\end{equation}

\paragraph{Trigonometric Path.}
This path follows a circular trajectory in the $(\alpha, \beta)$ space \citep{albergo2023stochastic}, satisfying the spherical constraint $\alpha(t)^2 + \beta(t)^2 = 1$.
\begin{equation}
	\begin{alignedat}{2}
		\alpha(t) &= \cos\left(\frac{\pi}{2}t\right) &\qquad \beta(t) &= \sin\left(\frac{\pi}{2}t\right), \\
		\dot{\alpha}(t) &= -\frac{\pi}{2}\beta(t) &\qquad \dot{\beta}(t) &= \frac{\pi}{2}\alpha(t).
	\end{alignedat}
\end{equation}
The DI variance, $\mathcal{V}_{\text{DI}}[\alpha, \beta] = (\pi/2)^2 \int_0^1 ( 2d \tan^2(\frac{\pi}{2}t) + C_1 ) \mathrm{d}t$, diverges due to the tangent term at $t=1$. The DDBI variance takes the same functional form as the VP case.

\subsection{Experimental Settings for Objectives}
\label{sec:experimental_objectives}

Our experimental framework follows the theoretical analysis in \cref{sec:variance_analysis}. For each interpolant class (DI and DDBI), we have two primary objectives: 1) the minimization of the path variance $\mathcal{V}[\alpha_{\vphi}, \beta_{\vphi}]$ to learn an optimal path schedule, and 2) the minimization of a joint score matching loss to train the joint score model $s_\vtheta^{(\vx,t)}(\vx, t)\triangleq [s_\vtheta^{(\vx)}(\vx, t), s_\vtheta^{(t)}(\vx, t)]$. Here, $s_\vtheta^{(\vx)}$ and $s_\vtheta^{(t)}$ are the data and time score models, aiming to approximate the data score $s^{(\vx)}(\vx, t) \triangleq \partial_{\vx} \log p_t(\vx)$ and time score $s^{(t)}(\vx, t) \triangleq \partial_t \log p_t(\vx)$. The superscripts $(t)$ and $(\vx)$ are labels indicating ``time" and ``data", not derivatives.
We use these notations consistently throughout the paper.
This section provides the precise mathematical formulations for these objectives as implemented in our experiments.

\subsubsection{Commonly Used Objectives}
In this section, we introduce three objectives for training the time score model.

\paragraph{Time Score Matching (TSM).}
To approximate the true marginal time score $s^{(t)}(\vx, t)$, the time score model $s_\vtheta^{(t)}(\vx, t)$ is trained by minimizing the ideal, but intractable, time score-matching loss:
\begin{equation} \label{eq:ideal-time-score-matching-appendix}
	\mathcal{L}_{\text{TSM}}(\vtheta) = \mathbb{E}_{p(t)p_t(\vx)} \left| s^{(t)}(\vx,t) - s_{\vtheta}^{(t)}(\vx,t) \right|^2.
\end{equation}

\paragraph{Sliced Time Score Matching (STSM).}
Since the target score $s^{(t)}(\vx,t)$ is unknown, this objective cannot be optimized directly. A standard technique, as shown in \citet{choi2022density}, is to apply integration by parts to derive a tractable objective. This equivalent training loss, which is termed as the sliced time score matching (STSM), is given by:
\begin{equation} \label{eq:tractable-time-score-matching-appendix}
	\mathcal{L}_{\text{STSM}}(\vtheta)\! = \! 2\E_{p_0(\vx_0)p_1(\vx_1)}\left[s_{\vtheta}^{(t)}(\vx_0,0) \! - \! s_{\vtheta}^{(t)}(\vx_1,1) \right] \! + \! \E_{p(t)p_t(\vx)}\left[2\partial_t s_{\vtheta}^{(t)}(\vx,t) \! + \! s_{\vtheta}^{(t)}(\vx,t)^2\right],
\end{equation}
where the first term denote the boundary conditions of $s_{\vtheta}^{(t)}$, $\partial_t s_{\vtheta}^{(t)}$ is the time derivative of $s_{\vtheta}^{(t)}$, and $p(t)$ is a timestep distribution over $(0,1)$ (see \cref{app:variance-based-time-sampler} for details).

\paragraph{Conditional Time Score Matching (CTSM).}
In the limit of infinite data and model capacity, the time score model $s_\vtheta^{(t)}$ exactly recovers the time score $s^{(t)}$ by minimizing the conditional time score matching (CTSM) objective \citep{yu2025density}:
\begin{equation} \label{eq:conditional-time-score-matching-appendix}
	\mathcal{L}_{\text{CTSM}}(\vtheta) = \mathbb{E}_{p(t)p_t(\vx_t)p(\vy)}   \left| \partial_t \log p_t(\vx_t \mid \vy) - s_\vtheta^{(t)}(\vx_t,t) \right|^2 ,
\end{equation}
where $p(t)$ is a timestep distribution (see \cref{app:variance-based-time-sampler}  for details), $\vx_t$ is induced from DI (see \cref{eq:deterministic-interpolant}) or DDBI (see \cref{eq:diffusion-bridge-interpolant}), and $\vy$ is an instance of the conditioning variable $\rvy$ ($\rvy \triangleq \rvx_1$ for DI, $\rvy \triangleq (\rvx_0, \rvx_1)$ for DDBI). 

\paragraph{We Use CTSM.} As shown in \cref{prop:quivalence-of-time-score-matching-objectives}, all these three objectives are equivalent, guaranteeing that the learned model satisfies
$s_{\vtheta^\star}^{(t)}(\vx,t)=\partial_t \log p_t(\vx)$. 
Hence, we adopt CTSM for training since (i) the conditional score is available in closed form, (ii) no derivatives are required, and (iii) the score model is evaluated only once per batch.

\subsubsection{Total Objective for DRE with the DI}

\paragraph{Path Variance Objective.} 
For the DI setting, we learn the path parameters $\vphi$ of our KMM-based schedule by minimizing the analytical variance given in \cref{thm:analytical_variance}:
\begin{equation}\label{eq:path-variance-di}
	\min_{\vphi} \quad \mathcal{V}_{\text{DI}}\left[ \alpha_{\vphi}, \beta_{\vphi}\right] = \int_0^1 \left( \frac{2d \dot{\alpha}_{\vphi}(t)^2}{\alpha_{\vphi}(t)^2} + \frac{\dot{\beta}_{\vphi}(t)^2}{\alpha_{\vphi}(t)^2} \mathbb{E}_{p_1(\vx_1)} \left[ \|\vx_1\|^2 \right] \right) \mathrm{d}t.
\end{equation}

\paragraph{Joint Score Matching Objective.}
We adopt the joint training strategy from \citet{choi2022density,yu2025density}, where the joint score network $s_\vtheta^{(\vx,t)}(\vx, t)\triangleq [s_\vtheta^{(\vx)}(\vx, t), s_\vtheta^{(t)}(\vx, t)]$ is trained to simultaneously predict both the conditional time score and the conditional data score. The network thus outputs a scalar time score $s^{(t)}_{\vtheta}\in \mathbb{R}$ and a vector data score $s^{(\vx)}_{\vtheta}\in \mathbb{R}^d$.

The time score target is $s^{(t)}_{\text{DI}}(\vx_t, t; \vx_1) \triangleq \partial_t \log p_t(\vx_t \mid \vx_1)$, given by:
\begin{equation}
	s^{(t)}_{\text{DI}}(\vx_t, t; \vx_1) = \frac{\dot{\alpha}(t)}{\alpha(t)} \left( \|\vx_0\|^2 - d \right) + \frac{\dot{\beta}(t)}{\alpha(t)} \vx_0^\top \vx_1.
	\label{eq:target_di_time}
\end{equation}

The data score target is $s^{(\vx)}_{\text{DI}}(\vx_t, t; \vx_1) \triangleq \nabla_{\vx_t} \log p_t(\vx_t \mid \vx_1)$. This is given by:
\begin{equation}
	s^{(\vx)}_{\text{DI}}(\vx_t, t; \vx_1) = -\frac{\vx_t - \beta(t)\vx_1}{\alpha(t)^2} = -\frac{\alpha(t)\vx_0}{\alpha(t)^2} = -\frac{\vx_0}{\alpha(t)}.
	\label{eq:target_di_data}
\end{equation}

Thus, the conditional joint score matching objective (CJSM) is to minimize the sum of the mean squared errors for both scores:
\begin{equation}
	\mathcal{L}_{\text{CJSM}}(\vtheta) = \mathbb{E}_{p(t)p_t(\vx_t)p_1(\vx_1)} \left| s^{(\vx,t)}_{\text{DI}}(\vx_t, t; \vx_1) - s_\vtheta^{(\vx,t)}(\vx_t,t) \right|^2,
\end{equation}
where $s^{(\vx,t)}_{\text{DI}}(\vx_t, t; \vx_1)=[s^{(\vx)}_{\text{DI}}(\vx_t;  \vx_1),s^{(t)}_{\text{DI}}(\vx_t; \vx_1)]$ is the joint score target and $s_\vtheta^{(\vx,t)}(\vx_t,t)=[s_\vtheta^{(\vx)}(\vx_t,t),s_\vtheta^{(t)}(\vx_t,t)]$ is the joint score model.

\paragraph{Total Objective.}
The total objective is the weighted sum of the path variance objective and the joint score matching objective:
\begin{equation}
	\mathcal{L}_{\text{total}}(\vtheta,\vphi)=\lambda_1 \mathcal{L}_{\text{CJSM}}(\vtheta) + \lambda_2 \mathcal{V}_{\text{DI}}\left[ \alpha_{\vphi}, \beta_{\vphi}\right], \quad \lambda_1 + \lambda_2 = 1,
\end{equation}
where $\lambda_1$ and $\lambda_2$ are the weighting coefficients determined by the Soft Optimal Uncertainty Weighting (UW-SO) method (see \cref{app:adaptive-objective-weighting} for details).

\subsubsection{Total Objective for DRE with the DDBI}
\paragraph{Path Variance Objective.}
Similarly, for the DDBI setting, the KMM path parameters $\vphi$ are learned by minimizing the corresponding analytical variance from \cref{thm:analytical_variance}:
\begin{equation} \label{eq:path-variance-ddbi}
	\min_{\vphi} \quad \mathcal{V}_{\text{DDBI}}\left[ \alpha_{\vphi}, \beta_{\vphi}\right] = \int_0^1 \left( \frac{d}{2} \frac{(\dot\sigma_t^2)^2}{(\sigma_t^2)^2} + \frac{\mathbb{E}_{p_0(\vx_0) p_1(\vx_1)}\left\| \dot{\alpha}_{\vphi}(t)\vx_0 + \dot{\beta}_{\vphi}(t)\vx_1\right\|^2}{\sigma_t^2} \right) \mathrm{d}t,
\end{equation}
where $\sigma_t^2 = t(1-t)\gamma^2 + (\alpha_{\vphi}(t)^2 + \beta_{\vphi}(t)^2)\varepsilon$, $\gamma\ge 0$ is the noise factor, and $\varepsilon$ is a small number.

\paragraph{Joint Score Matching Objective.}
Following the same joint training strategy, the network predicts both the conditional time and data scores. The target scores are conditioned on the endpoints $(\vx_0, \vx_1)$.

The time score target is $s^{(t)}_{\text{DDBI}}(\vx_t, t; \vx_0, \vx_1) \triangleq \partial_t \log p_t(\vx_t \mid \vx_0, \vx_1)$, given by:
\begin{equation}
	s^{(t)}_{\text{DDBI}}(\vx_t, t; \vx_0, \vx_1) = \frac{\dot\sigma_t^2}{2\sigma_t^2} \left( \frac{\|\vz\|^2}{\sigma_t^2} - d \right) + \frac{(\dot{\alpha}(t)\vx_0 + \dot{\beta}(t)\vx_1)^\top \vz}{\sigma_t}.
	\label{eq:target_ddbi_time}
\end{equation}

The data score target is $s^{(\vx)}_{\text{DDBI}}(\vx_t, t; \vx_0, \vx_1) \triangleq \nabla_{\vx_t} \log p_t(\vx_t \mid \vx_0, \vx_1)$, which simplifies to:
\begin{equation}
	s^{(\vx)}_{\text{DDBI}}(\vx_t, t; \vx_0, \vx_1) = -\frac{\vx_t - (\alpha(t)\vx_0 + \beta(t)\vx_1)}{\sigma_t^2} = -\frac{\sigma_t \vz}{\sigma_t^2} = -\frac{\vz}{\sigma_t}.
	\label{eq:target_ddbi_data}
\end{equation}

In these expressions, $\sigma_t^2 = t(1-t)\gamma^2 + (\alpha(t)^2 + \beta(t)^2)\varepsilon$ and $\vz=\frac{\vx_t - (\alpha(t)\vx_0 + \beta(t)\vx_1)}{\sigma_t}$.

Similarly, the CJSM objective is to minimize the sum of the mean squared errors for both scores:
\begin{equation}
	\mathcal{L}_{\text{CJSM}}(\vtheta) = \mathbb{E}_{p(t)p_t(\vx_t)p_0(\vx_0)p_1(\vx_1)} \left| s^{(\vx,t)}_{\text{DDBI}}(\vx_t, t; \vx_0, \vx_1) - s_\vtheta^{(\vx,t)}(\vx_t,t) \right|^2,
\end{equation}
where $s^{(\vx,t)}_{\text{DDBI}}(\vx_t, t; \vx_0, \vx_1)=[s^{(\vx)}_{\text{DDBI}}(\vx_t, t; \vx_0, \vx_1),s^{(t)}_{\text{DDBI}}(\vx_t, t; \vx_0, \vx_1)]$ is the joint score target and $s_\vtheta^{(\vx,t)}(\vx_t,t)=[s_\vtheta^{(\vx)}(\vx_t,t),s_\vtheta^{(t)}(\vx_t,t)]$ is the joint score model.

\paragraph{Total Objective.}
Similarly, the total objective is given by:
\begin{equation}
	\mathcal{L}_{\text{total}}(\vtheta,\vphi)=\lambda_1 \mathcal{L}_{\text{CJSM}}(\vtheta) + \lambda_2 \mathcal{V}_{\text{DDBI}}\left[ \alpha_{\vphi}, \beta_{\vphi}\right], \quad \lambda_1 + \lambda_2 = 1.
\end{equation}

\subsubsection{Adaptive Objective Weighting Trick}
\label{app:adaptive-objective-weighting}
We balance heterogeneous training objectives, including the path and the joint score matching objectives, using the soft optimal uncertainty weighting (UW-SO) method of \citet{kirchdorfer2024analytical}, an analytical refinement of the uncertainty-based weighting framework of \citet{kendall2018multi}.  
UW-SO computes closed-form weights at every optimization step, eliminating the need for learned noise parameters while preserving the benefits of uncertainty-aware balancing.

Given $J$ loss terms $\{\mathcal{L}_j\}_{j=1}^J$, we compute their normalized weights as
\begin{equation}
	\lambda_j = \frac{\exp\left( \frac{1}{T(\mathcal{L}_j + \epsilon)} \right)}
	{\sum_{j=1}^J \exp\left( \frac{1}{T(\mathcal{L}_j + \epsilon)} \right)},
	\label{eq:awu_weights}
\end{equation}
where $\epsilon>0$ ensures numerical stability and $T>0$ is a softmax temperature (we set $T=2$ by default).
The final training objective is a stop-gradient weighted sum:
\begin{equation}
	\mathcal{L}_{\mathrm{total}} = \sum_{j=1}^J \texttt{sg}(\lambda_j) \mathcal{L}_j,
	\label{eq:awu_loss}
\end{equation}
where $\texttt{sg}$ denotes the stop-gradient operation, preventing the weights from affecting the backward pass. 
This yields a parameter-free, stable, and easily reproducible loss combination strategy, which we adopt as the default in all experiments.


\subsubsection{Variance-based Importance Sampling for Timesteps}
\label{app:variance-based-time-sampler}
Rather than sampling $t \sim \mathcal{U}(0,1)$, we adopt a variance-based importance sampler inspired by \citet{song2021maximum,yu2025density}, which improves training stability by focusing computational effort on time steps where the ground-truth score signal is more reliable (i.e., has lower variance).

We utilize this sampler to fundamentally simplify the training objective. We start with the objective containing the optimal inverse-variance weighting function, $\lambda(t) \propto 1/\left(\Var_{p_t(\vx)} \left( s^{(t)}(\vx, t) \right) + \varepsilon\right)$, and employ importance sampling by drawing time steps $t \sim p(t)$. The resulting objective is proportional to:
\begin{equation}
\label{eq:importance_sampling_derivation}
	\mathcal{L}_{\text{TSM}} \propto \mathbb{E}_{t \sim p(t)} \left[ \frac{\lambda(t)}{p(t)} \cdot \left| s^{(t)}(\vx, t) - s_{\vtheta}^{(t)}(\vx, t) \right|^2 \right].
\end{equation}
By setting the sampling distribution $p(t)$ to be directly proportional to this optimal weighting function:
\begin{equation}
\label{eq:importance_sampling}
	p(t) \propto \lambda(t) \propto \frac{1}{\Var_{p_t(\vx)} \left( s^{(t)}(\vx, t) \right) + \varepsilon},
\end{equation}
the importance sampling factor $\frac{\lambda(t)}{p(t)}$ becomes a constant, $\frac{1}{C}$. This results in an effectively uniform weighting of the squared error term:
\begin{equation}
\label{eq:uniform_weighting_result}
	\mathcal{L}_{\text{TSM}} \propto \frac{1}{C} \cdot \mathbb{E}_{t \sim p(t)} \left[ \left| s^{(t)}(\vx, t) - s_{\vtheta}^{(t)}(\vx, t) \right|^2 \right] \propto \mathbb{E}_{t \sim p(t)} \left[ \left| s^{(t)}(\vx, t) - s_{\vtheta}^{(t)}(\vx, t) \right|^2 \right].
\end{equation}
This mathematically guarantees the benefits of optimal weighting while maintaining an objective free of explicit, non-unity weighting functions.

Analytical expressions of the variance are available for DI and DDBI (\cref{thm:analytical_variance}). In practice, we pre-compute variances on a time grid and sample $t$ from the resulting categorical distribution, updating it periodically as the path schedule $\big(\alpha_\vphi(t),\beta_\vphi(t)\big)$ evolves.

\subsection{Experimental Settings for the Kumaraswamy Distribution}
\label{app:settings-for-kmm}
\paragraph{A Stable Kumaraswamy.}
The Kumaraswamy distribution suffers from severe numerical issues near the $[0,1]$ boundaries, where direct evaluation of $t^a$ or $(1-t^a)^b$ leads to catastrophic cancellation and unstable gradients \citep{wasserman2025stabilizing}. To address this, we compute powers as,
\begin{equation}
	x^y = \exp\left(y \cdot \mathrm{log1p}(x-1)\right),
	\label{eq:numerical_stability_trick}
\end{equation}
which preserves precision by avoiding subtraction of nearly equal numbers. This stable power function is applied to the CDF and PDF (\cref{eq:kmm_cdf_pdf}), ensuring reliable gradients during training.

\paragraph{Initialization Strategy for Mixture Components.}
To prevent component collapse and encourage specialization, we initialize the $K$ Kumaraswamy components with modes spread across $[0,1]$. For each $\tau_k = \frac{k+0.5}{K+1}$, the shape parameters $(a_k,b_k)$ are set asymmetrically:
\begin{equation}
	(a_k, b_k) =
	\begin{cases}
		(1.5 + 0.5k, 3.0 + 2.0(K - k)) & \text{if } \tau_k < 0.5, \\
		(3.0 + 2.0k, 1.5 + 0.5(K - k)) & \text{if } \tau_k \geq 0.5.
	\end{cases}
\end{equation}
This design produces right-skewed components ($a_k<b_k$) for $\tau_k < 0.5$ and left-skewed components ($a_k>b_k$) for $\tau_k \ge 0.5$, ensuring that their modes 
\begin{equation}
	\mathrm{mode}(k)=\left(\frac{a_k - 1}{a_k b_k - 1}\right)^{1/a_k}, \qquad a_k>1, b_k>1,
\end{equation}
are distributed across $[0,1]$. The weights are initialized uniformly, $w_k = 1/K$. Such diversity-oriented initialization accelerates convergence and avoids degenerate solutions \citep{dai2020diagnosing}.

\paragraph{PyTorch implementation of MVP path with KMM parameterization.}
We also provide the PyTorch implementation of the proposed MVP in this paper, as shown in \cref{alg:kmm_code}.

\definecolor{codecomment}{rgb}{0.1,0.5,0.2}  
\definecolor{codekeyword}{rgb}{0.1,0.1,0.7} 
\definecolor{codestring}{rgb}{0.8,0.4,0.0} 
\definecolor{codefunction}{rgb}{0.8,0.1,0.4} 
\definecolor{codenumber}{rgb}{0.5,0.5,0.5}   
\definecolor{codepurple}{rgb}{0.5,0,0.5}    

\lstdefinestyle{pystyle}{
	language=Python,
	basicstyle=\ttfamily\small,
	keywordstyle=\color{codekeyword}\bfseries,
	commentstyle=\itshape\color{codecomment},
	stringstyle=\color{codestring},
	showstringspaces=false,
	columns=fullflexible,
	breaklines=true,
	escapechar=|, 
	keepspaces=true, 
	tabsize=4,   
	xleftmargin=2em,
	framexleftmargin=1.5em,
	numbers=left,
	numberstyle=\tiny\color{codenumber},
	numbersep=10pt,
	literate=
	{__init__}{{\textcolor{codefunction}{\_\_init\_\_}}}{8}
	{forward}{{\textcolor{codefunction}{forward}}}{7}
	{softmax}{{\textcolor{blue}{softmax}}}{7}
	{softplus}{{\textcolor{blue}{softplus}}}{8}
	{log1p}{{\textcolor{blue}{log1p}}}{5}
	{exp}{{\textcolor{blue}{exp}}}{3}
	{clamp}{{\textcolor{blue}{clamp}}}{5}
	{_get_kmm_params}{{\textcolor{codepurple}{\_get\_kmm\_params}}}{16}
	{_stable_pow}{{\textcolor{codepurple}{\_stable\_pow}}}{11}
	{_kmm_cdf}{{\textcolor{codepurple}{\_kmm\_cdf}}}{8}
	{_kmm_pdf}{{\textcolor{codepurple}{\_kmm\_pdf}}}{8}
	{torch.}{{\textcolor{orange}{torch.}}}{6}
	{nn.}{{\textcolor{orange}{nn.}}}{3}
	{F.}{{\textcolor{orange}{F.}}}{2}
}

\lstset{style=pystyle}

\begin{figure}[ht!]
	\begin{algorithm}[H]
		\caption{PyTorch implementation of MVP path with KMM parameterization.}
		\label{alg:kmm_code}
		\begin{lstlisting}
class MVPPath(nn.Module):
	"""
	An implementation for computing the MVP path schedules
	alpha(t) and beta(t) using a Kumaraswamy Mixture Model (KMM).
	"""
	def __init__(self, n_comp=4, constraint_type="affine", eps=1e-5):
		super().__init__()
		self.constraint_type, self.eps = constraint_type, eps
		
		# Learnable parameters for the KMM, see |\cref{app:settings-for-kmm}| for details
		self.logits = nn.Parameter(torch.zeros(1, n_comp))
		self.log_a = nn.Parameter(torch.randn(1, n_comp))
		self.log_b = nn.Parameter(torch.randn(1, n_comp))
	
	def _get_kmm_params(self):
		"Applies transformations to get valid KMM parameters."
		weights = F.softmax(self.logits, dim=-1)
		a = F.softplus(self.log_a)  # Ensures a > 0
		b = F.softplus(self.log_b)  # Ensures b > 0
		return weights, a, b
	
	def _stable_pow(self, x, y):
		"Numerically stable computation of x^y for x close to 1."
		return torch.exp(y * torch.log1p(x - 1))
	
	def _kmm_cdf(self, t, weights, a, b):
		"Computes the KMM's Cumulative Distribution Function."
		t_a = self._stable_pow(t, a)
		comp_cdfs = 1 - self._stable_pow(1 - t_a, b)
		return torch.sum(weights * comp_cdfs, dim=-1, keepdim=True)
	
	def _kmm_pdf(self, t, weights, a, b):
		"Computes the KMM's Probability Density Function."
		t_a = self._stable_pow(t, a)
		t_a_m1 = self._stable_pow(t, a - 1)
		one_m_t_a_b_m1 = self._stable_pow(1 - t_a, b - 1)
		
		comp_pdfs = a * b * t_a_m1 * one_m_t_a_b_m1
		return torch.sum(weights * comp_pdfs, dim=-1, keepdim=True)
	
	def forward(self, t):
		"Computes alpha, beta, and their derivatives at time t."
		t = t.clamp(self.eps, 1 - self.eps)
		weights, a, b = self._get_kmm_params()
		
		# alpha is the inverted CDF; dot_alpha is the negative PDF
		# see |\cref{eq:alpha_and_derivatives_from_cdf}| for details
		alpha = 1 - self._kmm_cdf(t, weights, a, b)
		dot_alpha = -self._kmm_pdf(t, weights, a, b)
		
		# Beta and its derivative depend on the constraint
		if self.constraint_type == "spherical":
			# In this case, alpha(t)^2 + beta(t)^2 = 1
			beta = torch.sqrt(1 - alpha.pow(2) + self.eps)
			dot_beta = -alpha * dot_alpha / beta
		else: # "affine"
			# In this case, alpha(t) + beta(t) = 1
			beta = 1 - alpha
			dot_beta = -dot_alpha
		return alpha, beta, dot_alpha, dot_beta
		\end{lstlisting}
	\end{algorithm}
\end{figure}

\clearpage
\subsection{Experimental Settings and Results for $f$-divergence Estimation}
\label{appendix:mutual-information-estimation}

\paragraph{Mutual Information (MI) Estimation.}
MI quantifies the statistical dependence between two random variables $\rvx\sim p(\vx)$ and $\rvy\sim q(\vy)$, and is defined as
$\text{MI}(\rvx,\rvy)= \mathbb{E}_{p(\vx,\vy)}\left[\log\frac{p(\vx,\vy)}{p(\vx)q(\vy)}\right]$.
MI estimation naturally reduces to DRE since $\text{MI}(\rvx,\rvy)$ involves the expectation of a log density ratio. 

\paragraph{Details on Geometrically Pathological Distributions.}
\label{appendix:pathological-distributions}

To further probe the robustness of our method, we evaluate it on a suite of five MI estimation tasks involving geometrically pathological distributions. These tasks, inspired by the benchmark suite from \citet{czyz2023beyond}, are specifically designed to challenge the underlying assumptions of many standard estimators.
\begin{enumerate}
	\item \textbf{Edge-Singular Gaussian.}
	This task assesses model stability in edge-case correlation scenarios. We consider two jointly Gaussian random variables, $(\rvx, \rvy)\sim\mathcal{N}( \mathbf{0}, [[1, \rho], [\rho, 1]] )$. The mutual information, given by $\text{MI}(\rvx;\rvy) = -0.5\log(1 - \rho^2)$, becomes numerically challenging at the extremes of the correlation coefficient $\rho$. As $\rho \to 1^-$, the distributions become nearly degenerate and collinear, leading to numerical divergence in many estimators. Conversely, as $\rho \to 0^+$, the dependency signal becomes vanishingly weak, testing the model's sensitivity and its ability to avoid vanishing gradients. 
	
	\item \textbf{Half-Cube Map.}
	This distribution challenges models with heavy-tailed data. We begin with two correlated Gaussian variables and apply the homeomorphism $\text{half-cube}(\rvz)=|\rvz|^{3/2}\mathrm{sign}(\rvz)$ to each. The resulting variables, $\rvx' = \text{half-cube}(\rvx)$ and $\rvy' = \text{half-cube}(\rvy)$, have significantly heavier tails than the original Gaussians. While this transformation preserves the mutual information, i.e., $\text{MI}(\rvx';\rvy') = \text{MI}(\rvx;\rvy)$, the distorted, heavy-tailed geometry can be difficult for non-parametric methods that rely on local density estimation.
	
	\item \textbf{Asinh Mapping.}
	This task tests the model's ability to handle distributions with highly concentrated densities. We start with two independent standard Gaussian random variables, $\rvx$ and $\rvy$, and apply the inverse hyperbolic sine transformation, $\mathrm{asinh}(\rvz) = \log(\rvz + \sqrt{\rvz^2 + 1})$, to each. The resulting variables, $\rvx' = \mathrm{asinh}(\rvx)$ and $\rvy' = \mathrm{asinh}(\rvy)$, form our two distributions. This mapping compresses the tails of the original Gaussian distributions into a central region with extremely high curvature, creating sharp density peaks that can cause gradient instability and pose significant challenges for kernel-based methods.
	
	\item \textbf{Additive Noise.}
	This scenario is designed to evaluate performance on distributions with fragmented support and sharp discontinuities. Let $\rvx$ be a random variable uniformly distributed on $[0,1]$, i.e., $\rvx \sim \mathcal{U}(\vzero,\vone)$, and let $\rvn$ be an independent noise variable, $\rvn \sim \mathcal{U}(-\epsilon,\epsilon)$. We define the second variable as $\rvy = \rvx + \rvn$. The resulting joint distribution is piecewise-constant and possesses sharp, non-differentiable boundaries. This structure violates the smoothness assumptions underlying many score-based estimators, providing a stringent test of model robustness. The true MI is given by $\text{MI}(\rvx;\rvy) = \log(2\epsilon) + 0.5$ for $\epsilon \leq 0.5$.
	
	\item \textbf{Gamma-Exponential.}
	This task features a complex, non-linear dependency structure. We define a hierarchical relationship where the first variable $\rvx$ is drawn from a Gamma distribution, $\rvx \sim \mathrm{Gamma}(\rho, 1)$. The second variable $\rvy$ is then drawn from an Exponential distribution whose rate parameter is given by the value of $\rvx$, i.e., $\rvy \mid \rvx=\vx \sim \mathrm{Exponential}(\vx)$. The resulting joint distribution is highly asymmetric and non-Gaussian. The true mutual information is $I(\rvx; \rvy) = \psi(\rho + 1) - \log(\rho)$, where $\psi$ is the digamma function.
\end{enumerate}

The results in \cref{tab:mse_vertical_combined} can be summarized as follows:  
(1) On the Edge-Singular Gaussian task, MVP (both affine and spherical) attains the lowest or near-lowest MSE, especially under extreme correlations ($\rho \to 0, \pm1$). On the Half-Cube Map task with heavy tails, the spherical constraint yields stable and accurate estimates, outperforming all baselines.  
(2) On the Asinh Mapping task with concentrated densities and sharp curvature, MVP again achieves the lowest error, while baselines such as VP struggle with density peaks.  
(3) The largest gains appear on the Additive Noise and Gamma–Exponential tasks. With sharp discontinuities and complex non-linear dependencies, fixed-path methods degrade severely, whereas MVP, particularly with the spherical constraint, achieves dramatically lower MSE, often by an order of magnitude.  

Overall, no fixed path is uniformly effective across all five benchmarks, with performance varying by data geometry. In contrast, MVP consistently delivers state-of-the-art accuracy by optimizing path variance, producing data-adaptive trajectories tailored to each problem’s structure.

\begin{table}[htbp]
	\centering
	\caption{Full MSE results for MI estimation on five geometrically pathological datasets. The top row of each sub-table indicates the varying correlation coefficient $\rho$. Our MVP path demonstrates consistently superior or competitive performance across the wide range of challenging data geometries, highlighting the benefits of a learnable path.}
	\label{tab:mse_vertical_combined}
	\subcaption{MSE results for the Edge-Singular Gaussian dataset.}
	\resizebox{\textwidth}{!}{
		\begin{tabular}{lccccccccccccccccccccc}
			\toprule
			\textbf{Path} & \textbf{Constraint} & \textbf{-0.9} & \textbf{-0.8} & \textbf{-0.7} & \textbf{-0.6} & \textbf{-0.5} & \textbf{-0.4} & \textbf{-0.3} & \textbf{-0.2} & \textbf{-0.1} & \textbf{0.0} & \textbf{0.1} & \textbf{0.2} & \textbf{0.3} & \textbf{0.4} & \textbf{0.5} & \textbf{0.6} & \textbf{0.7} & \textbf{0.8} & \textbf{0.9} \\
			\midrule
			Linear & affine & $0.0025$ & $0.0010$ & $0.0005$ & $\mathbf{0.0001}$ & $0.0002$ & $0.0005$ & $0.0002$ & $0.0001$ & $0.0000$ & $0.0002$ & $0.0002$ & $0.0002$ & $0.0002$ & $0.0001$ & $0.0002$ & $\mathbf{0.0001}$ & $0.0005$ & $0.0006$ & $0.0014$ \\
			Cosine & spherical & $0.0014$ & $0.0007$ & $0.0007$ & $0.0003$ & $0.0006$ & $0.0005$ & $0.0002$ & $0.0003$ & $0.0005$ & $\mathbf{0.0001}$ & $0.0003$ & $0.0002$ & $0.0003$ & $0.0004$ & $0.0003$ & $0.0005$ & $0.0008$ & $0.0006$ & $0.0028$ \\
			Föllmer & spherical & $0.0007$ & $0.0012$ & $0.0004$ & $0.0004$ & $0.0004$ & $0.0004$ & $0.0003$ & $0.0002$ & $0.0002$ & $\mathbf{0.0001}$ & $\mathbf{0.0001}$ & $\mathbf{0.0001}$ & $\mathbf{0.0001}$ & $0.0002$ & $0.0004$ & $0.0005$ & $0.0005$ & $0.0006$ & $0.0009$ \\
			Trigonometric & spherical & $0.0021$ & $0.0004$ & $0.0002$ & $0.0002$ & $0.0004$ & $0.0004$ & $0.0002$ & $0.0001$ & $0.0002$ & $0.0003$ & $0.0004$ & $\mathbf{0.0001}$ & $\mathbf{0.0001}$ & $0.0001$ & $0.0002$ & $0.0003$ & $0.0003$ & $0.0010$ & $0.0034$ \\
			VP & spherical & $0.0013$ & $0.0003$ & $0.0014$ & $0.0009$ & $0.0004$ & $0.0007$ & $0.0014$ & $0.0020$ & $0.0021$ & $0.0025$ & $0.0022$ & $0.0020$ & $0.0013$ & $0.0006$ & $0.0001$ & $0.0005$ & $0.0003$ & $\mathbf{0.0004}$ & $0.0023$ \\
			\textbf{MVP (ours)} & affine & $0.0010$ & $\mathbf{0.0002}$ & $0.0005$ & $0.0005$ & $\mathbf{0.0001}$ & $\mathbf{0.0003}$ & $\mathbf{0.0001}$ & $0.0001$ & $\mathbf{0.0000}$ & $0.0002$ & $\mathbf{0.0001}$ & $\mathbf{0.0001}$ & $\mathbf{0.0001}$ & $\mathbf{0.0000}$ & $0.0001$ & $\mathbf{0.0001}$ & $\mathbf{0.0002}$ & $0.0007$ & $\mathbf{0.0001}$ \\
			\textbf{MVP (ours)} & spherical & $\mathbf{0.0005}$ & $0.0006$ & $\mathbf{0.0001}$ & $0.0003$ & $0.0004$ & $0.0004$ & $0.0002$ & $\mathbf{0.0000}$ & $0.0001$ & $\mathbf{0.0001}$ & $0.0002$ & $\mathbf{0.0001}$ & $0.0004$ & $0.0001$ & $\mathbf{0.0000}$ & $0.0004$ & $0.0005$ & $0.0006$ & $0.0007$ \\
			\bottomrule
		\end{tabular}
	}
		\\[1ex]
		\subcaption{MSE results for the Half-Cube Map dataset.}
		\resizebox{\textwidth}{!}{
		\begin{tabular}{lccccccccccccccccccccc}
			\toprule
			\textbf{Path} & \textbf{Constraint} & \textbf{-0.9} & \textbf{-0.8} & \textbf{-0.7} & \textbf{-0.6} & \textbf{-0.5} & \textbf{-0.4} & \textbf{-0.3} & \textbf{-0.2} & \textbf{-0.1} & \textbf{0.0} & \textbf{0.1} & \textbf{0.2} & \textbf{0.3} & \textbf{0.4} & \textbf{0.5} & \textbf{0.6} & \textbf{0.7} & \textbf{0.8} & \textbf{0.9} \\
			\midrule
			Linear & affine & $0.0007$ & $0.0006$ & $0.0005$ & $0.0004$ & $0.0007$ & $0.0007$ & $0.0006$ & $\mathbf{0.0005}$ & $0.0009$ & $0.0005$ & $0.0005$ & $0.0010$ & $0.0016$ & $0.0006$ & $0.0009$ & $0.0010$ & $0.0013$ & $0.0005$ & $0.0019$ \\
			Cosine & spherical & $0.0015$ & $0.0044$ & $0.0024$ & $0.0021$ & $0.0005$ & $0.0008$ & $0.0041$ & $0.0015$ & $0.0038$ & $\mathbf{0.0003}$ & $0.0058$ & $0.0004$ & $0.0029$ & $0.0028$ & $0.0004$ & $0.0004$ & $0.0006$ & $0.0005$ & $\mathbf{0.0008}$ \\
			Föllmer & spherical & $0.0046$ & $0.0039$ & $0.0022$ & $0.0011$ & $0.0031$ & $0.0050$ & $0.0043$ & $0.0045$ & $0.0037$ & $0.0042$ & $0.0036$ & $0.0055$ & $0.0032$ & $0.0035$ & $0.0003$ & $\mathbf{0.0003}$ & $0.0004$ & $\mathbf{0.0003}$ & $0.0010$ \\
			Trigonometric & spherical & $0.0020$ & $0.0005$ & $0.0004$ & $0.0004$ & $0.0009$ & $0.0006$ & $0.0006$ & $0.0006$ & $0.0006$ & $0.0012$ & $0.0005$ & $\mathbf{0.0003}$ & $0.0006$ & $0.0002$ & $0.0003$ & $0.0006$ & $0.0007$ & $0.0005$ & $0.0034$ \\
			VP & spherical & $0.0006$ & $0.0009$ & $0.0038$ & $0.0035$ & $0.0037$ & $0.0071$ & $0.0083$ & $0.0106$ & $0.0095$ & $0.0086$ & $0.0006$ & $0.0085$ & $0.0060$ & $0.0067$ & $0.0033$ & $0.0049$ & $0.0006$ & $0.0021$ & $0.0011$ \\
			\textbf{MVP (ours)} & affine & $\mathbf{0.0005}$ & $\mathbf{0.0004}$ & $0.0006$ & $\mathbf{0.0003}$ & $\mathbf{0.0004}$ & $0.0006$ & $0.0007$ & $0.0007$ & $0.0008$ & $0.0008$ & $0.0026$ & $0.0014$ & $0.0025$ & $\mathbf{0.0002}$ & $0.0007$ & $0.0004$ & $0.0004$ & $0.0004$ & $0.0013$ \\
			\textbf{MVP (ours)} & spherical & $0.0005$ & $0.0005$ & $\mathbf{0.0002}$ & $0.0006$ & $0.0008$ & $\mathbf{0.0002}$ & $\mathbf{0.0003}$ & $\mathbf{0.0005}$ & $\mathbf{0.0002}$ & $0.0012$ & $\mathbf{0.0002}$ & $0.0004$ & $\mathbf{0.0006}$ & $0.0006$ & $\mathbf{0.0002}$ & $\mathbf{0.0003}$ & $\mathbf{0.0003}$ & $\mathbf{0.0003}$ & $0.0014$ \\
			\bottomrule
		\end{tabular}
	}
		\\[1ex]
		\subcaption{MSE results for the Asinh Mapping dataset.}
		\resizebox{\textwidth}{!}{
		\begin{tabular}{lccccccccccccccccccccc}
			\toprule
			\textbf{Path} & \textbf{Constraint} & \textbf{-0.9} & \textbf{-0.8} & \textbf{-0.7} & \textbf{-0.6} & \textbf{-0.5} & \textbf{-0.4} & \textbf{-0.3} & \textbf{-0.2} & \textbf{-0.1} & \textbf{0.0} & \textbf{0.1} & \textbf{0.2} & \textbf{0.3} & \textbf{0.4} & \textbf{0.5} & \textbf{0.6} & \textbf{0.7} & \textbf{0.8} & \textbf{0.9} \\
			\midrule
			Linear & affine & $0.0004$ & $0.0010$ & $0.0005$ & $0.0005$ & $0.0006$ & $0.0237$ & $0.0007$ & $0.0001$ & $0.0012$ & $\mathbf{0.0004}$ & $0.0003$ & $0.0000$ & $0.0003$ & $0.0002$ & $0.0006$ & $0.0008$ & $0.0006$ & $0.0012$ & $0.0166$ \\
			Cosine & spherical & $0.0058$ & $0.0016$ & $0.0006$ & $0.0009$ & $0.0015$ & $0.0015$ & $0.0017$ & $0.0015$ & $0.0034$ & $0.0007$ & $0.0002$ & $0.0001$ & $\mathbf{0.0001}$ & $\mathbf{0.0001}$ & $0.0006$ & $0.0005$ & $0.0004$ & $0.0009$ & $0.0014$ \\
			Föllmer & spherical & $0.0024$ & $0.0009$ & $0.0017$ & $0.0019$ & $0.0015$ & $0.0018$ & $0.0014$ & $0.0040$ & $0.0027$ & $0.0019$ & $0.0001$ & $0.0001$ & $0.0001$ & $0.0002$ & $0.0007$ & $0.0009$ & $\mathbf{0.0001}$ & $0.0007$ & $0.0022$ \\
			Trigonometric & spherical & $0.0015$ & $0.0009$ & $0.0005$ & $0.0004$ & $0.0003$ & $0.0003$ & $0.0005$ & $0.0003$ & $0.0004$ & $0.0005$ & $0.0007$ & $0.0001$ & $0.0001$ & $\mathbf{0.0001}$ & $\mathbf{0.0004}$ & $0.0005$ & $0.0015$ & $0.0012$ & $0.0007$ \\
			VP & spherical & $0.0005$ & $0.0008$ & $0.0020$ & $0.0047$ & $0.0048$ & $0.0099$ & $0.0101$ & $0.0108$ & $0.0118$ & $0.0012$ & $0.0023$ & $0.0012$ & $0.0027$ & $0.0014$ & $0.0014$ & $0.0005$ & $0.0005$ & $0.0020$ & $0.0020$ \\
			\textbf{MVP (ours)} & affine & $0.0003$ & $\mathbf{0.0003}$ & $0.0004$ & $\mathbf{0.0002}$ & $0.0001$ & $\mathbf{0.0003}$ & $\mathbf{0.0005}$ & $\mathbf{0.0000}$ & $0.0002$ & $\mathbf{0.0004}$ & $\mathbf{0.0001}$ & $0.0003$ & $0.0005$ & $0.0005$ & $0.0005$ & $0.0005$ & $\mathbf{0.0001}$ & $0.0003$ & $\mathbf{0.0004}$ \\
			\textbf{MVP (ours)} & spherical & $\mathbf{0.0002}$ & $\mathbf{0.0003}$ & $\mathbf{0.0003}$ & $0.0007$ & $\mathbf{0.0001}$ & $0.0007$ & $0.0010$ & $\mathbf{0.0000}$ & $\mathbf{0.0001}$ & $0.0009$ & $0.0006$ & $\mathbf{0.0000}$ & $\mathbf{0.0001}$ & $0.0007$ & $0.0005$ & $\mathbf{0.0003}$ & $0.0004$ & $\mathbf{0.0002}$ & $0.0006$ \\
			\bottomrule
		\end{tabular}
	}
		\\[1ex]
		\subcaption{MSE results for the Additive Noise dataset.}
		\resizebox{\textwidth}{!}{
		\begin{tabular}{lccccccccccc}
			\toprule
			\textbf{Path} & \textbf{Constraint} & \textbf{0.1} & \textbf{0.2} & \textbf{0.3} & \textbf{0.4} & \textbf{0.5} & \textbf{0.6} & \textbf{0.7} & \textbf{0.8} & \textbf{0.9} \\
			\midrule
			Linear & affine & $0.3053$ & $0.0541$ & $0.0233$ & $0.0465$ & $0.0367$ & $0.0714$ & $0.0339$ & $0.0307$ & $0.0268$ \\
			Cosine & spherical & $0.4803$ & $0.1086$ & $0.0788$ & $0.0969$ & $0.0719$ & $0.0867$ & $0.1015$ & $0.1204$ & $0.1304$ \\
			Föllmer & spherical & $0.4731$ & $0.1048$ & $0.0510$ & $0.0916$ & $0.0864$ & $0.1024$ & $0.1226$ & $0.0944$ & $0.1115$ \\
			Trigonometric & spherical & $0.3615$ & $0.0565$ & $0.0508$ & $0.0654$ & $0.0596$ & $0.0517$ & $0.0502$ & $0.0448$ & $0.0603$ \\
			VP & spherical & $0.3455$ & $0.0290$ & $0.0303$ & $0.0346$ & $0.0210$ & $0.0313$ & $0.0426$ & $0.0390$ & $0.0123$ \\
			\textbf{MVP (ours)} & affine & $0.2360$ & $0.0199$ & $0.0081$ & $0.0167$ & $0.0161$ & $0.0216$ & $0.0304$ & $0.0155$ & $\mathbf{0.0010}$ \\
			\textbf{MVP (ours)} & spherical & $\mathbf{0.1016}$ & $\mathbf{0.0134}$ & $\mathbf{0.0078}$ & $\mathbf{0.0009}$ & $\mathbf{0.0138}$ & $\mathbf{0.0117}$ & $\mathbf{0.0196}$ & $\mathbf{0.0026}$ & $0.0029$ \\
			\bottomrule
		\end{tabular}
	}
		\\[1ex]
		\subcaption{MSE results for the Gamma-Exponential dataset.}
		\resizebox{\textwidth}{!}{
		\begin{tabular}{lccccccccccc}
			\toprule
			\textbf{Path} & \textbf{Constraint} & \textbf{1.0} & \textbf{1.1} & \textbf{1.2} & \textbf{1.3} & \textbf{1.4} & \textbf{1.5} & \textbf{1.6} & \textbf{1.7} & \textbf{1.8} \\
			\midrule
			Linear & affine & $2.8166$ & $0.7064$ & $0.1686$ & $0.1368$ & $0.0513$ & $0.0420$ & $0.0051$ & $0.0147$ & $0.0066$ \\
			Cosine & spherical & $7.2466$ & $1.0785$ & $0.4875$ & $0.2582$ & $0.2802$ & $0.3063$ & $0.2936$ & $0.1864$ & $0.3438$ \\
			Föllmer & spherical & $0.4006$ & $0.4275$ & $0.6304$ & $0.3741$ & $0.2822$ & $0.4662$ & $0.2473$ & $0.3838$ & $0.3105$ \\
			Trigonometric & spherical & $10.4420$ & $1.3841$ & $0.2705$ & $0.3854$ & $0.0184$ & $0.0681$ & $0.1109$ & $0.0103$ & $0.0668$ \\
			VP & spherical & $2.7080$ & $1.3872$ & $0.1116$ & $0.1224$ & $0.1320$ & $0.1817$ & $0.4407$ & $0.0919$ & $0.2507$ \\
			\textbf{MVP (ours)} & affine & $0.4785$ & $1.5466$ & $0.1137$ & $0.1274$ & $0.0915$ & $\mathbf{0.0063}$ & $\mathbf{0.0032}$ & $0.0294$ & $0.0069$ \\
			\textbf{MVP (ours)} & spherical & $\mathbf{0.2721}$  & $\mathbf{0.2304}$ & $\mathbf{0.0917}$ & $\mathbf{0.0434}$ & $\mathbf{0.0075}$ & $0.0187$ & $0.0091$ & $\mathbf{0.0030}$ & $\mathbf{0.0004}$ \\
			\bottomrule
		\end{tabular}
	}
\end{table}

\paragraph{High-dimensional \& High-discrepancy Distributions.} 
To rigorously test model performance under extreme conditions, we designed an experiment focused on MI estimation between two high-dimensional Gaussian distributions with a large discrepancy. This setup is intended to specifically invoke the ``density-chasm" problem, a known failure mode for DRE-based methods where the path between the two distributions traverses a region of vanishingly low density \citep{rhodes2020telescoping}. The goal is to evaluate whether our path optimization strategy can effectively navigate this chasm where fixed paths may fail.
We define the two distributions as $p_0(\vx)=\mathcal{N}(\vzero,\mI_d)$ and $p_1(\vx)=\mathcal{N}(\vzero,\Sigma)$. The covariance matrix $\Sigma$ is constructed to be block-diagonal, where each $2\times 2$ block along the diagonal is given by $\Lambda=\begin{psmallmatrix}1 & \rho \\ \rho & 1\end{psmallmatrix}$, with $\rho=0.5$. This structure creates strong pairwise correlations within each block while ensuring that the blocks themselves are independent. This results in a highly ill-conditioned and low-rank covariance matrix, a known challenge for score-based DRE methods \citep{choi2022density}.
The experiments are conducted across several dimensions, $d \in \{40, 80, 120, 160\}$. As the dimension increases, so does the true MI between the distributions, and thus the discrepancy, with corresponding MI values of approximately $\{10, 20, 30, 40\}$ nats. This allows us to systematically evaluate model robustness as the density-chasm problem becomes progressively more severe.

\subsection{Experimental Settings and Results for Density Estimation}
\label{appendix:density-estimation}

\paragraph{Structured and Multi-modal Distributions.} 
We use a suite of six 2D synthetic datasets to evaluate the model's ability to learn complex and structured distributions. Five of these are standard benchmarks used in \citet{chen2025dequantified}, while the tree dataset follows the setup from \citet{karras2024guiding}. These datasets are specifically chosen to probe different failure modes of density models.
\begin{itemize}
	\item \textbf{Disconnected Topologies.}
	The circles and rings datasets test the model's ability to capture distributions with multiple, disconnected components and assign zero density to the regions between them.
	
	\item \textbf{Intricate Structures.}
	The 2spirals and pinwheel datasets feature highly structured, non-linear manifolds that require the model to learn complex, curving paths.
	
	\item \textbf{Discontinuous and Branching Densities.}
	The checkerboard dataset presents a particularly difficult challenge with its discontinuous, grid-like density. The tree dataset is designed to assess a model's capacity to generate sharp, branching topological structures, a known challenge for many generative models.
\end{itemize}
Together, these datasets form a comprehensive testbed for evaluating the flexibility and robustness of a density estimation method.
\begin{figure}[ht]
	\centering
	\includegraphics[width=0.8\textwidth]{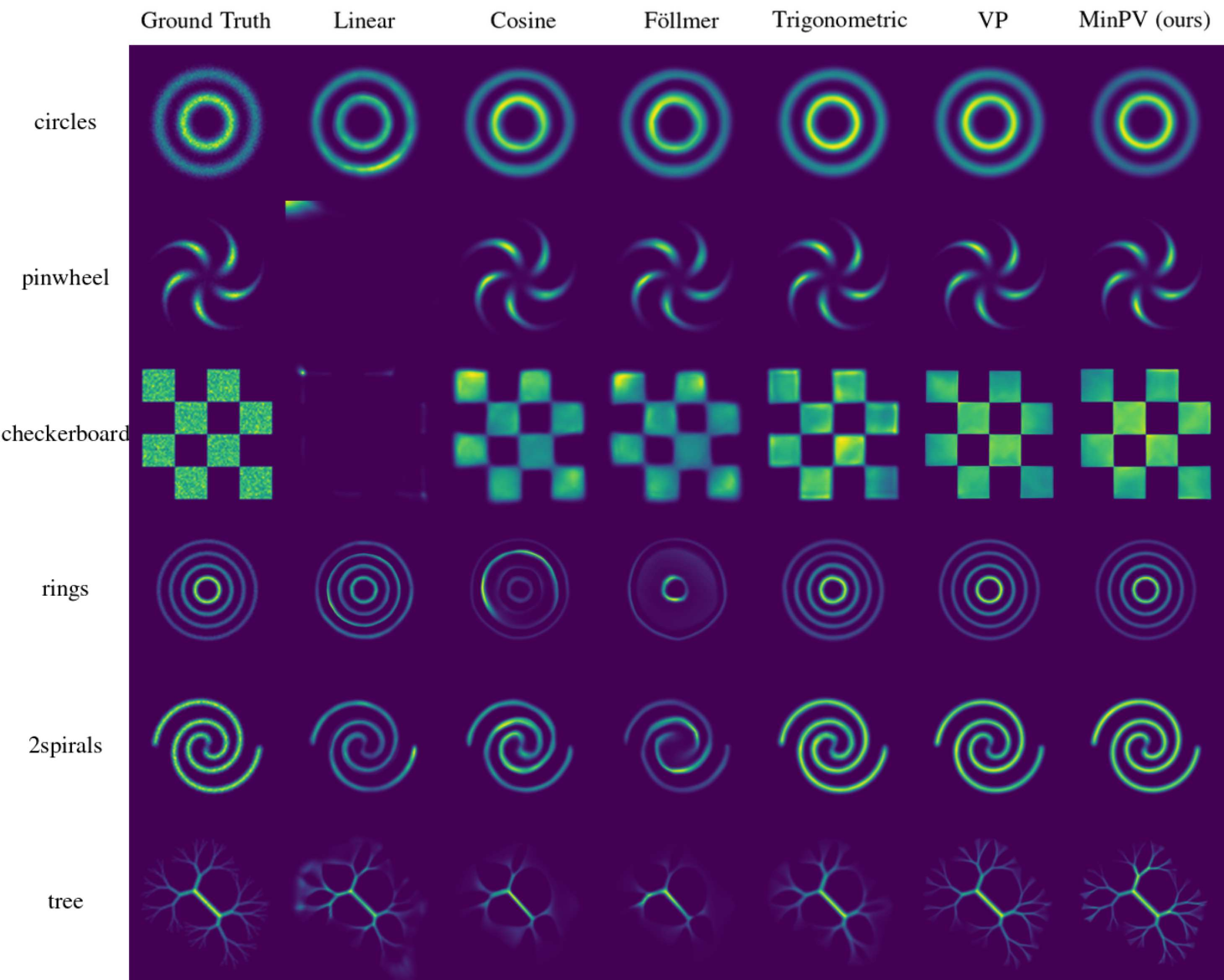} 
	\caption{
		Density estimation on six structured and multi-modal datasets. MVP successfully learns a data-adaptive path tailored to the specific manifold of each dataset.
	}
	\label{fig:toy2d_comparison}
\end{figure}

\paragraph{Real-world Tabular Distributions.}
We evaluate on five tabular datasets widely used in density estimation \citep{grathwohl2018ffjord}, covering domains from particle physics to image patches. These datasets pose challenging, non-Gaussian structures with unknown generative processes and complex correlations, making them suitable for testing model expressiveness. We follow the preprocessing and splits of \citet{grathwohl2018ffjord} for fair comparison. 

\vspace{-2mm}
\subsection{Illustration of Optimized Path Schedules}
\label{app:optimized-path-schedules}
\vspace{-2mm}
We visualize the path schedules learned under the MVP principle across different tasks and datasets.
Here, ``prior" and ``posterior" denote the initialized and optimized path schedules.
\vspace{-2mm}
\begin{figure}[htbp]
	\centering
	\begin{subfigure}[b]{0.24\linewidth}
		\centering
		\includegraphics[width=\linewidth]{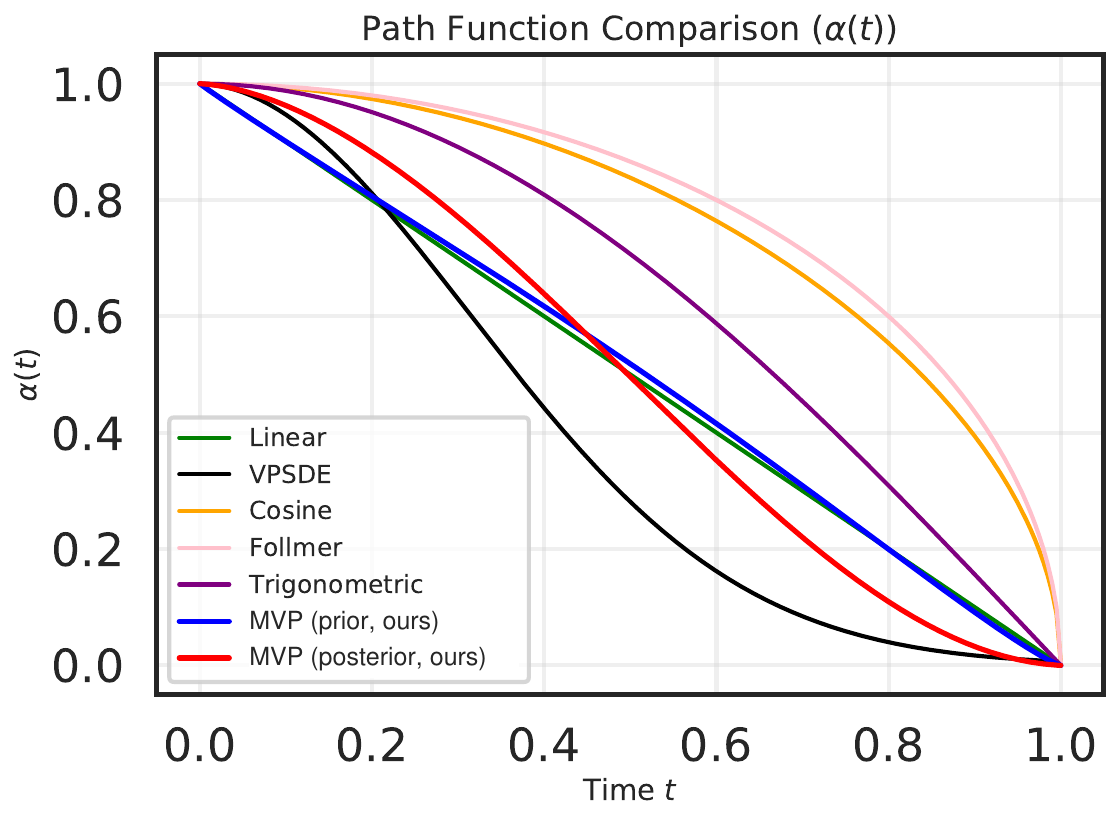}
		\caption{$\alpha$; affine.}
		\label{fig:path_comparison_alpha_mi2d}
	\end{subfigure}
	\hfill
	\begin{subfigure}[b]{0.24\linewidth}
		\centering
		\includegraphics[width=\linewidth]{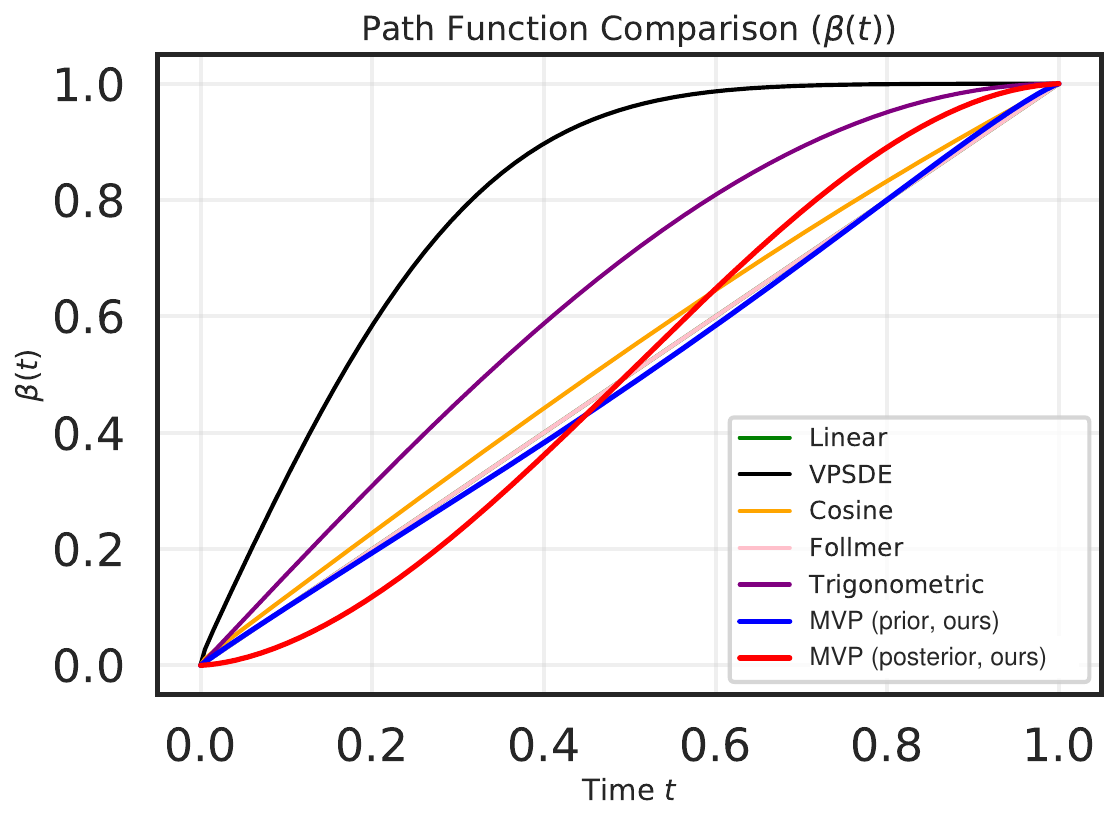}
		\caption{$\beta$; affine.}
		\label{fig:path_comparison_beta_mi2d}
	\end{subfigure}
	\begin{subfigure}[b]{0.24\linewidth}
		\centering
		\includegraphics[width=\linewidth]{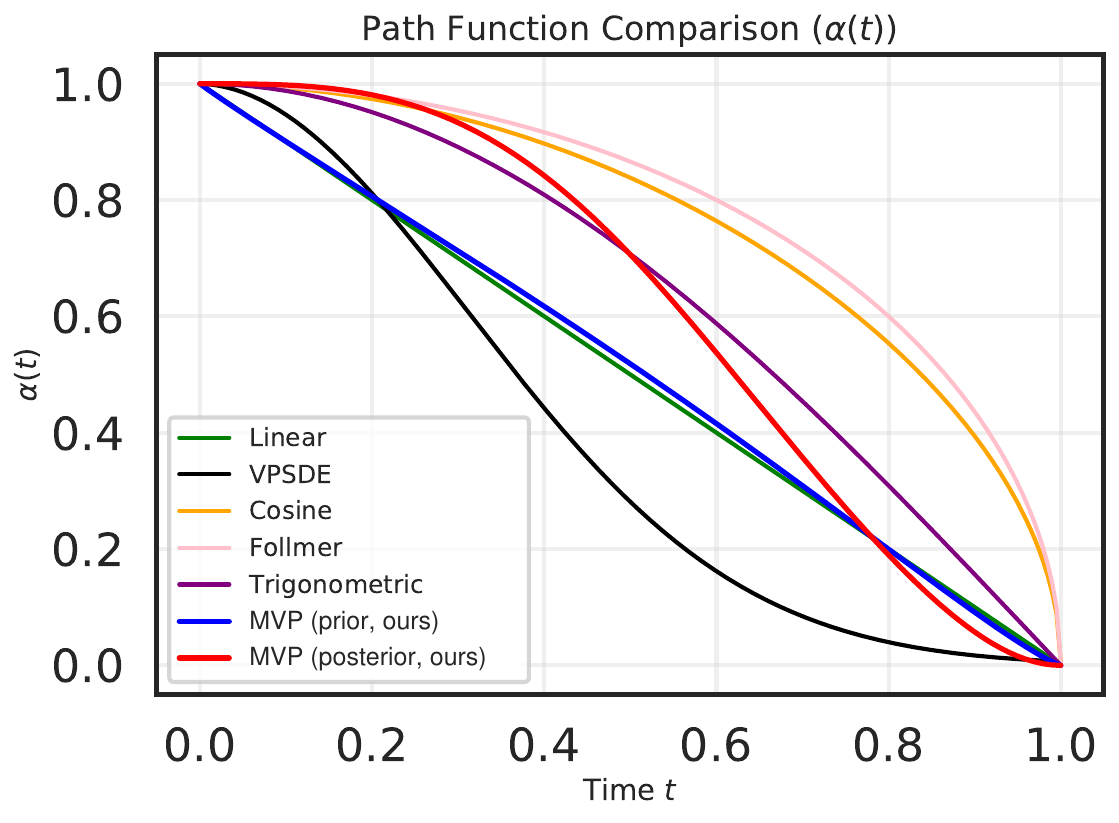}
		\caption{$\alpha$; spherical.}
		\label{fig:path_comparison_alpha_mi2d_2}
	\end{subfigure}
    \hfill
        \begin{subfigure}[b]{0.24\linewidth}
		\centering
		\includegraphics[width=\linewidth]{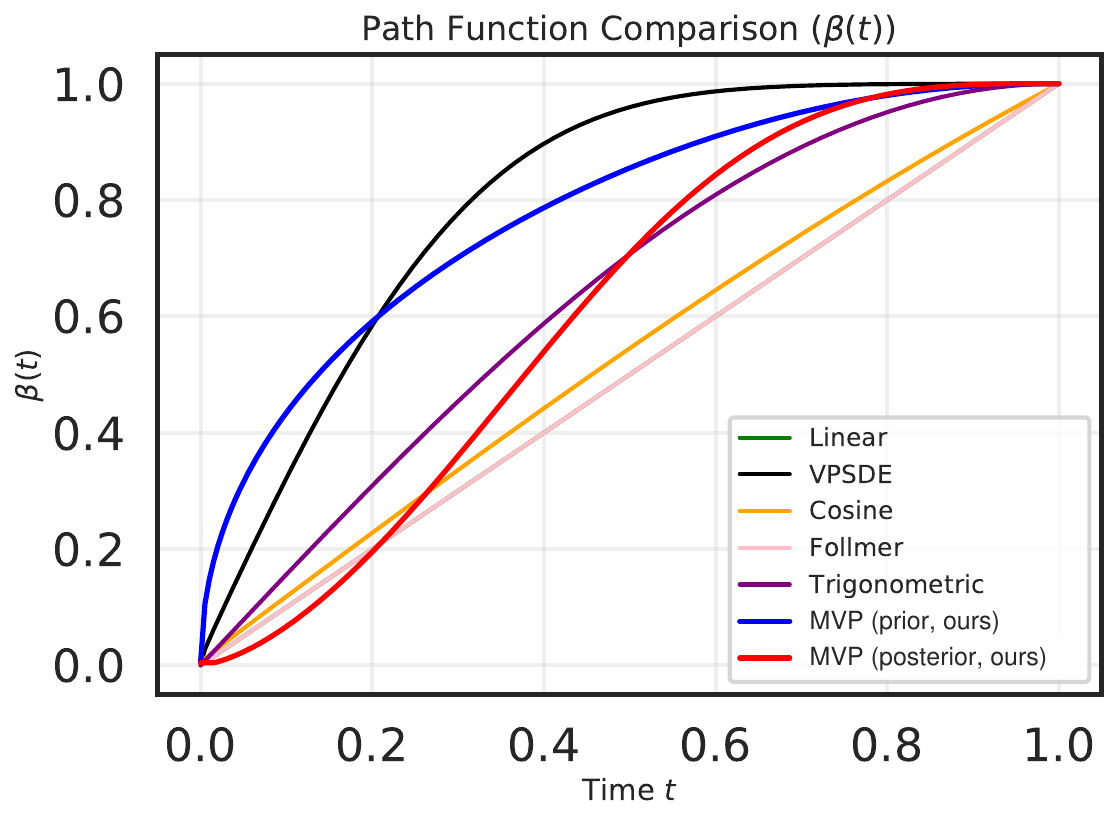}
		\caption{$\beta$; spherical.}
		\label{fig:path_comparison_beta_mi2d_2}
	\end{subfigure}
	\caption{
		Optimized KMM-parameterized path schedule for geometrically pathological distributions. 
	}
	\label{fig:path_comparison_geometrically_pathological}
\end{figure}
\vspace{-2em}
\begin{figure}[htbp]
	\centering
	\begin{subfigure}[b]{0.24\linewidth}
		\centering
		\includegraphics[width=\linewidth]{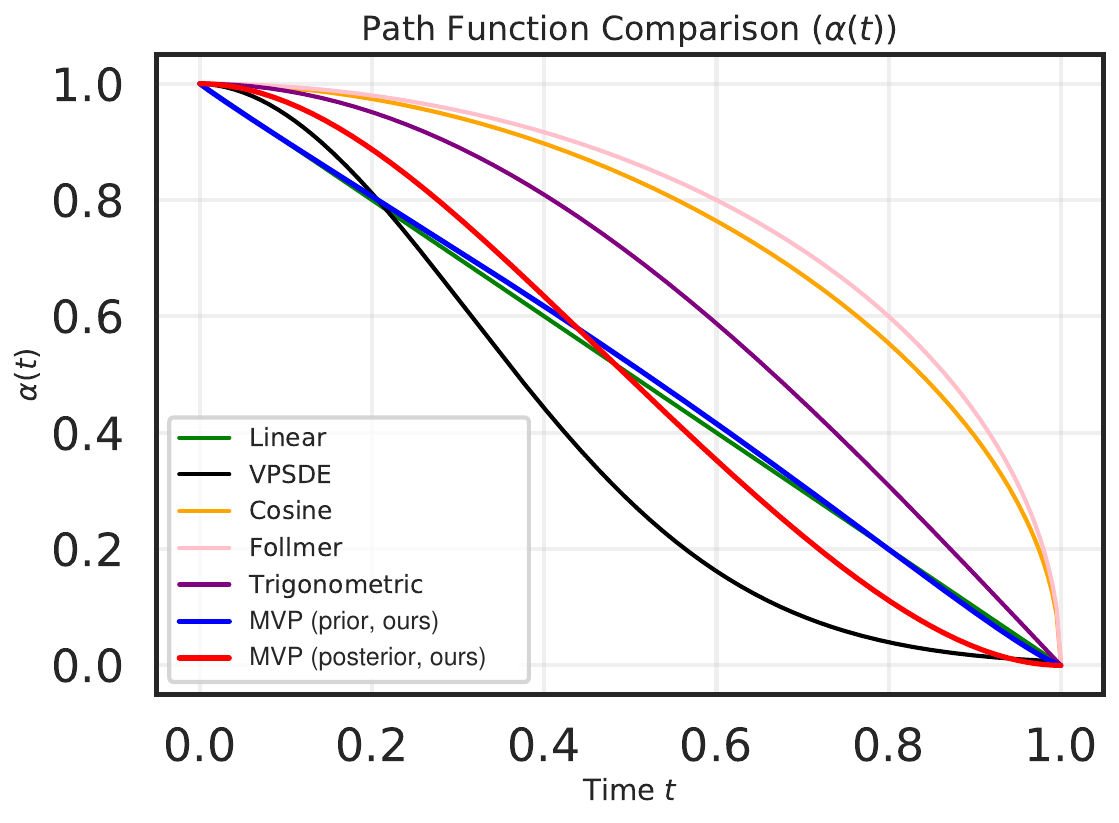}
		\caption{$\alpha$; affine.}
		\label{fig:path_comparison_alpha_high}
	\end{subfigure}
	\hfill
	\begin{subfigure}[b]{0.24\linewidth}
		\centering
		\includegraphics[width=\linewidth]{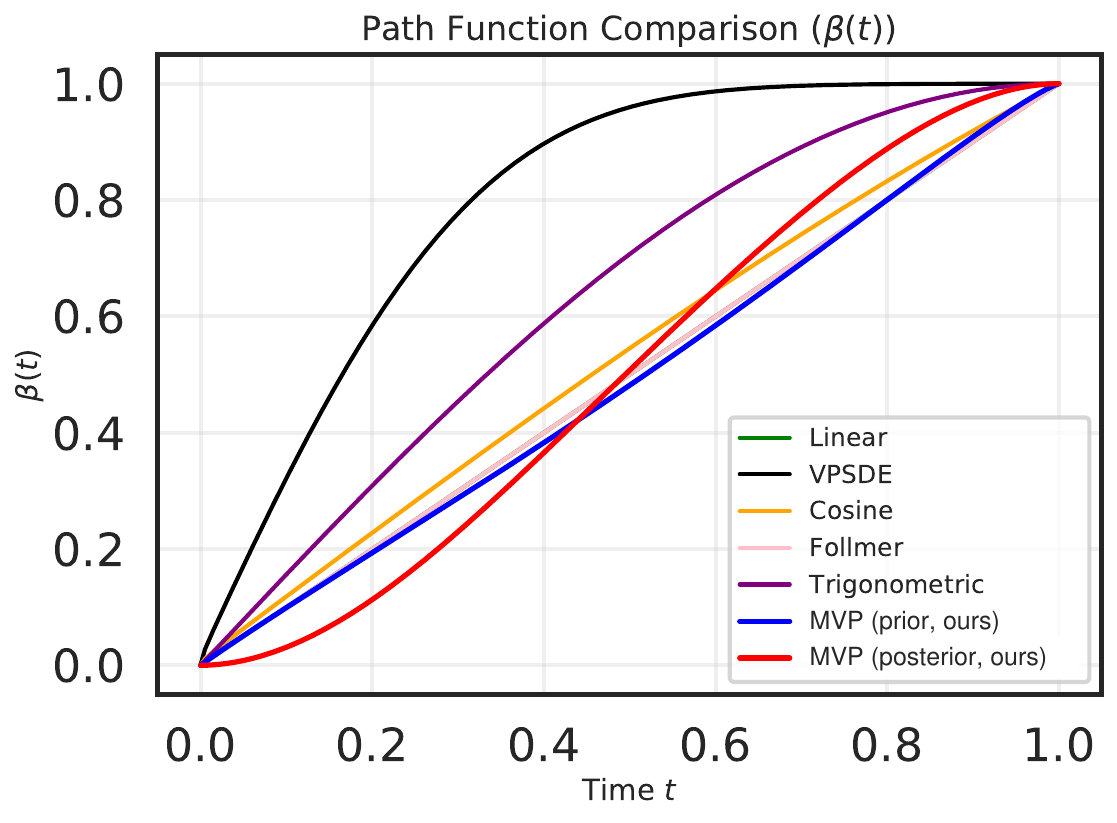}
		\caption{$\beta$; affine.}
		\label{fig:path_comparison_beta_high}
	\end{subfigure}
	\begin{subfigure}[b]{0.24\linewidth}
		\centering
		\includegraphics[width=\linewidth]{figures/path_comparison_alpha_high2}
		\caption{$\alpha$; spherical.}
		\label{fig:path_comparison_alpha_high2}
	\end{subfigure}
    \hfill
        \begin{subfigure}[b]{0.24\linewidth}
		\centering
		\includegraphics[width=\linewidth]{figures/path_comparison_beta_high2}
		\caption{$\beta$; spherical.}
		\label{fig:path_comparison_beta_high2}
	\end{subfigure}
	\caption{
		Optimized KMM-parameterized path schedule for high-dimensional distributions. 
	}
	\label{fig:path_comparison_high_discre}
\end{figure}
\vspace{-2em}
\begin{figure}[htbp]
	\centering
	\begin{subfigure}[b]{0.24\linewidth}
		\centering
		\includegraphics[width=\linewidth]{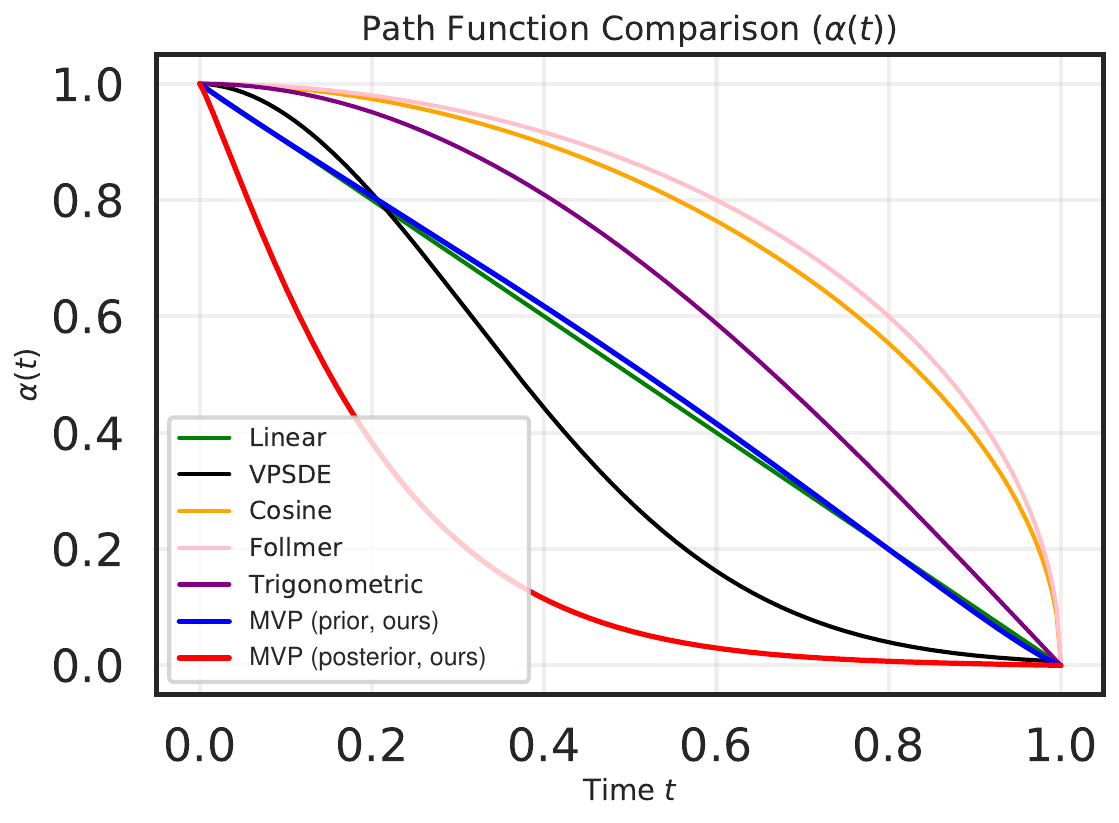}
		\caption{$\alpha$; affine.}
		\label{fig:path_comparison_alpha_2dtoy}
	\end{subfigure}
	\hfill
	\begin{subfigure}[b]{0.24\linewidth}
		\centering
		\includegraphics[width=\linewidth]{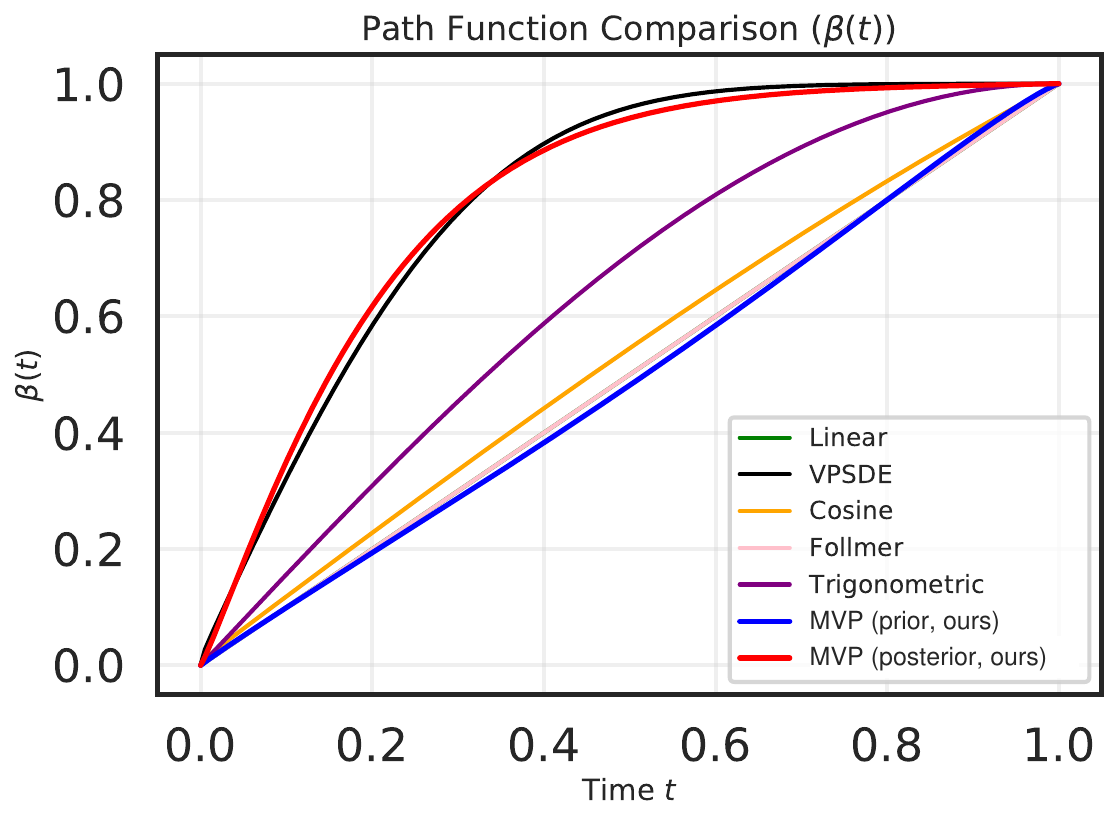}
		\caption{$\beta$; affine.}
		\label{fig:path_comparison_beta_2dtoy}
	\end{subfigure}
	\begin{subfigure}[b]{0.24\linewidth}
		\centering
		\includegraphics[width=\linewidth]{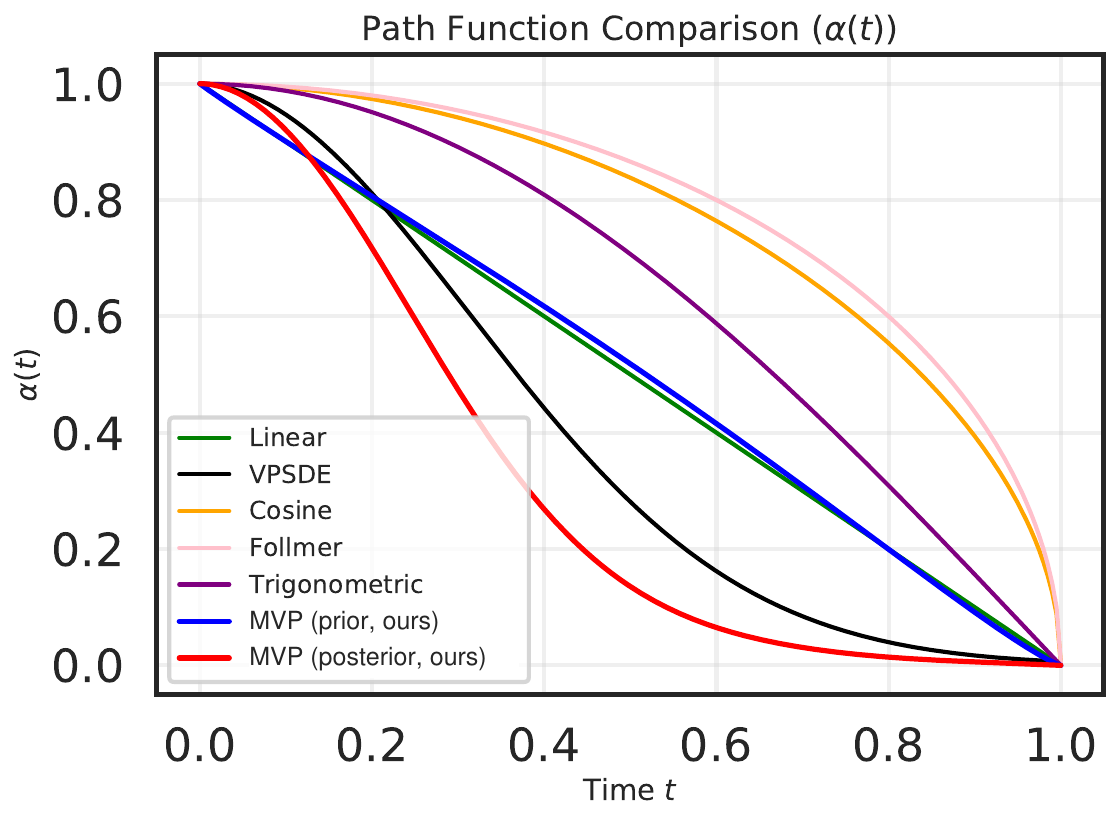}
		\caption{$\alpha$; spherical.}
		\label{fig:path_comparison_alpha_2dtoy2}
	\end{subfigure}
    \hfill
        \begin{subfigure}[b]{0.24\linewidth}
		\centering
		\includegraphics[width=\linewidth]{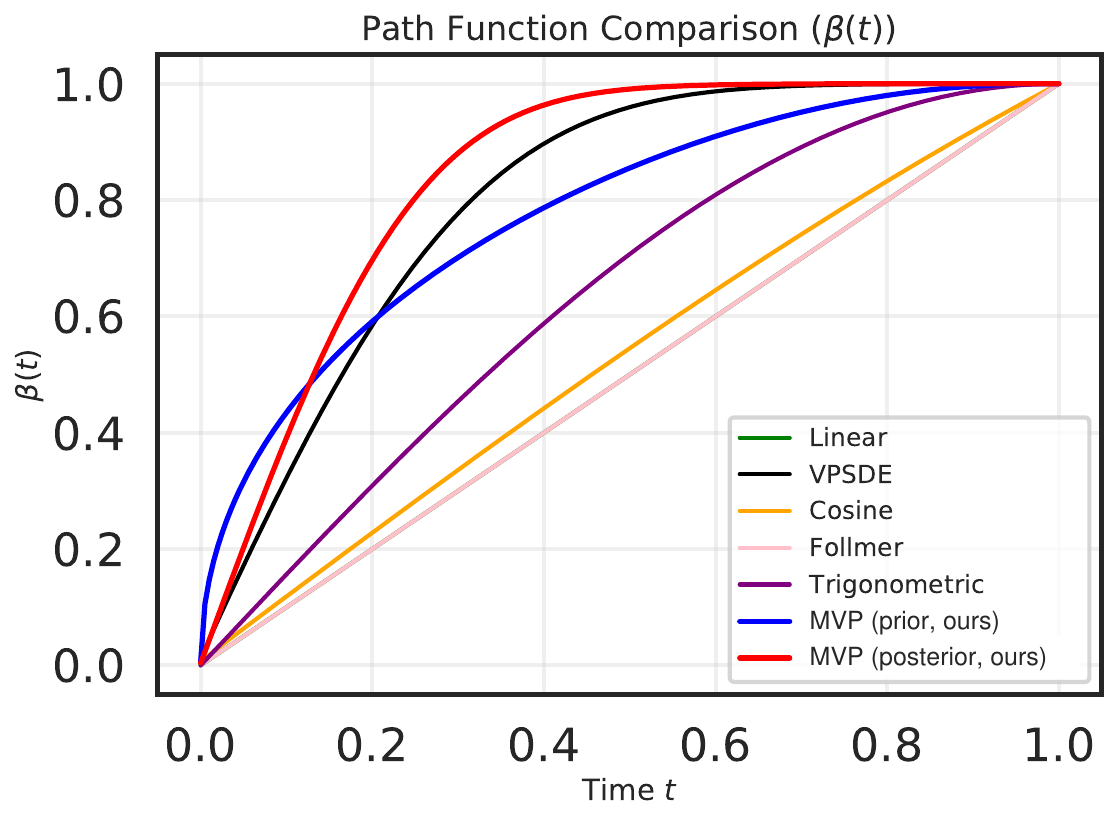}
		\caption{$\beta$; spherical.}
		\label{fig:path_comparison_beta_2dtoy2}
	\end{subfigure}
	\caption{
		Optimized KMM-parameterized path schedule for structured and multi-modal datasets. 
	}
	\label{fig:path_comparison_2dtoy}
\end{figure}
\vspace{-2mm}
\begin{figure}[H]
	\centering
	\begin{subfigure}[b]{0.24\linewidth}
		\centering
		\includegraphics[width=\linewidth]{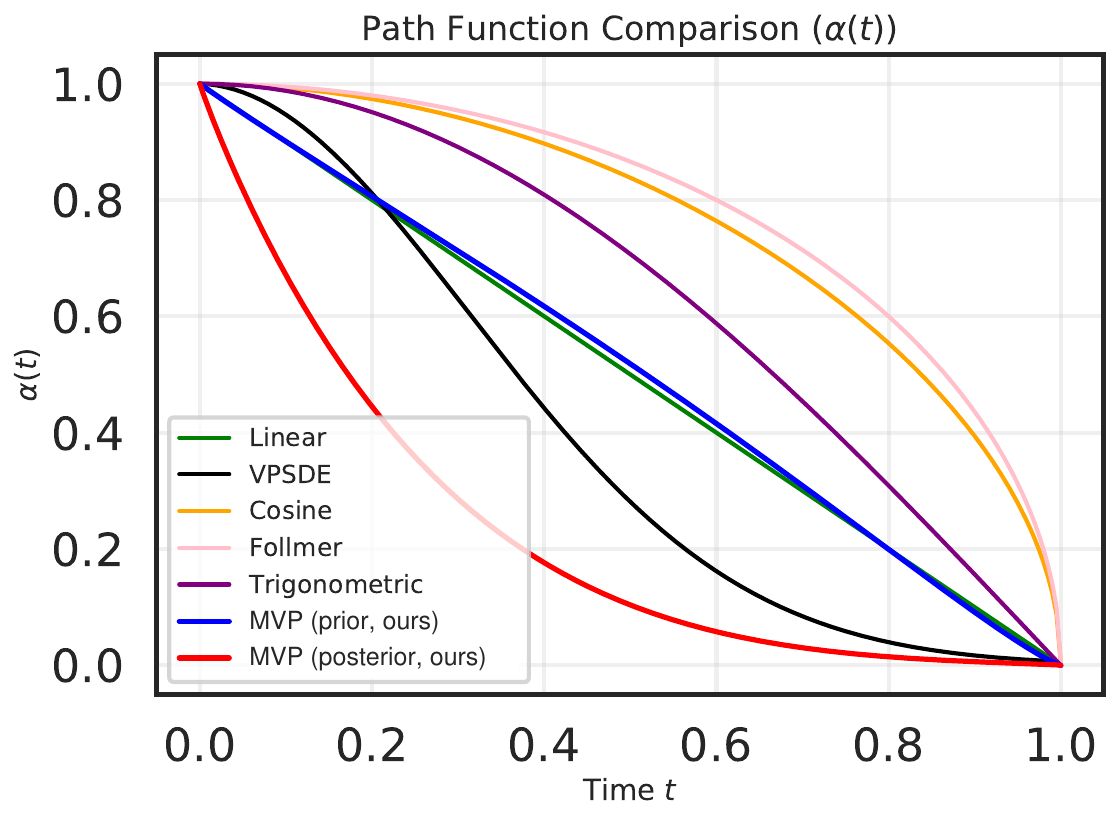}
		\caption{$\alpha$; affine.}
		\label{fig:path_comparison_alpha_tabular}
	\end{subfigure}
	\hfill
	\begin{subfigure}[b]{0.24\linewidth}
		\centering
		\includegraphics[width=\linewidth]{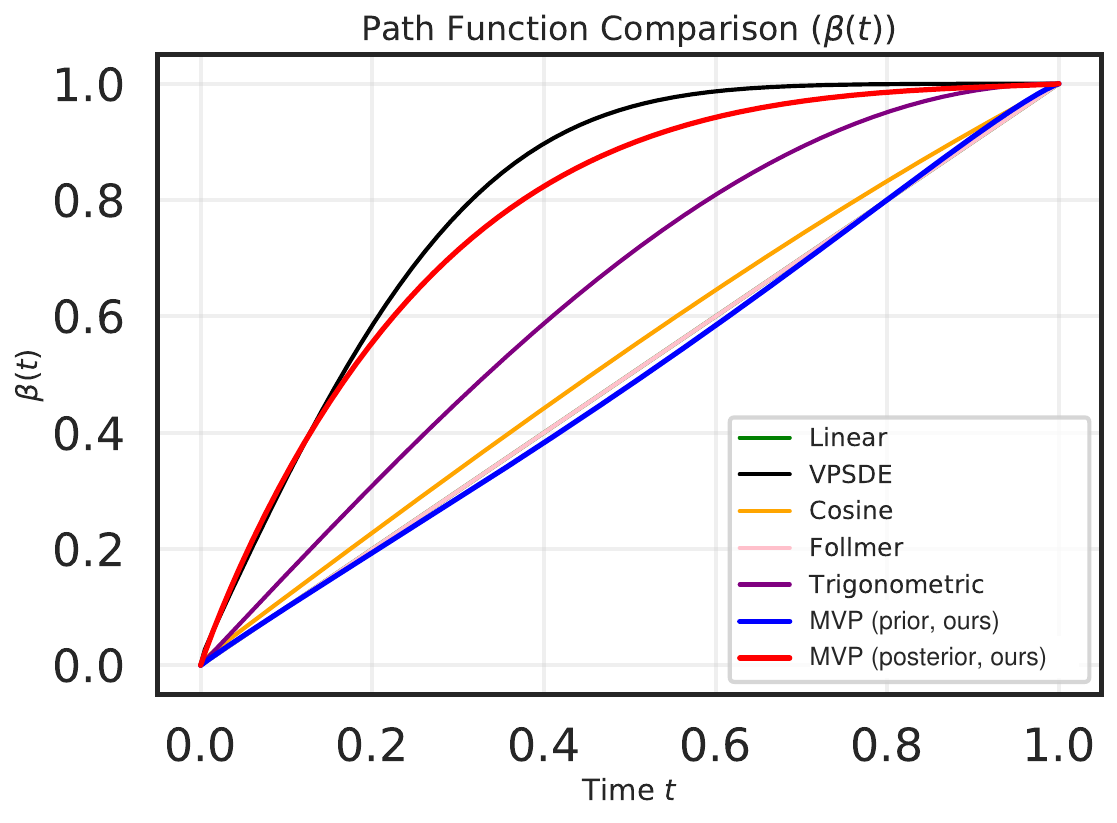}
		\caption{$\beta$; affine.}
		\label{fig:path_comparison_beta_tabular}
	\end{subfigure}
	\begin{subfigure}[b]{0.24\linewidth}
		\centering
		\includegraphics[width=\linewidth]{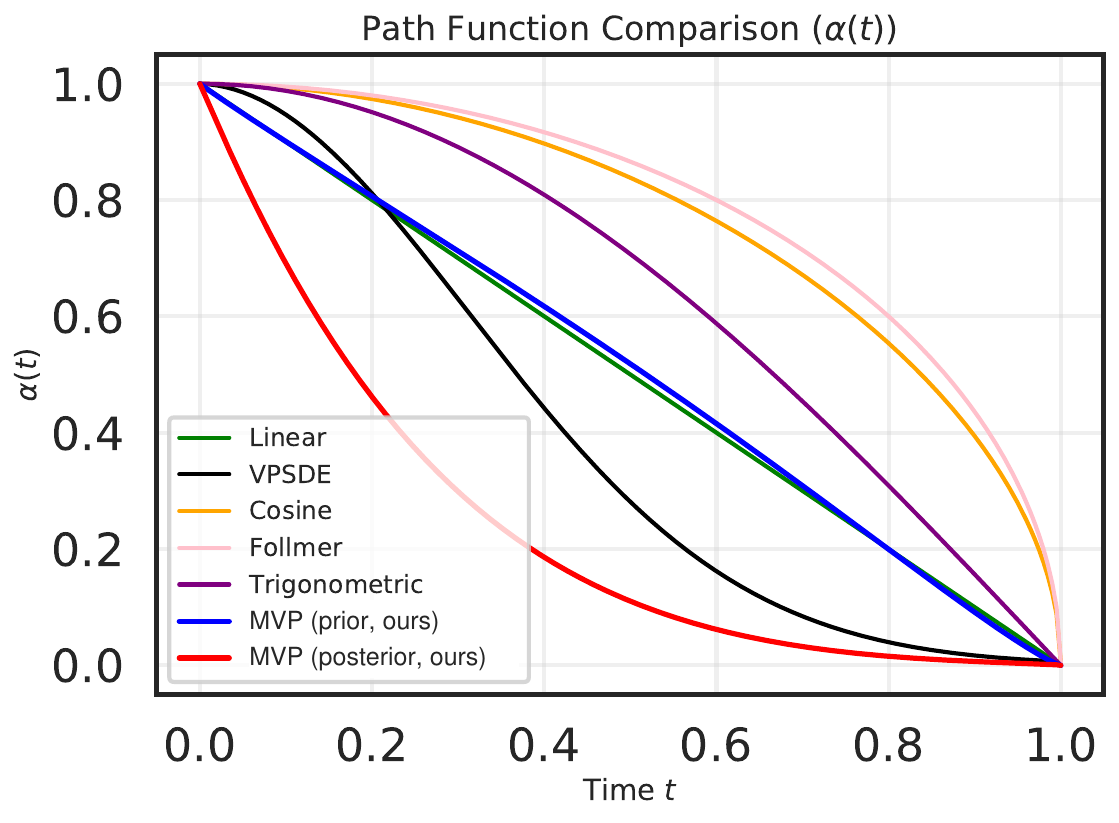}
		\caption{$\alpha$; spherical.}
		\label{fig:path_comparison_alpha_tabular2}
	\end{subfigure}
    \hfill
        \begin{subfigure}[b]{0.24\linewidth}
		\centering
		\includegraphics[width=\linewidth]{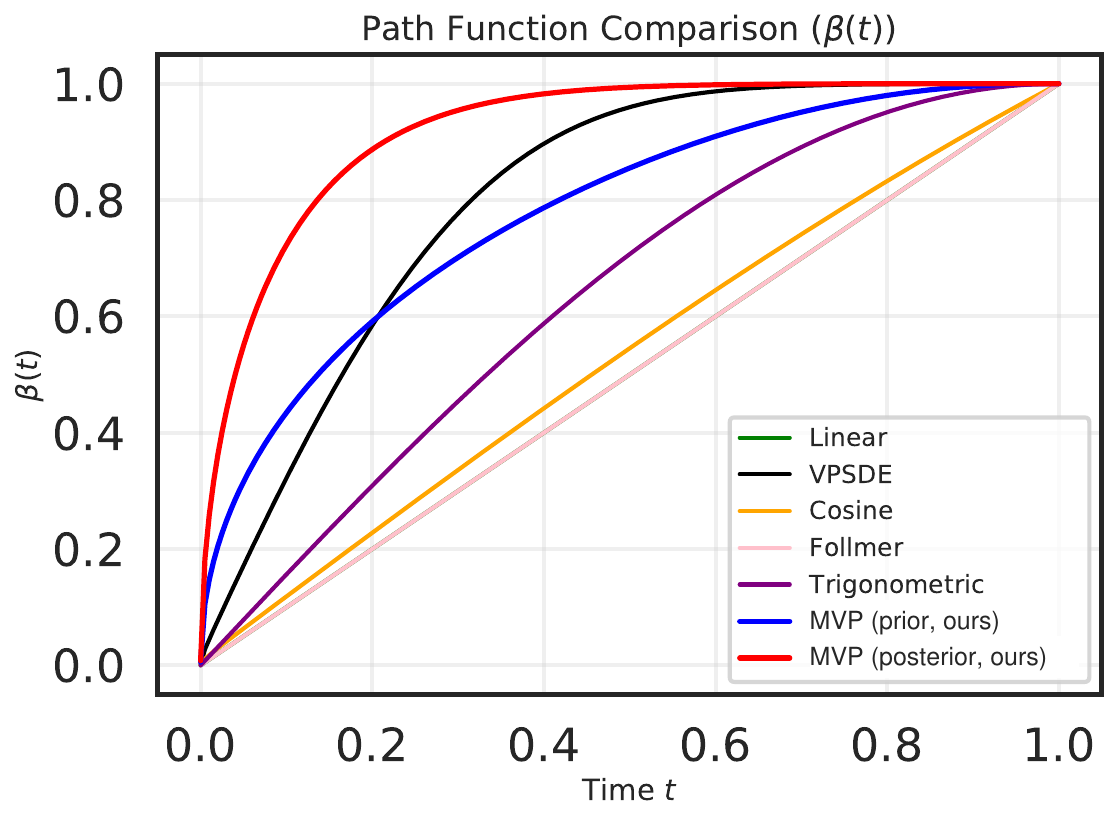}
		\caption{$\beta$; spherical.}
		\label{fig:path_comparison_beta_tabular2}
	\end{subfigure}
	\caption{
		Optimized KMM-parameterized path schedule for real-world tabular datasets. 
	}
	\label{fig:path_comparison_tabular}
\end{figure}

\end{document}